\def\eqref#1{equation~\ref{#1}}
\def\Eqref#1{Equation~\ref{#1}}
\def\1{\bm{1}}
\DeclareMathAlphabet{\mathsfit}{\encodingdefault}{\sfdefault}{m}{sl}
\SetMathAlphabet{\mathsfit}{bold}{\encodingdefault}{\sfdefault}{bx}{n}
\def\ssp{\mathscr{P}}
\def\ssb{\mathscr{B}}
\def\ssh{\mathscr{H}}
\def\ssw{\mathscr{W}}
\newcommand{\E}{\mathbb{E}}
\newcommand{\R}{\mathbb{R}}
\newcommand{\Var}{\mathrm{Var}}
\newcommand{\sw}{\sigma_{\omega}}
\newcolumntype{M}[1]{>{\centering\arraybackslash}m{#1}}
\definecolor{rt}{RGB}{200, 0, 0}
\definecolor{bt}{RGB}{0, 0, 200}
\definecolor{gt1}{RGB}{0, 100, 0}
\definecolor{ot}{RGB}{255, 128, 0}
\definecolor{gt}{RGB}{0, 128, 128}
\newcommand{\sref}[1]{\S\ref{#1}}
\newcommand{\pp}[1]{\left( #1 \right)}
\newcommand{\expect}[2]{\mathbbm{E}_{#1}\left[ #2 \right]}
\newcommand{\vect}[1]{\textrm{vec}\left[#1\right]}
\newcommand*\samethanks[1][\value{footnote}]{\footnotemark[#1]}
\theoremstyle{definition}
\newtheorem{theorem}{Theorem}[section]
\newtheorem{lemma}[theorem]{Lemma}
\newtheorem{claim}[theorem]{Claim}
\newtheorem{remark}{Remark}[section]
\newcommand{\X}{\mathcal{X}}
\newcommand{\A}{\mathcal{A}}
\newcommand{\C}{\mathcal{C}}
\newcommand{\N}{\mathcal{N}}
\newcommand{\K}{K}
\newcommand{\Kinf}{K_{\infty}}
\newcommand{\Ktop}{\mathcal{K}}
\newcommand{\Ktopinf}{\mathcal{K}_{\infty}}
\newcommand{\D}{\mathcal{D}}
\newcommand{\I}{\mathbbm{i}}
\newcommand{\h}{y}
\newcommand{\nc}{\bar{n}^{L+2}}
\newcommand{\relu}{\operatorname{ReLU}}
\newcommand{\erf}{\operatorname{erf}}
\newcommand{\papertitle}{Bayesian Deep Convolutional Networks with Many Channels are Gaussian Processes}
\title{\vspace{-1cm}\papertitle\vspace{-0.05cm}}
\author{Roman Novak\,$^\dagger$,\,\,\,\,\, Lechao Xiao\,$^\dagger$
\thanks{Google AI Residents  \href{https://g.co/airesidency}{(g.co/airesidency)}. ${}^\dagger,{}^\ddag$ Equal contribution.}\,\,
,\,\,\,\,\, Jaehoon Lee\,$^\ddag$
\samethanks[1]\,\,\,\,\,\,\,
Yasaman Bahri\,$^\ddag$\,\,\,\,\,\,\,
Greg Yang$^\circ$,
\\[0.15cm]\textbf{Jiri Hron$^\diamond$,\,\,\,\,\,Daniel A. Abolafia,\,\,\,\,\, Jeffrey Pennington,\,\,\,\,\, Jascha Sohl-Dickstein}\\[0.2cm]
Google Brain,\, $^\circ$Microsoft Research AI,\, $^\diamond$ Department of Engineering, University of Cambridge \\[0.2cm]
\small{\texttt{\{\href{mailto:romann@google.com}{romann},
\href{mailto:xlc@google.com}{xlc},
\href{mailto:jaehlee@google.com}{jaehlee},
\href{mailto:yasamanb@google.com}{yasamanb},
$^\circ$\href{mailto:gregyang@microsoft.com}{gregyang@microsoft.com},
}}\\
\small{\texttt{
$^\diamond$\href{mailto:jh2084@cam.ac.uk}{jh2084@cam.ac.uk},
\href{mailto:danabo@google.com}{danabo}, \href{mailto:jpennin@google.com}{jpennin}, \href{mailto:jaschasd@google.com}{jaschasd}\}@google.com}}
}
\begin{document}

\maketitle

\begin{abstract}
\vspace{-0.35cm}

    There is a previously identified equivalence between wide fully connected neural networks (FCNs) and Gaussian processes (GPs). This equivalence enables, for instance, test set predictions that would have resulted from a fully Bayesian, infinitely wide trained FCN to be computed without ever instantiating the FCN, but by instead evaluating the corresponding GP. In this work, we derive an analogous equivalence for multi-layer convolutional neural networks (CNNs) both with and without pooling layers, and achieve state of the art results on CIFAR10 for GPs without trainable kernels. We also introduce a Monte Carlo method to estimate the GP corresponding to a given neural network architecture, even in cases where the analytic form has too many terms to be computationally feasible. 

    Surprisingly, in the absence of pooling layers, the GPs corresponding to CNNs with and without weight sharing are identical. As a consequence, translation equivariance, beneficial in finite channel CNNs trained with stochastic gradient descent (SGD), is guaranteed to play no role in the Bayesian treatment of the infinite channel limit -- a qualitative difference between the two regimes that is not present in the FCN case. We confirm experimentally, that while in some scenarios the performance of SGD-trained finite CNNs approaches that of the corresponding GPs as the channel count increases, with careful tuning SGD-trained CNNs can significantly outperform their corresponding GPs, suggesting advantages from SGD training compared to fully Bayesian parameter estimation.
\end{abstract}

\section{Introduction}
\vspace{-0.15cm}

    Neural networks (NNs) demonstrate remarkable performance \citep{he2016deep, oord2016wavenet, silver2017mastering, vaswani2017attention}, but are still only poorly understood from a theoretical perspective \citep{Goodfellow2014QualitativelyCN, Choromanska2014TheLS, Pascanu2014OnTS, Zhang2016UnderstandingDL}. NN performance is often motivated in terms of model architectures, initializations, and training procedures together specifying biases, constraints, or implicit priors over the class of functions learned by a network. This induced structure in learned functions is believed to be well matched to structure inherent in many practical machine learning tasks, and in many real-world datasets. For instance, properties of NNs which are believed to make them well suited to modeling the world include: hierarchy and compositionality \citep{lin2017does, Poggio2017}, Markovian dynamics \citep{tino2004markovian, tivno2007markovian}, and equivariances in time and space for RNNs \citep{werbos1988generalization} and CNNs \citep{fukushima1982neocognitron, rumelhart1985learning} respectively. 
    
    The recent discovery of an equivalence between deep neural networks and GPs \citep{lee2018deep, matthews2018a_iclr} allow us to express an analytic form for the prior over functions encoded by deep NN architectures and initializations. This transforms an implicit prior over functions into an {\em explicit prior}, which can be analytically interrogated and reasoned about.\footnote{While there is broad literature on empirical interpretation of finite CNNs \citep{zeiler2014visualizing, simonyan2013deep, long2014convnets, olah2017feature}, it is commonly only applicable to fully trained networks.}
    
    Previous work studying these Neural Network-equivalent Gaussian Processes (NN-GPs) has established the correspondence only for fully connected networks (FCNs). Additionally, previous work has not used analysis of NN-GPs to gain specific insights into the equivalent NNs.
    
    \begin{figure}
        \centering
        \includegraphics[width=\textwidth]{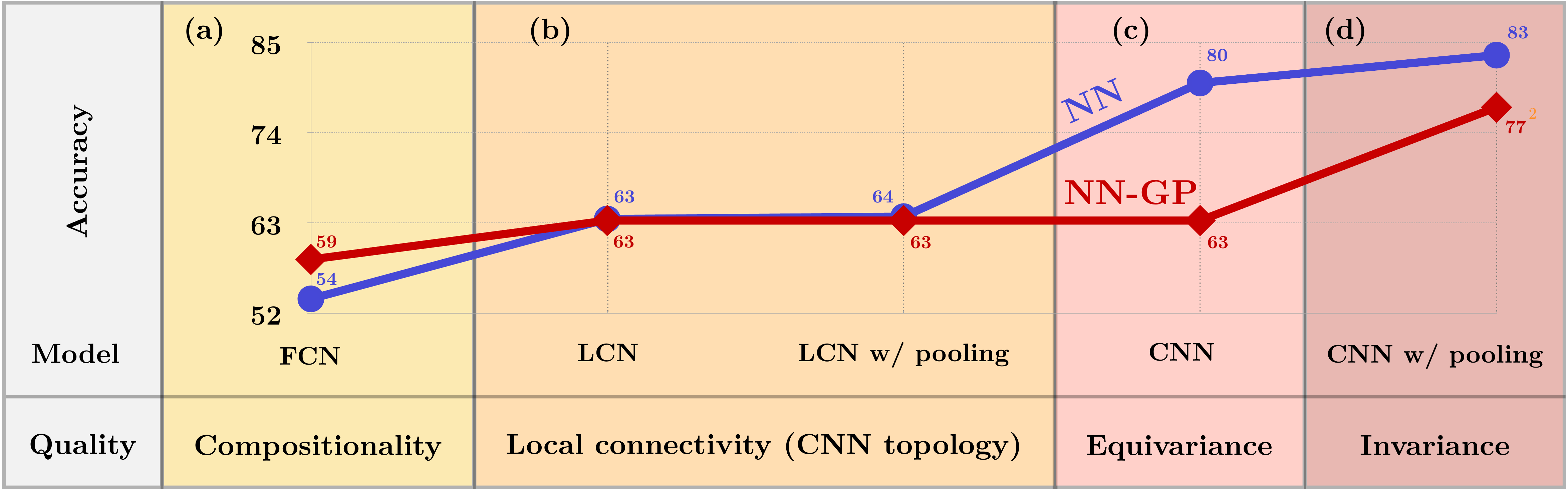}
        
        \caption{\textbf{Disentangling the role of network topology, equivariance, and invariance on test performance, for \textcolor{bt}{SGD-trained (NN)} and \textcolor{rt}{infinitely wide Bayesian networks (NN-GPs)}.} Accuracy (\%) on CIFAR10 of different models of the same depth, nonlinearity, and weight and bias variances. \colorbox[RGB]{250,235,197}{\textbf{(a)}} Fully connected models -- FCN (fully connected network) and FCN-GP (infinitely wide Bayesian FCN) underperform compared to \colorbox[RGB]{251,224,195}{\textbf{(b)}} LCNs (locally connected network, a CNN without weight sharing) and LCN-GP (infinitely wide Bayesian LCN), which have a hierarchical local \textbf{topology} beneficial for image recognition. As derived in \sref{sec LCN}, pooling has no effect on generalization of an LCN model: LCN(-GP) and LCN(-GP) with pooling perform nearly identically. \colorbox[RGB]{249,211,204}{\textbf{(c)}} Local connectivity combined with \textbf{equivariance} (CNN) is enabled by weight sharing and allows for a significant improvement in finite models. However, as derived in \sref{sec LCN}, equivariance is guaranteed to play no role in the infinite limit, and as a result CNN-GP performs identically to LCN-GP and LCN-GP w/ pooling.  \colorbox[RGB]{228,191,185}{\textbf{(d)}} Finally, \textbf{invariance} enabled by weight sharing and pooling allows for the best performance in both finite and infinite (\sref{sec:pooling}) model families.\protect\footnotemark \,Values are reported for 8-layer $\relu$ models. See \sref{app:architectures} for experimental details, Figure \ref{tab:architectures_detail}, and Table \ref{tab:results} for more model comparisons.}
        \label{tab:architectures}
    \end{figure}
    
    \footnotetext{\label{foot:cnn_gp}Due to computational limitations, the performance of CNN-GP with pooling on the full dataset was only verified after the publication using the Neural Tangents library \citep{novak2020neural}. At the time of publication, claims regarding SOTA non-trainable kernels were in reference to CNN-GP \textit{without} pooling in Table \ref{tab:architectures}.}
    
    In the present work, we extend the equivalence between NNs and NN-GPs to deep {\em Convolutional} Neural Networks (CNNs), both with and without pooling. CNNs are a particularly interesting architecture for study, since they are frequently held forth as a success of motivating NN design based on invariances and equivariances of the physical world \citep{cohen2016group} -- specifically, designing a NN to respect translation equivariance \citep{fukushima1982neocognitron, rumelhart1985learning}. As we will see in this work, absent pooling, this quality of equivariance has no impact in the Bayesian treatment of the infinite channel (number of convolutional filters) limit (Figure \ref{tab:architectures}).
    
    The specific novel contributions of the present work are:
    \begin{enumerate}
        \item We show analytically that CNNs with many channels, trained in a fully Bayesian fashion, correspond to an NN-GP (\sref{sec GP equiv}, \sref{sec:collapsing}). We show this for CNNs both with and without pooling, with arbitrary convolutional striding, and with both $\operatorname{same}$ and $\operatorname{valid}$ padding. We prove convergence as the number of channels in hidden layers approach infinity simultaneously (i.e. $\min\left\{n^1,\dots,n^L\right\}\to \infty$, see \sref{sec:uniform-limit} for details), extending the result of \cite{matthews2018b_arxiv} under mild conditions on the nonlinearity derivative. Our results also provide a rigorous proof of the assumption made in \citet{xiao18a} that pre-activations in hidden layers are i.i.d. Gaussian.
        
        \item We show that in the absence of pooling, the NN-GP for a CNN and a Locally Connected Network (LCN) are identical (Figure \ref{tab:architectures}, \sref{sec LCN}). An LCN has the same local connectivity pattern as a CNN, but without weight sharing or translation equivariance.
        
        \item We experimentally compare trained CNNs and LCNs and find that under certain conditions both perform similarly to the respective NN-GP (Figure \ref{fig:sgd_vs_gp}, b, c). Moreover, both architectures tend to perform better with increased channel count, suggesting that similarly to FCNs \citep{Neyshabur2014InSO, novak2018sensitivity} CNNs benefit from overparameterization (Figure \ref{fig:sgd_vs_gp}, a, b), corroborating a similar trend observed in \citet[Figure 2]{canziani2016analysis}. However, we also show that careful tuning of hyperparameters allows finite CNNs trained with SGD to outperform their corresponding NN-GPs by a significant margin. We experimentally disentangle and quantify the contributions stemming from local connectivity, equivariance, and invariance in a convolutional model in one such setting (Figure \ref{tab:architectures}).
        
        \item We introduce a Monte Carlo method to compute NN-GP kernels for situations (such as CNNs with pooling) where evaluating the NN-GP is otherwise computationally infeasible (\sref{sec:mcgp}).
    \end{enumerate}

    We stress that we do not evaluate finite width Bayesian networks nor do we make any claims about their performance relative to the infinite width GP studied here or finite width SGD-trained networks. While this is an interesting subject to pursue (see \citet{matthews2018b_arxiv,neal}), it is outside of the scope of this paper.

    \subsection{Related work}
    
        In early work on neural network priors, \citet{neal} demonstrated that, in a fully connected network with a single hidden layer, certain natural priors over network \emph{parameters} give rise to a Gaussian process prior over \emph{functions} when the number of hidden units is taken to be infinite. Follow-up work extended the conditions under which this correspondence applied \citep{williams1997,le2007continuous,hazan2015}. An exactly analogous correspondence for infinite width, finite depth \emph{deep} fully connected networks was developed recently in \citet{lee2018deep, matthews2018a_iclr}, with \citet{matthews2018b_arxiv} extending the convergence guarantees from $\relu$ to any linearly bounded nonlinearities and monotonic width growth rates. In this work we further relax the conditions to absolutely continuous nonlinearities with exponentially bounded derivative and any width growth rates.

        The line of work examining signal propagation in random deep networks \citep{poole2016exponential,schoenholz2016,yang2017mean,yang2018deep,hanin2018start,chen2018rnn,yang2019mean} is related to the construction of the GPs we consider. They apply a mean field approximation in which the pre-activation signal is replaced with a Gaussian, and the derivation of the covariance function with depth is the same as for the kernel function of a corresponding GP. Recently, \citet{46563, xiao18a} extended this to convolutional architectures without pooling. \citet{xiao18a} also analyzed properties of the convolutional kernel at large depths to construct a \emph{phase diagram} which will be relevant to NN-GP performance, as discussed in \sref{sec:deep_info}.
        
        Compositional kernels coming from wide convolutional and fully connected layers also appeared outside of the GP context. \citet{cho2009} derived closed-form compositional kernels for rectified polynomial activations (including $\relu$). \citet{daniely2016} proved approximation guarantees between a network and its corresponding kernel, and show that empirical kernels will converge as the number of channels increases.

        There is a line of work considering stacking of GPs, such as \emph{deep GP}s~\citep{lawrence2007hierarchical, damianou2013deep}. These no longer correspond to GPs, though they can describe a rich class of probabilistic models beyond GPs. Alternatively, \emph{deep kernel learning}~\citep{wilson2016a, wilson2016b, bradshaw2017adversarial} utilizes GPs with base kernels which take in features produced by a deep neural network (often a CNN), and train the resulting model end-to-end. Finally, \citet{van2017convolutional} incorporates convolutional structure into GP kernels, with follow-up work stacking multiple such GPs \citep{kumar2018deep, blomquist2018deep} to produce a deep convolutional GP (which is no longer a GP). Our work differs from all of these in that our GP corresponds exactly to a fully Bayesian CNN in the infinite channel limit, when all layers are taken to be of infinite size. We remark that while alternative models, such as deep GPs, do include infinite-sized layers in their construction, they do not treat all layers in this way -- for instance, through insertion of bottleneck layers which are kept finite. While it remains to be seen exactly which limit is applicable for understanding realistic CNN architectures in practice, the limit we consider is natural for a large class of CNNs, namely those for which all layers sizes are large and rather comparable in size. Deep GPs, on the other hand, correspond to a potentially richer class of models, but are difficult to analytically characterize and suffer from higher inference cost.
        
        \citet{Borovykh2018} analyzes the convergence of CNN outputs at different spatial locations (or different timepoints for a temporal CNN) to a GP for a single input example. Thus, while they also consider a GP limit (and perform an experiment comparing posterior GP predictions to an SGD-trained CNN), they do not address the dependence of network outputs on multiple input examples, and thus their model is unable to generate predictions on a test set consisting of new input examples.
        
        In concurrent work, \citet{garriga2018convnets} derive an NN-GP kernel equivalent to one of the kernels considered in our work. In addition to explicitly specifying kernels corresponding to pooling and vectorizing, we also compare the NN-GP performance to finite width SGD-trained CNNs and analyze the differences between the two models.

\section{Many-channel Bayesian CNNs are Gaussian processes}\label{sec GP equiv}

    \subsection{Preliminaries}\label{sec:preliminaries}
    
        \textbf{General setup.} For simplicity of presentation we consider 1D convolutional networks with circularly-padded activations (identically to \citet{xiao18a}). Unless specified otherwise, no pooling anywhere in the network is used. If a model (NN or GP) is mentioned explicitly as ``with pooling'', it always corresponds to a single global average pooling layer at the top. Our analysis is straightforward to extend to higher dimensions, using zero ($\operatorname{same}$) or no ($\operatorname{valid}$) padding, strided convolutions, and pooling in intermediary layers (\sref{app:strides_pooling}). We consider a series of $L+1$ convolutional layers, $l = 0, \dots, L$.
    
        \textbf{Random weights and biases.} The parameters of the network are the convolutional filters and biases, $\omega^{l}_{ij, \beta}$ and $b^{l}_{i}$, respectively, with outgoing (incoming) channel index $i$ ($j$) and filter relative spatial location $\beta \in  [\pm k] \equiv  \{- k,\dots, 0, \dots,  k\}$.\footnote{We will use Roman letters to index channels and Greek letters for spatial location. We use letters $i, j, i', j'$, etc to denote channel indices, $\alpha, \alpha'$, etc to denote spatial indices and $\beta, \beta'$, etc for filter indices.} Assume a Gaussian prior on both the filter weights and biases, 
        \begin{align}
            \omega^{l}_{ij,\beta} &\sim \N\left(0, v_{\beta} \frac{\sigma^2_\omega}{n^{l}}\right), & 
            b^{l}_i &\sim \N\left(0, \sigma^2_b \right).
        \end{align}
        The weight and bias variances are $\sigma^2_\omega, \sigma^2_b$, respectively. $n^{l}$ is the number of channels (filters) in layer $l$, $2k+1$ is the filter size, and $v_{\beta}$ is the fraction of the receptive field variance at location $\beta$ (with $\sum_{\beta} v_{\beta} = 1$). In experiments we utilize uniform $v_{\beta}=1/(2k+1)$, but nonuniform $v_{\beta} \neq 1/(2k+1)$ should enable kernel properties that are better suited for ultra-deep networks, as in \citet{xiao18a}.
        
        \textbf{Inputs, pre-activations, and activations.} Let $\X$ denote a set of input images (training set, validation set, or both). The network has activations $\h^{l}(x)$ and pre-activations $z^{l}(x)$ for each input image $x\in\X \subset \mathbb{R}^{n^0 d}$, with input channel count $n^0 \in \mathbb{N}$, number of pixels $d \in \mathbb{N}$, where
        \begin{align}
           \h^{l}_{i,\alpha}(x) &\equiv
            \left\{\begin{array}{ccc}
            	x_{i, \alpha} &  & l=0 \\
            	\phi\left( z^{l-1}_{i,\alpha}(x) \right) &  & l > 0
            \end{array}\right.,
            &
            z^{l}_{i,\alpha}(x) &\equiv \sum_{j=1}^{n^{l}} \sum_{\beta = -k}^{k} \omega^l_{ij, \beta} \h^l_{j, \alpha + \beta}(x) + b^l_{i}.
            \label{eq weighted sum}
        \end{align}
        We emphasize the dependence of $\h^{l}_{i,\alpha}(x)$ and $z^{l}_{i,\alpha}(x)$ on the input $x$. $\phi:\R\to\R$ is a nonlinearity (with elementwise application to  higher-dimensional inputs). Similarly to \citet{xiao18a}, $\h^{l}$ is assumed to be circularly-padded and the spatial size $d$ hence remains constant throughout the network in the main text (a condition relaxed in \sref{app:strides_pooling} and Remark \ref{thm:padding}). See Figures \ref{fig:cnn_diagram} and \ref{fig:diagrams} for a visual depiction of our notation.
        
        \begin{figure}
            \centering
            \includegraphics[width=\textwidth]{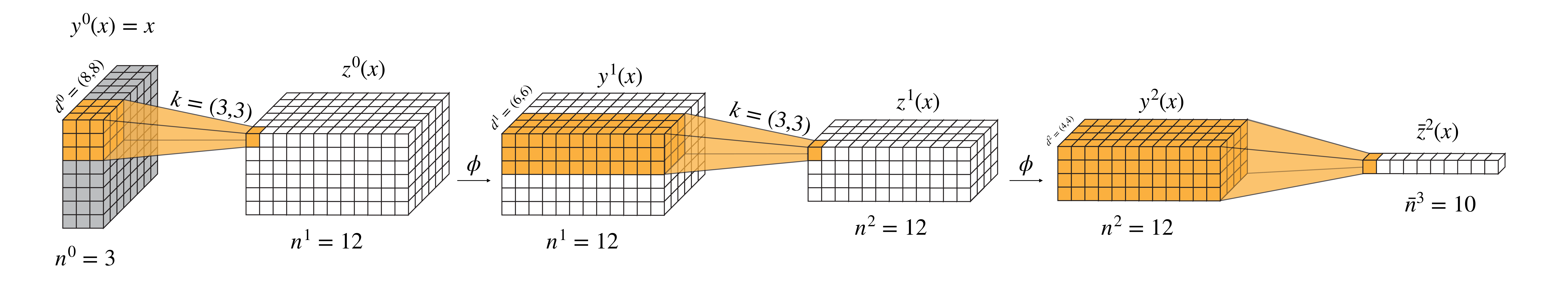}
            \caption{\textbf{A sample 2D CNN classifier annotated according to notation in \sref{sec:preliminaries}, \sref{sec:collapsing}.} The network transforms $n^0 \times d^0 = 3\times 8 \times 8$-dimensional inputs $\h^0(x) = x \in \X$ into $\bar n^3=10$-dimensional logits $\bar z^2 (x)$. Model has two convolutional layers with $2 k + 1=(3, 3)$-shaped filters, nonlinearity $\phi$, and a fully connected layer at the top ($\h^2(x) \to \bar z^2(x)$, \sref{sec vectorize}). Hidden (pre-)activations have $n^1 = n^2 = 12$ filters. As $\min\left\{n^1,n^2\right\}\to\infty$, the prior of this CNN will approach that of a GP indexed by inputs $x$ and target class indices from $1$ to $\bar n^3 = 10$. The covariance of such GP can be computed as $\left[\frac{\sigma_{\omega}^2}{6\times 6}\sum_{\alpha=(1, 1)}^{d^2}\left[\left(\C\circ\A\right)^2\left(\K^0\right)\right]_{\alpha,\alpha}+\sigma_b^2\right] \otimes I_{\bar n^3}$, where the sum is over the $\left\{1,\dots,4\right\}^2$ hypercube (see \sref{sec:correspnodance_cnns_to_gps}, \sref{sec vectorize}, \sref{app:strides_pooling}). Presented is a CNN with stride $1$ and no ($\operatorname{valid}$) padding, i.e. the spatial shape of the input shrinks as it propagates through it $\left(d^0 = (8, 8) \to d^1 = (6, 6) \to d^2 = (4, 4)\right)$. Note that for notational simplicity 1D CNN and circular padding with $d^0 = d^1 = d^2 = d$ is assumed in the text, yet our formalism easily extends to the model displayed (\sref{app:strides_pooling}). Further, while displayed (pre-)activations have 3D shapes, in the text we treat them as 1D vectors (\sref{sec:preliminaries}).}
            \label{fig:cnn_diagram}
        \end{figure}
        
        \textbf{Activation covariance.} A recurring quantity in this work will be the empirical uncentered covariance matrix $\K^{l}$ of the activations $\h^l$, defined as 
        \begin{align}
            \left[\K^{l}\right]_{\alpha, \alpha'}\left(x, x'\right) &\equiv \frac 1 {n^l}\sum_{i=1}^{n^l} \h^{l}_{i,\alpha}(x) \h^{l}_{i, \alpha'}(x').
            \label{eq: K def}
        \end{align}
        $\K^l$ is a random variable indexed by two inputs $x, x'$ and two spatial locations $\alpha, \alpha'$ (the dependence on layer widths $n^1,\dots,n^l$, as well as weights and biases, is implied and by default not stated explicitly). $K^0$, the empirical uncentered covariance of inputs, is deterministic. 
        
        \textbf{Shapes and indexing.} Whenever an index is omitted, the variable is assumed to contain all possible entries along the respective dimension. For example, $\h^0$ is a vector of size $\left|\X\right| n^0 d$, $\smash{\left[K^l\right]_{\alpha, \alpha'}}$ is a matrix of shape $\left|\X\right| \times \left|\X\right|$, and $z^l_j$ is a vector of size $\left|\X\right| d$.
    
        Our work concerns proving that the top-layer pre-activations $z^L$ converge in distribution to an $\left|\X\right| n^{L+1} d$-variate normal random vector with a particular covariance matrix of shape $\left(\left|\X\right| n^{L+1} d\right) \times \left(\left|\X\right| n^{L+1} d\right)$ as $\min\left\{n^1,\dots,n^L\right\}\to\infty$. We emphasize that only the channels in {\it hidden} layers are taken to infinity, and $n^{L+1}$, the number of channels in the top-layer pre-activations $z^L$, remains fixed. For convergence proofs, we always consider $z^l$, $\h^l$, as well as any of their indexed subsets like $z^l_j$, $\h^l_{i, \alpha}$ to be 1D vector random variables, while $K^l$, as well as any of its indexed subsets (when applicable, e.g. $\left[K^l\right]_{\alpha, \alpha'}$, $\left[K^l\right]\left(x, x'\right)$) to be 2D matrix random variables.

    \subsection{Correspondence between Gaussian processes and Bayesian deep CNNs with infinitely many channels}\label{sec:correspnodance_cnns_to_gps}
    
        We next consider the prior over outputs $z^L$ computed by a CNN in the limit of infinitely many channels in the hidden (excluding input and output) layers, $\min\left\{n^{1},\dots,n^{L}\right\} \to \infty$, and derive its equivalence to a GP with a compositional kernel. This section outlines an~argument showing that $z^L$ is normally distributed conditioned on previous layer activation covariance $K^L$, which itself becomes deterministic in the infinite limit. This allows to conclude convergence in distribution of the outputs to a Gaussian with the respective deterministic covariance limit. This section omits many technical details elaborated in \sref{sec:uniform-limit}.

        \subsubsection{A single convolutional layer is a GP conditioned on the uncentered covariance matrix of the previous layer's activations}\label{sec:step1}
            
            As can be seen in \Eqref{eq weighted sum}, the pre-activations $z^l$ are a linear transformation of the multivariate Gaussian $\left\{\omega^l, b^l\right\}$, specified by the previous layer's activations $\h^{l}$. A linear transformation of a multivariate Gaussian is itself a Gaussian with a covariance matrix that can be derived straightforwardly. Specifically, 
            \begin{align}
                \left(z^l | \h^{l} \right) &\sim
                \N\left(0, \A\left(\K^{l}\right) \otimes I_{n^{l+1}} \right), 
                \label{eq pzx}
            \end{align}
            where $I_{n^{l+1}}$ is an $n^{l+1} \times n^{l+1}$ identity matrix, and  $\A\left(\K^{l}\right)$ is the covariance of the pre-activations $z^l_i$ and is derived in \citet{xiao18a}. Precisely, $\A: \textrm{PSD}_{\left|\X\right|d} \to \textrm{PSD}_{\left|\X\right|d}$ is an affine transformation (a cross-correlation operator followed by a shifting operator) on the space of positive semi-definite $\left|\X\right|d \times \left|\X\right|d$ matrices  defined as follows:
            \begin{align}
                \left[\A\left(\K\right)\right]_{\alpha, \alpha'}\left(x, x'\right)
                &\equiv \sigma_b^2 + \sigma^2_\omega\sum_{\beta} v_{\beta}\left[K\right]_{\alpha +\beta, \alpha'+\beta}\left(x, x'\right).
                \label{eq:A_def}
            \end{align}
            $\A$ preserves positive semi-definiteness due to \Eqref{eq pzx}. Notice that the covariance matrix in \Eqref{eq pzx} is block diagonal due to the fact that separate channels $\left\{z^l_i\right\}_{i=1}^{n^{l+1}}$ are i.i.d.\ conditioned on $\h^l$, due to i.i.d. weights and biases $\left\{\omega^l_i, b^l_i\right\}_{i=1}^{n^{l+1}}$.

            We further remark that per \Eqref{eq pzx} the normal distribution of $\left(z^l | \h^{l} \right)$ only depends on $\K^l$, hence the random variable $\left(z^l | \K^l\right)$ has the same distribution by the law of total expectation:
            \begin{align}
                \left(z^l | \K^{l} \right) 
                &\sim
                \N\left(0, \A\left(\K^{l}\right) \otimes I_{n^{l+1}} \right).
                \label{eq:z_cond_k}
            \end{align}

        \subsubsection{Activation covariance matrix becomes deterministic with increasing channel count}\label{sec:step2}
        
            It follows from \Eqref{eq:z_cond_k} that the summands in \Eqref{eq: K def} are i.i.d.\ conditioned on \emph{fixed} $K^{l-1}$. Subject to weak restrictions on the nonlinearity $\phi$, we can apply the weak law of large numbers and conclude that the covariance matrix $K^l$ becomes deterministic in the infinite channel limit in layer $l$ (note that pre-activations $z^l$ remain stochastic). Precisely,\setcounter{footnote}{1}
            \begin{align}
                \forall \K^{l-1}\in\textrm{PSD}_{\left|\X\right|d}\quad \pp{\K^l | \K^{l-1}} &\xrightarrow[n^l\to\infty]{P\footnotemark}  \left(\C\circ\A\right)\left(\K^{l-1}\right)\quad \text{(in probability)},
                \label{eq:K conv in p}
            \end{align}\footnotetext{The weak law of large numbers allows convergence in probability of individual entries of $\K^l$. However, due to the finite dimensionality of $\K^l$, joint convergence in probability follows.} where $\C$ is defined for any $\left|\X\right|d\times \left|\X\right|d$ PSD matrix $K$ as
            \begin{align}
                \left[\C\left(\K\right)\right]_{\alpha, \alpha'}\left(x, x'\right) &\equiv \expect{
                        u\sim\N\pp{0, K}
                    }{\phi\pp{
                    u_{\alpha}(x)
                }\phi\pp{
                    u_{\alpha'}(x')
                }}. \label{eq:C_def}
            \end{align}
            The decoupling of the kernel ``propagation'' into $\C$ and $\A$ is highly convenient since $\A$ is a simple affine transformation of the kernel (see \Eqref{eq:A_def}), and $\C$ is a well-studied map in literature (see \sref{app:deep_info}), and for nonlinearities such as $\relu$ \citep{nair2010rectified} and the error function ($\erf$) $\C$ can be computed in closed form as derived in \citet{cho2009} and \citet{williams1997} respectively. We refer the reader to \citet[Lemma A.1]{xiao18a} for complete derivation of the limiting value in \Eqref{eq:K conv in p}.
    
            A less obvious result is that, under slightly stronger assumptions on $\phi$, the top-layer activation covariance $K^L$ becomes \emph{unconditionally} (dependence on observed deterministic inputs $y^0$ is implied) deterministic as channels in all hidden layers grow to infinity simultaneously:
            \begin{align}
                \K^L &\xrightarrow[\min\left\{n^1,\dots,n^L\right\}\to\infty]{P}  \Kinf^L \equiv \left(\C\circ\A\right)^L\left(\K^0\right)
                \label{eq:Ktop conv in p},
            \end{align}
            i.e. $\Kinf^L$ is $\left(\C\circ\A\right)$ applied $L$ times to $K^0$, the deterministic input covariance. We prove this in \sref{sec:uniform-limit} (Theorem \ref{thm:uniformConvergence}). See Figure \ref{fig:diagrams} for a depiction of the correspondence between neural networks and their infinite width limit covariances $\Kinf^l$.
            
            \begin{figure}
                \centering
                \includegraphics[width=\textwidth]{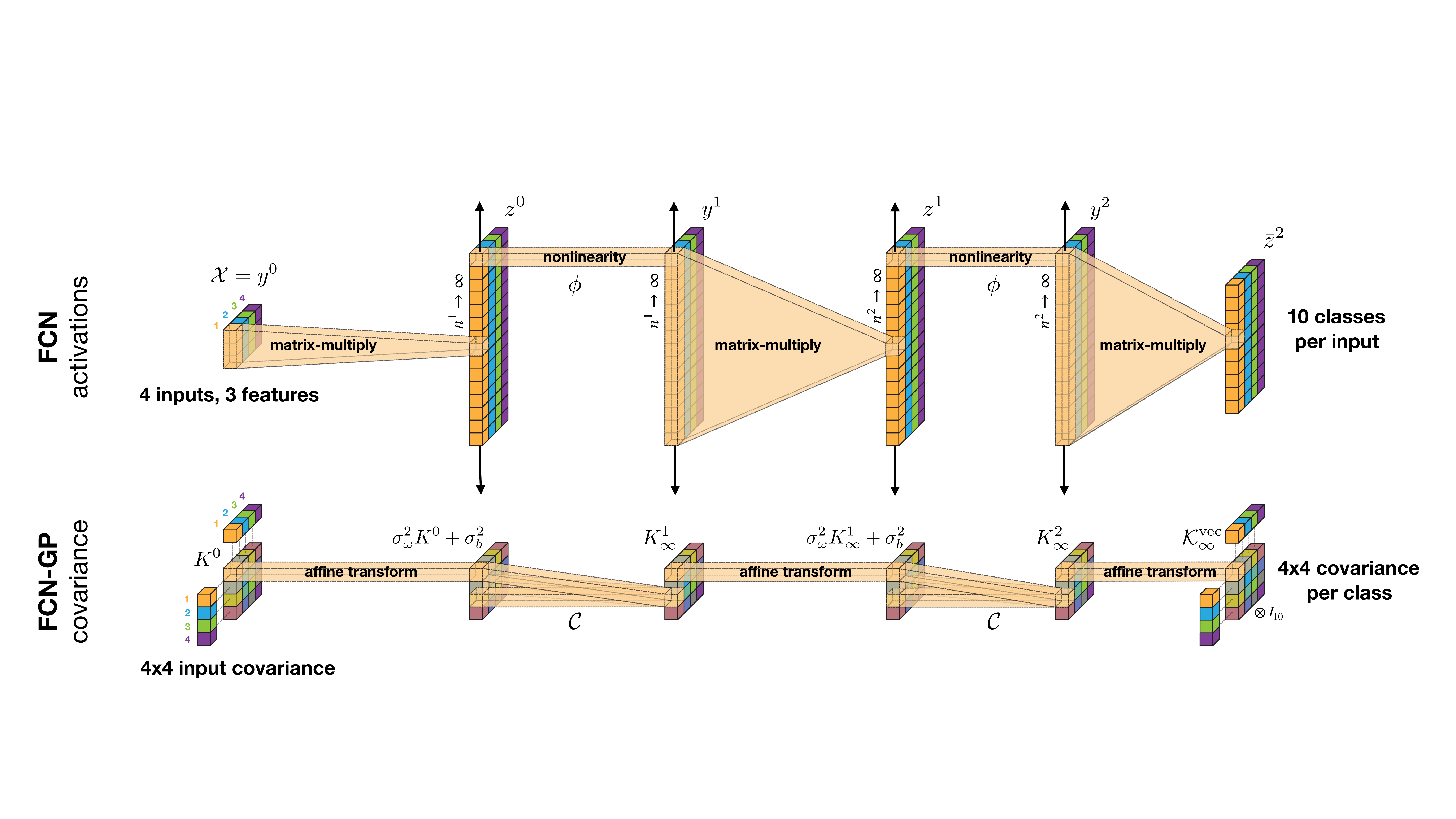}
                \includegraphics[width=\textwidth]{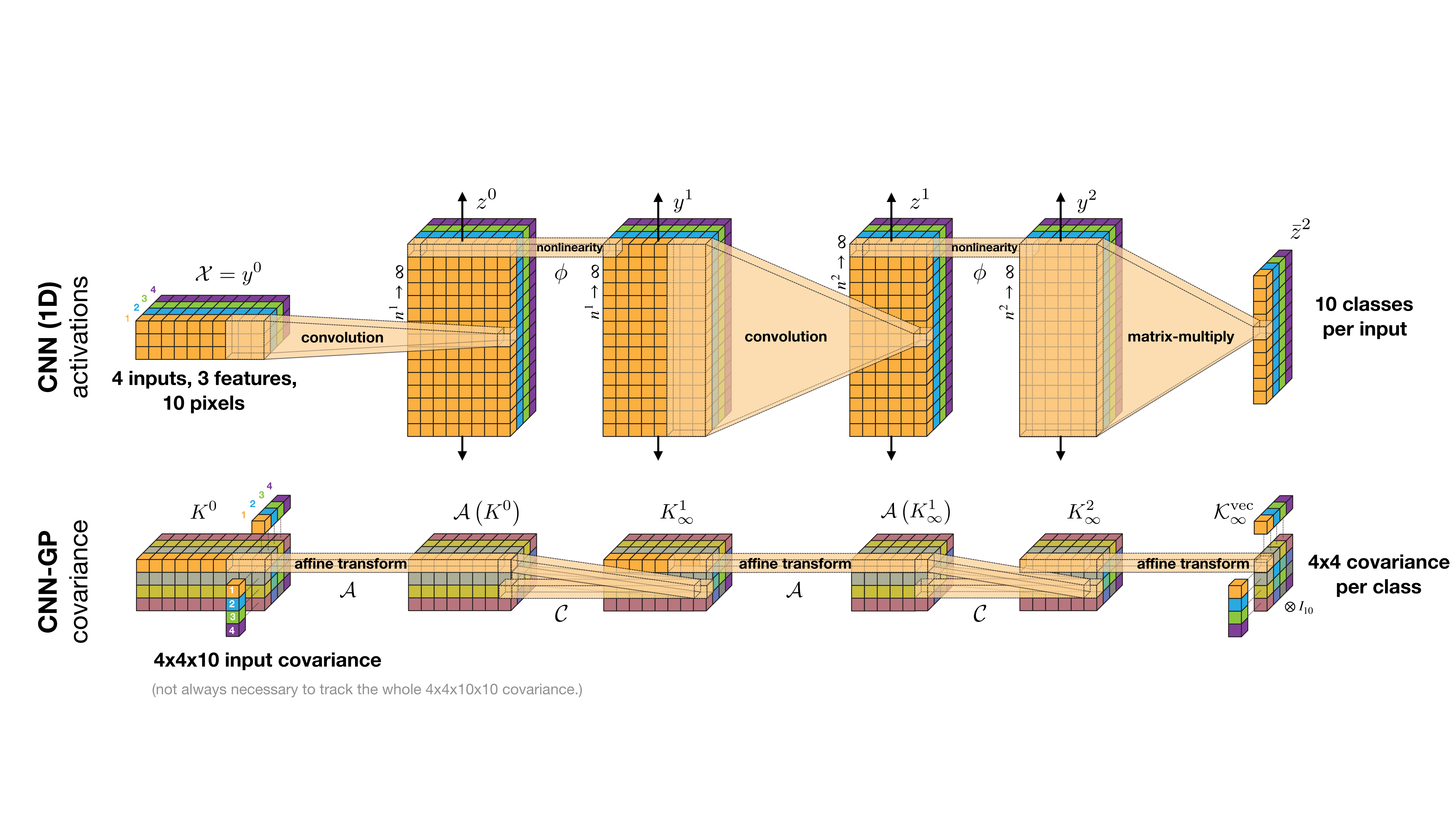}
                \caption{\textbf{Visualization of sample finite neural networks and their infinitely wide Bayesian counterparts (NN-GPs).} Presented are networks with nonlinearity $\phi$, $L=2$ hidden layers that regress $\bar n^3=10$-dimensional outputs $\bar z^2\left(x\right)$ for each of the 4 ({\bf\textcolor[RGB]{117,66,147}{1}}, {\bf\textcolor[RGB]{152,196,85}{2}}, {\bf\textcolor[RGB]{83,167,220}{3}}, {\bf\textcolor[RGB]{239,178,86}{4}}) inputs $x$ from the dataset $\X$. A hierarchical computation performed by a NN corresponds to a hierarchical transformation of the respective  covariance matrix (resulting in $\Ktopinf^{\textrm{vec}}\otimes I_{10}$). \textbf{Top two rows:} a fully connected network (FCN, above) and the respective FCN-GP (below). \textbf{Bottom two rows:} a 1D CNN with no ($\operatorname{valid}$) padding, and the respective CNN-GP. The analogy between the two models (NN and GP) is qualitatively similar to the case of FCN, modulo additional spatial dimensions in the internal activations and the respective additional entries in intermediary covariance matrices. Note that in general $d=10$ pixels would induce a $10\times10$-fold increase in the number of covariance entries (in $\K^0$, $\Kinf^1$, $\dots$) compared to the respective FCN, yet without pooling only a small fraction of them (displayed) needs be computed to obtain the top-layer GP covariance $\Ktopinf^{\textrm{vec}} \otimes I_{10}$ (see \sref{sec vectorize}).}
                \label{fig:diagrams}
            \end{figure}

        \subsubsection{A conditionally normal random variable becomes normal if its covariance becomes deterministic}\label{sec:step3}
            
            \sref{sec:step1} established that $\left(z^L | \K^L\right)$ is Gaussian, and \sref{sec:step2} established that its covariance matrix $\A\left(K^L\right)\otimes I_{n^{L+1}}$ converges in probability to a deterministic $\A\left(\Kinf^L\right)\otimes I_{n^{l+1}}$ in the infinite channel limit (since $\K^L\overset{P}{\to}\Kinf^L$, and $\A\left(\cdot\right)\otimes I_{n^{L+1}}: \R^{\left|\X\right|d \times \left|\X\right|d} \to \R^{n^{L+1}\left|\X\right|d \times n^{L+1}\left|\X\right|d}$ is continuous). As we establish in \sref{sec:uniform-limit} (Theorem \ref{thm:weak-convergence-simultaneous-limit}), this is sufficient to conclude with the following result.
        
            \textbf{Result.} If $\phi: \R\to\R$ is absolutely continuous and has an exponentially bounded derivative, i.e. $\exists\, a, b \in \R: \left|\phi'\left(x\right)\right| \leq a\exp\left(b x\right)\,\,\text{a.e. (almost everywhere)}$, then the following convergence in distribution holds:
            \begin{align}
                \left( z^L | \h^0 \right) \xrightarrow[{\min\{
                n^1, \dots, n^{L}\} \to \infty
                }]{\D} 
                \N\pp{
                0, \A \left(\Kinf^L\right) \otimes I_{n^{L+1}} }.
                \label{eq gp}
            \end{align}
            For more intuition behind \Eqref{eq gp} and an informal proof please consult \sref{sec:sequentialProof}.

\section{Transforming a GP over spatial locations into a GP over classes}\label{sec:collapsing}
    
    In \sref{sec:correspnodance_cnns_to_gps} we showed that in the infinite channel limit a deep CNN is a GP indexed by input samples, output spatial locations, and output channel indices. Further, its covariance matrix $K^{L}_{\infty}$ can be computed in closed form. Here we show that transformations to obtain class predictions that are common in CNN classifiers can be represented as either \emph{vectorization} or \emph{projection} (as long as we treat classification as regression (see \sref{sec exp}), identically to \citet{lee2018deep}). Both of these operations preserve the GP equivalence and allow the computation of the covariance matrix of the respective GP (now indexed by input samples and target classes) as a simple transformation of $K^{L}_{\infty}$.
    
    In the following we will denote $\nc$ as the number of target classes, $\bar{z}^{L+1}\left(x\right)\in \R^{\nc}$ as the outputs of the CNN-based classifier, and $\bar{\omega}^{L+1}$, $\bar{b}^{L+1}$ as the last (number $L+1$) layer weights and biases. For each class $i$ we will denote the empirical uncentered covariance of the outputs as
    \begin{equation}
        \Ktop_i \equiv \left({\bar{z}}^{L+1}_i\right) \left(\bar{z}^{L+1}_i\right)^T,
    \end{equation}
    a random variable of shape $\left|\X\right|\times\left|\X\right|$.
    We will show that in the scenarios considered below in \sref{sec vectorize} and \sref{sec project}, the outputs $\left(\bar{z}^{L+1}_i | \h^0\right)$ will converge in distribution to a multivariate Gaussian (i.i.d. for every class $i$) as $\min\{n^1, \dots, n^{L+1}\} \to \infty$ with covariance denoted as $\Ktopinf \in \R^{\left|\X\right|\times\left|\X\right|}$, or, jointly, $\left(\bar{z}^{L+1} | \h^0\right) \overset{\D}{\to} \N\left(0, \Ktopinf \otimes I_{\nc}\right)$. As in \sref{sec:correspnodance_cnns_to_gps}, we postpone the formal derivation until \sref{sec:uniform-limit} (Theorem \ref{thm:weak-convergence-simultaneous-limit}).

    \subsection{Vectorization}\label{sec vectorize}
            
        One common readout strategy is to vectorize (flatten) the output of the last convolutional layer $z^L\left(x\right) \in \R^{n^{L+1} d}$ into a vector\footnote{Note that since per our notation described in \sref{sec:preliminaries} $z^L\left(x\right)$ is already considered a 1D vector, here this operation simply amounts to re-indexing $z^L\left(x\right)$ with one channel index $j$ instead of two (channel $i$ and pixel $\alpha$).}  $\vect{z^L\left(x\right)}\in \R^{n^{L+1} d}$ and stack a fully connected layer on top:
        \begin{equation}
            \bar{z}^{L+1}_i\left(x\right) \equiv \sum_{j=1}^{n^{L+1} d} \bar{\omega}_{ij}^{L+1} \phi\left(\vect{z^L\left(x\right)}\right)_j + \bar{b}^{L+1}_i,
        \end{equation} 
        where the weights $\bar{\omega}^{L+1}\in \R^{\nc\times n^{L+1} d}$ and biases $\bar{b}^{L+1} \in \mathbb{R}^{\nc}$ are i.i.d. Gaussian, $\bar{\omega}_{ij}^{L+1} \sim \N\left(0, \sigma_\omega^2 / (n^{L+1} d)\right)$,  $\bar{b}_i^{L+1}\sim \N\left(0, \sigma_b^2\right)$. 
        
        It is easy to verify that $\left(\bar{z}^{L+1}_i | \K^{L+1}\right)$ is a multivariate Gaussian with covariance:
        \begin{align}\label{eq:K_top_empirical}
            \expect{}{\Ktop^{\textrm{vec}}_i\big|\K^{L+1}} &=
            \expect{}{\left({\bar{z}}^{L+1}_i\right) \left(\bar{z}^{L+1}_i\right)^T \big| \K^{L+1}} =
            \frac {\sigma_{\omega}^2} {d} \sum_{ \alpha}\left[\K^{L+1}\right]_{\alpha, \alpha} + \sigma_b^2.
        \end{align}
        
        The argument of \sref{sec:correspnodance_cnns_to_gps} can then be extended to conclude that in the limit of infinite width $\left(\bar{z}^{L+1}_i | \h^0\right)$ converges in distribution to a multivariate Gaussian (i.i.d. for each class $i$) with covariance
        \begin{align}
            \Ktop^{\textrm{vec}}_{\infty} = 
            \frac {\sigma_{\omega}^2} {d} \sum_{ \alpha}\left[\K^{L+1}_{\infty}\right]_{\alpha, \alpha} + \sigma_b^2.
        \end{align}
        A sample 2D CNN using this readout strategy is depicted in Figure \ref{fig:cnn_diagram}, and a sample correspondence between a FCN, FCN-GP, CNN, and CNN-GP is depicted in Figure \ref{fig:diagrams}.
        
        Note that as observed in \citet{xiao18a}, to compute the summands $ \left[\Kinf^{L+1}\right]_{\alpha, \alpha}\left(x, x'\right)$ in \Eqref{eq:K_top_empirical}, one needs only the corresponding terms $\left[\Kinf^{L}\right]_{\alpha, \alpha}\left(x, x'\right)$. Consequently, we only need to compute $\left\{\left[K^{l}_{\infty}\right]_{\alpha, \alpha}\left(x, x'\right): x, x'\in \X, \alpha\in \left\{1 \dots d\right\}\right\}_{l=0, \dots, L}$ and the memory cost is $\mathcal{O}\left(\left|\X\right|^2 d\right)$ (or $\mathcal{O}\left(d\right)$ per covariance entry in an iterative or distributed setting). Note that this approach ignores pixel-pixel covariances and produces a GP corresponding to a locally connected network (see \sref{sec LCN}).

    \begin{figure}
        \centering
        \includegraphics[width=\textwidth, trim={0 0.02cm 0 0.35cm},clip]{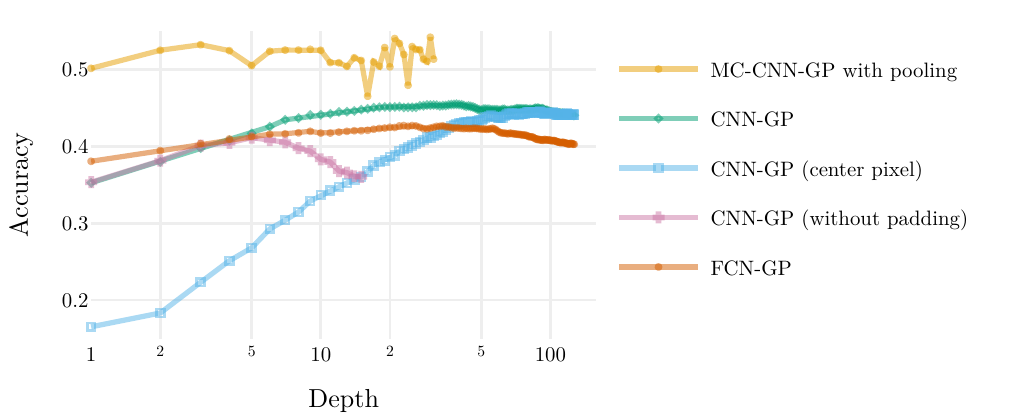}
        \caption{\textbf{Different dimensionality collapsing strategies described in \sref{sec:collapsing}.} Validation accuracy of an \textcolor[RGB]{230, 159, 0}{\bf MC-CNN-GP with pooling} (\S\hyperref[global_pooling]{3.2.1}) is consistently better than other models due to translation invariance of the kernel. \textcolor[RGB]{0, 158, 115}{\bf CNN-GP with zero padding} (\sref{sec vectorize}) outperforms an analogous \textcolor[RGB]{204, 121, 167}{\bf CNN-GP without padding} as depth increases. At depth $15$ the spatial dimension of the output without padding is reduced to $1\times 1$, making the \textcolor[RGB]{204, 121, 167}{\bf CNN-GP without padding} equivalent to the \textcolor[RGB]{86, 180, 230}{\bf center pixel selection strategy} (\S\hyperref[pixel_selection]{3.2.2}) -- which also performs worse than the \textcolor[RGB]{0, 158, 115}{\bf CNN-GP} (we conjecture, due to overfitting to centrally-located features) but approaches the latter (right) in the limit of large depth, as information becomes more uniformly spatially distributed \citep{xiao18a}. \textcolor[RGB]{0, 158, 115}{\bf CNN-GPs} generally outperform \textcolor[RGB]{213, 94, 0}{\bf FCN-GP}, presumably  due to the local connectivity prior, but can fail to capture nonlinear interactions between spatially-distant pixels at shallow depths (left). Values are reported on a 2K/4K train/validation subset of CIFAR10. See \sref{app:collapsing} for experimental details.}
        \label{fig:readout}
    \end{figure}

    \subsection{Projection}\label{sec project}
    
        Another readout approach is a projection to collapse the spatial dimensions. Let $h\in\R^d$ be a deterministic projection vector, $\bar{\omega}_{ij}^{L+1} \sim \N\left(0, \sigma_\omega^2 /n^{L+1}\right)$, and $\bar{b}^{L+1}$ be the same as in \sref{sec vectorize}.
        
        Define the output to be 
        \begin{align}
            \bar{z}^{L+1}_i\left(x\right) &\equiv \sum_{j=1}^{n^{L+1}} \bar{\omega}_{i j}^{L+1} \left(\sum_{\alpha=1}^d \phi\left(z^L\left(x\right)\right)_{j,\alpha} h_\alpha \right) + \bar{b}_i^{L+1},
            \quad\textrm{leading to (analogously to \sref{sec vectorize})}\\
            \Ktop^{h}_{\infty} &\equiv \sigma_{\omega}^2\sum_{\alpha, \alpha'} h_{\alpha} h_{\alpha'}\left[\Kinf^{L+1}\right]_{\alpha, \alpha'} + \sigma_b^2.
        \end{align}
        Examples of this approach include 
        \begin{enumerate}
            \item \textbf{Global average pooling:}\label{global_pooling} take $h = \frac 1 d \bf 1_{d}$. Then 
            \begin{equation}
            \label{eq:global-average-pooling}
                \Ktop^{\textrm{pool}}_{\infty} \equiv
                \frac {\sigma_{\omega}^2} {d^2}\sum_{\alpha, \alpha' } \left[\Kinf^{L+1}\right]_{\alpha, \alpha'}  + \sigma_b^2.
            \end{equation}
            This approach corresponds to applying global average pooling right after the last convolutional layer.\footnote{ Spatially local average pooling in intermediary layers can be constructed in a similar fashion (\sref{app:strides_pooling}). We focus on global average pooling in this work to more effectively isolate the effects of pooling from other aspects of the model like local connectivity or equivariance.} This approach takes all pixel-pixel covariances into consideration and makes the kernel translation invariant. However, it requires $\mathcal{O}\left(\left|\X\right|^2d^2\right)$ memory to compute the sample-sample covariance of the GP (or $\mathcal{O}\left(d^2\right)$ per covariance entry in an iterative or distributed setting). It is impractical to use this method to analytically evaluate the GP, and we propose to use a Monte Carlo approach (see \sref{sec:mcgp}).
            \item \textbf{Subsampling one particular pixel:}\label{pixel_selection} take $h = e_{\alpha}$, 
            \begin{equation}
            \label{eq:subsampling}
                \Ktop^{e_\alpha}_{\infty} \equiv
                \sigma_{\omega}^2 \left[\Kinf^{L+1}\right]_{\alpha, \alpha} + \sigma_b^2.
            \end{equation}
            This approach makes use of only one pixel-pixel covariance, and requires the same amount of memory as vectorization (\sref{sec vectorize}) to compute. 
        \end{enumerate}
          
    We compare the performance of presented strategies in Figure \ref{fig:readout}. Note that all described strategies admit stacking additional FC layers on top while retaining the GP equivalence, using a derivation analogous to \sref{sec:correspnodance_cnns_to_gps} \citep{lee2018deep, matthews2018a_iclr}.

\section{Monte Carlo evaluation of intractable GP kernels}\label{sec:mcgp}

    We introduce a Monte Carlo estimation method for NN-GP kernels which are computationally impractical to compute analytically, or for which we do not know the analytic form. Similar in spirit to traditional random feature methods~\citep{rahimi2007random}, the core idea is to instantiate many random \emph{finite} width networks and use the empirical uncentered covariances of activations to estimate the Monte Carlo-GP (MC-GP) kernel,
    \begin{align}
        \left[\K^l_{n, M}\right]_{\alpha, \alpha'}\left(x, x'\right) \equiv
        \frac{1}{M n} \sum_{m=1}^M \sum_{c=1}^{n} \h^l_{c \alpha}\left(x;\theta_m\right)\h^l_{c \alpha'}\left(x';\theta_m\right)
        \label{eq:mc_gp}
    \end{align}
    where $\theta$ consists of $M$ draws of the weights and biases from their prior distribution, $\theta_m \sim p\left(\theta\right)$, and $n$ is the width or number of channels in hidden layers. The MC-GP kernel converges to the analytic kernel with increasing width, $\lim_{n \rightarrow \infty}\K^l_{n, M} = K^{l}_{\infty}$ in probability.
    
    For finite width networks, the uncertainty in $\K^l_{n, M}$ is $\Var \left[\K^l_{n, M} \right] = \Var_{\theta}\left[\K^l_n\left(\theta\right)\right]/M$. From \citet[Theorems 2, 3]{daniely2016}, we know that for well-behaved nonlinearities (that include $\relu$ and $\erf$ considered in our experiments) $\Var_{\theta}\left[\K^l_n\left(\theta\right)\right] \propto \frac{1}{n}$, which leads to $\Var_{\theta} [\K^l_{n, M} ] \propto \frac{1}{M n}$. For finite $n$, $\K^l_{n, M}$ is also a biased estimate of $K^{l}_{\infty}$, where the bias depends solely on network width. We do not currently have an analytic form for this bias, but we can see in Figures \ref{fig:mcgp_to_gp_cnn_gp_pool} and \ref{fig:mcgp_to_gp_other} that for the hyperparameters we probe it is small relative to the variance. In particular, $\left\|\K^l_{n,M}\left(\theta\right) - \Kinf^L\right\|_{\textrm{F}}^2$ is nearly constant for constant $Mn$. We thus treat $M n$ as the effective sample size for the Monte Carlo kernel estimate. Increasing $M$ and reducing $n$ can reduce memory cost, though potentially at the expense of increased compute time and bias.
    
    In a non-distributed setting, the MC-GP reduces the memory requirements to compute $\mathcal{GP}^{\textrm{pool}}$ from  $\mathcal{O}\left(\left|\X\right|^2 d^2\right)$ to $\mathcal{O}\left(\left|\X\right|^2 + n^2 + n d\right)$, making the evaluation of CNN-GPs with pooling practical.

    \begin{figure}
        \centering
        \resizebox{\textwidth}{!}{
            \begin{tabular}{cM{60mm}M{60mm}}

                \rotatebox[origin=c]{90}{\bf MC-CNN-GP} & 
                
                \includegraphics[width=0.4\textwidth]{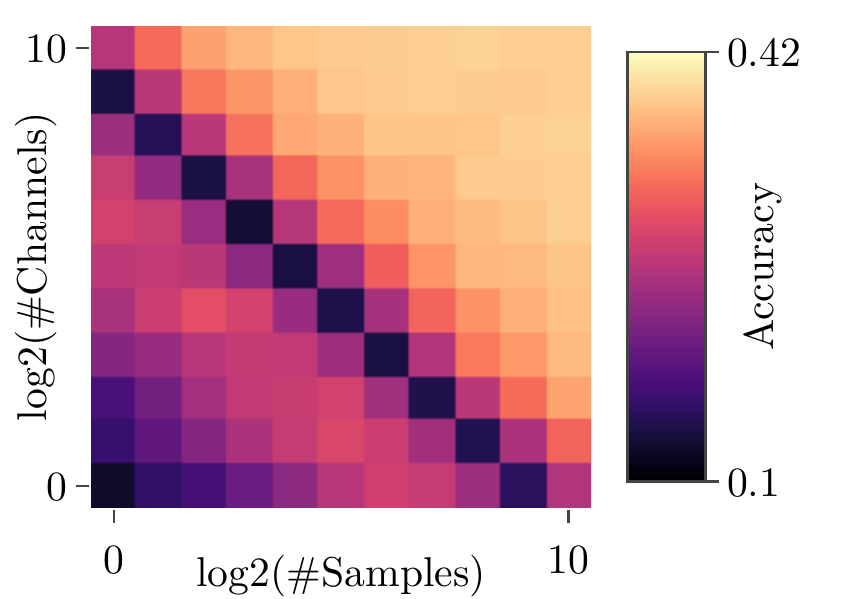} &
                
                \includegraphics[width=0.4\textwidth]{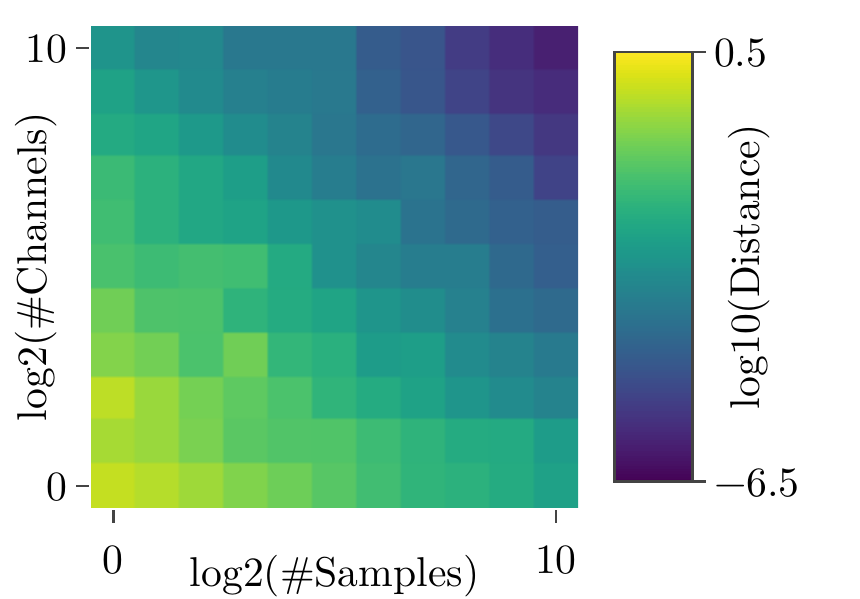} \\
            \end{tabular}
            }
        \caption{\textbf{Convergence of NN-GP Monte Carlo estimates.} \textcolor{ot}{\bf Validation accuracy} (left) of an MC-CNN-GP increases with $n \times M$ (channel count $\times$ number of samples) and approaches that of the exact CNN-GP (not shown), while the \textcolor{gt}{\bf distance} (right) to the exact kernel decreases. The dark band in the left plot corresponds to ill-conditioning of $\K^{L+1}_{n,M}$ when the number of outer products contributing to $\K^{L+1}_{n,M}$ approximately equals its rank. Values reported are for a 3-layer model applied to a 2K/4K train/validation subset of CIFAR10 downsampled to $8\times 8$. See Figure \ref{fig:mcgp_to_gp_other} for similar results with other architectures and \sref{app:mcgp} for experimental details.}
        \label{fig:mcgp_to_gp_cnn_gp_pool}
    \end{figure}

\section{Discussion}

    \subsection{Bayesian CNNs with many channels are identical to locally connected networks, in the absence of pooling}\label{sec LCN}
    
        Locally Connected Networks (LCNs) \citep{fukushima1975cognitron, lecun1989generalization} are CNNs without weight sharing between spatial locations. LCNs preserve the connectivity pattern, and thus topology, of a CNN. However, they do not possess the equivariance property of a CNN -- if an input is translated, the latent representation in an LCN will be completely different, rather than also being translated.
        
        The CNN-GP predictions without spatial pooling in \sref{sec vectorize} and \S\hyperref[pixel_selection]{3.2.2} depend only on sample-sample covariances, and do not depend on pixel-pixel covariances. LCNs destroy pixel-pixel covariances: $\left[\Kinf^L\right]_{\alpha, \alpha'}^{\text{LCN}}\left(x, x'\right)=0$, for $\alpha \neq \alpha'$ and all $x, x'\in\X$ and $L>0$. However, LCNs preserve the covariances between input examples at every pixel: $\left[\Kinf^L\right]_{\alpha, \alpha}^{\text{LCN}}\left(x, x'\right)= \left[\Kinf^L\right]_{\alpha, \alpha}^{\text{CNN}}\left(x, x'\right)$. As a result, in the absence of pooling, LCN-GPs and CNN-GPs are identical. Moreover, LCN-GPs with pooling are identical to CNN-GPs with vectorization of the top layer (under suitable scaling of $\h^{L+1}$). We confirm these findings experimentally in trained networks in the limit of large width in Figures \ref{tab:architectures} and \ref{fig:sgd_vs_gp} (b), as well as by demonstrating convergence of MC-GPs of the respective architectures to the same CNN-GP (modulo scaling of $\h^{L+1}$) in Figures \ref{fig:mcgp_to_gp_cnn_gp_pool} and \ref{fig:mcgp_to_gp_other}.

    \subsection{Pooling leverages equivariance to provide invariance}\label{sec:pooling}
        
        The only kernel leveraging pixel-pixel covariances is that of the CNN-GP with pooling. This enables the predictions of this GP and the corresponding CNN to be invariant to translations (modulo edge effects) -- a beneficial quality for an image classifier. We observe strong experimental evidence supporting the benefits of invariance throughout this work (Figures \ref{tab:architectures}, \ref{fig:readout}, \ref{fig:mcgp_to_gp_cnn_gp_pool}, \ref{fig:sgd_vs_gp} (b); Table \ref{tab:results}), in both CNNs and CNN-GPs.

    \subsection{Finite channel SGD-trained CNNs can outperform infinite channel Bayesian CNNs, in the absence of pooling}\label{sec:sgd_better_than_gp}

        In the absence of pooling, the benefits of equivariance and weight sharing are more challenging to explain in terms of Bayesian priors on class predictions (since without pooling equivariance is not a property of the outputs, but only of intermediary representations). Indeed, in this work we find that the performance of finite width SGD-trained CNNs often approaches that of their CNN-GP counterpart (Figure \ref{fig:sgd_vs_gp}, b, c),\footnote{This observation is conditioned on the respective NN fitting the training set to $100\%$. Underfitting breaks the correspondence to an NN-GP, since train set predictions of such a network no longer correspond to the true training labels. Properly tuned underfitting often also leads to better generalization (Table \ref{tab:results}).} suggesting that in those cases equivariance does not play a beneficial role in SGD-trained networks.
    
        However, as can be seen in Figures  \ref{tab:architectures}, \ref{fig:sgd_vs_gp} (c), \ref{tab:architectures_detail}, and Table \ref{tab:results}, the best CNN \textit{overall} outperforms the best CNN-GP by a significant margin -- an observation specific to CNNs and not FCNs or LCNs.\footnote{Performing an analogous large-scale comparison between CNNs and CNN-GPs \textit{with pooling} was not computationally feasible, and only one CNN-GP model was evaluated.\cref{foot:cnn_gp} Their relative performance remains an interesting open question for future research.} We observe this gap in performance especially in the case of $\relu$ networks trained with a large learning rate. In Figure \ref{tab:architectures} we demonstrate this large gap in performance by evaluating different models with equivalent architecture and hyperparameter settings, chosen for good SGD-trained CNN performance.
        
        We conjecture that equivariance, a property lacking in LCNs and the Bayesian treatment of the infinite channel CNN limit, contributes to the performance of SGD-trained finite channel CNNs with the correct settings of hyperparameters.\footnote{While this is consistent with the results of \cite{lecun1989generalization}, in a different setting \cite{bartunov2018assessing} found equivariance to not provide a substantial performance gain.}  Nonetheless, more work is needed to disentangle and quantify the separate contributions of stochastic optimization and finite width effects,\footnote{We remark that concerns about GPs not being able to learn hierarchical representations have been raised in the literature \citep[Section 7]{matthews2018b_arxiv}, \citep[Chapter 5]{neal1995bayesian}, \citep[Section 45.7]{mackay2003information} However, practical impact of these assertions have not been extensively investigated empirically or theoretically, and we hope that our work stimulates research in this direction.} and differences in performance between CNNs with weight sharing and their corresponding CNN-GPs.

        \begin{figure}
            \centering
            \resizebox{\textwidth}{!}{
                \begin{tabular}{l|l}
                 \textbf{(a)} & \textbf{(c)} \\ \includegraphics[width=0.49\textwidth]{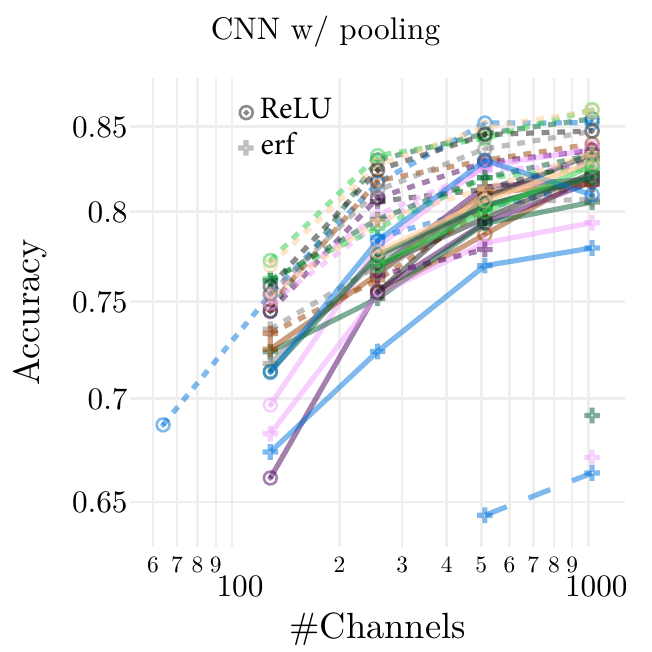} 
                 & \includegraphics[width=0.49\textwidth]{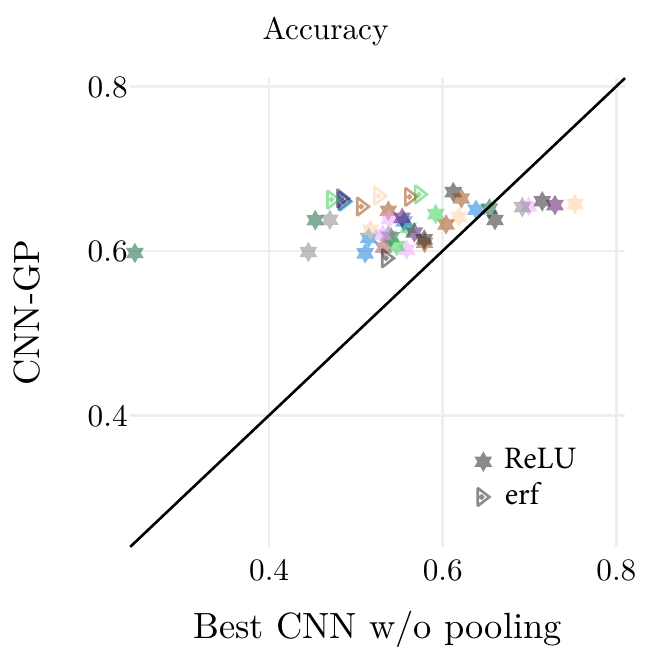}
                    \\ 
                    & \\\hline
                \end{tabular}
            }
            \resizebox{\textwidth}{!}{
                \begin{tabular}{cM{60mm}M{60mm}}
                      & & \\
                     \textbf{(b)} & \bf No Pooling & \bf Global Average Pooling \\
                    
                    \rotatebox[origin=c]{90}{\bf LCN} &  
                    \includegraphics[width=0.46\textwidth]{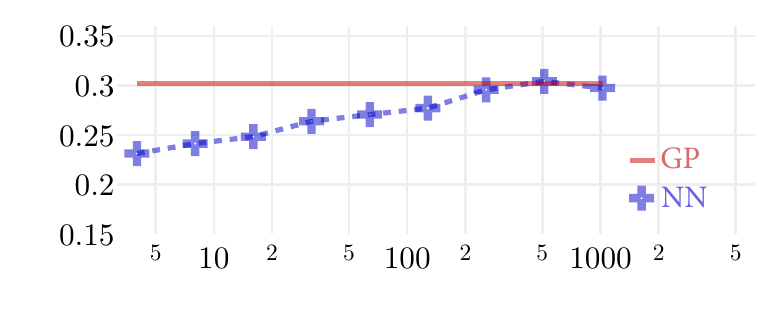} &
                    \includegraphics[width=0.46\textwidth]{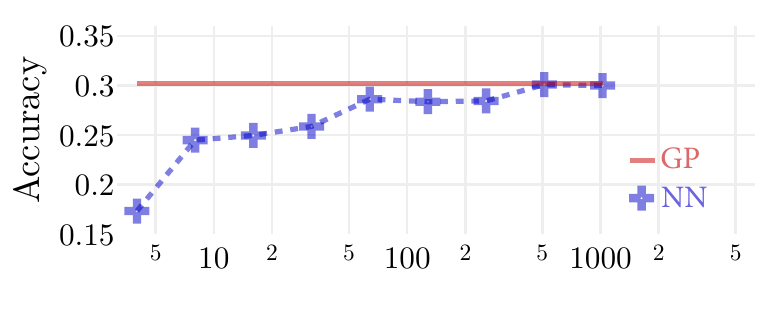} \\
                    
                    \rotatebox[origin=c]{90}{\bf CNN}  & 
                    \includegraphics[width=0.46\textwidth]{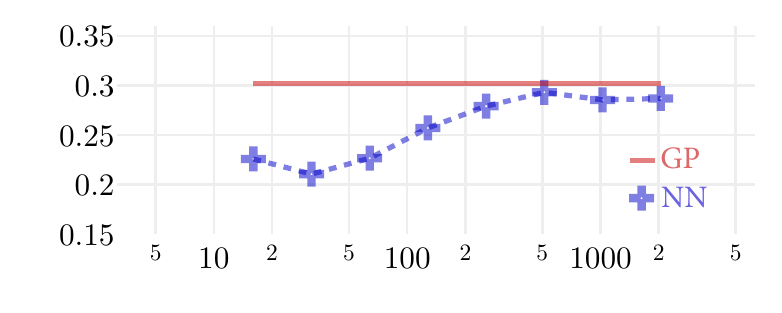} &
                    \includegraphics[width=0.46\textwidth]{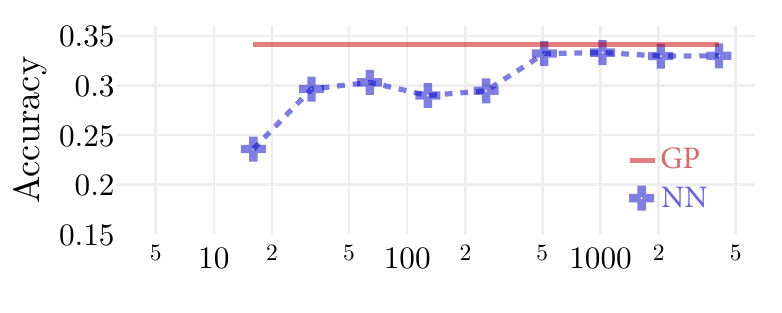} \\
                    
                     & \multicolumn{2}{c}{{\#Channels}}\\[0.25cm]
                    \hline
                \end{tabular}
            }
            \caption{\textbf{(a): SGD-trained CNNs often perform better with increasing number of channels.}  Each line corresponds to a particular choice of architecture and initialization hyperparameters, with best learning rate and weight decay selected independently for each number of channels ($x$-axis). 
            \textbf{(b): \textcolor{bt}{SGD-trained CNNs} often approach the performance of their corresponding \textcolor{rt}{CNN-GP} with increasing number of channels.} All models have the same architecture except for pooling and weight sharing, as well as training-related hyperparameters such as learning rate, weight decay and batch size, which are selected for each number of channels ($x$-axis) to maximize validation performance ($y$-axis) of a neural network. As the number of channels grows, best validation accuracy increases and approaches accuracy of the respective GP (solid horizontal line). 
            \textbf{(c): However, the best-performing  SGD-trained CNNs can outperform their corresponding CNN-GPs.} Each point corresponds to the test accuracy of: ($y$-axis) a specific CNN-GP; ($x$-axis) the best (on validation) CNN with the same architectural hyper-parameters selected among the 100\%-accurate models on the full training CIFAR10 dataset with different learning rates, weight decay and number of channels. While CNN-GP appears competitive against $100\%$-accurate CNNs (above the diagonal), the best CNNs \textit{overall} outperform CNN-GPs by a significant margin (below the diagonal, right). For further analysis of factors leading to similar or diverging behavior between SGD-trained finite CNNs and infinite Bayesian CNNs see Figures  \ref{tab:architectures}, \ref{tab:architectures_detail}, and Table \ref{tab:results}. \textbf{Experimental details:} all networks have reached $100\%$ training accuracy on CIFAR10. Values in (b) are reported on an 0.5K/4K train/validation subset downsampled to $8\times8$ for computational reasons. See \sref{app:results} and \sref{app:sgd_to_gp} for full experimental details of (a, c) and (b) plots respectively.}
            \label{fig:sgd_vs_gp}
        \end{figure}

        \begin{figure}
            \centering
            \includegraphics[width=\textwidth]{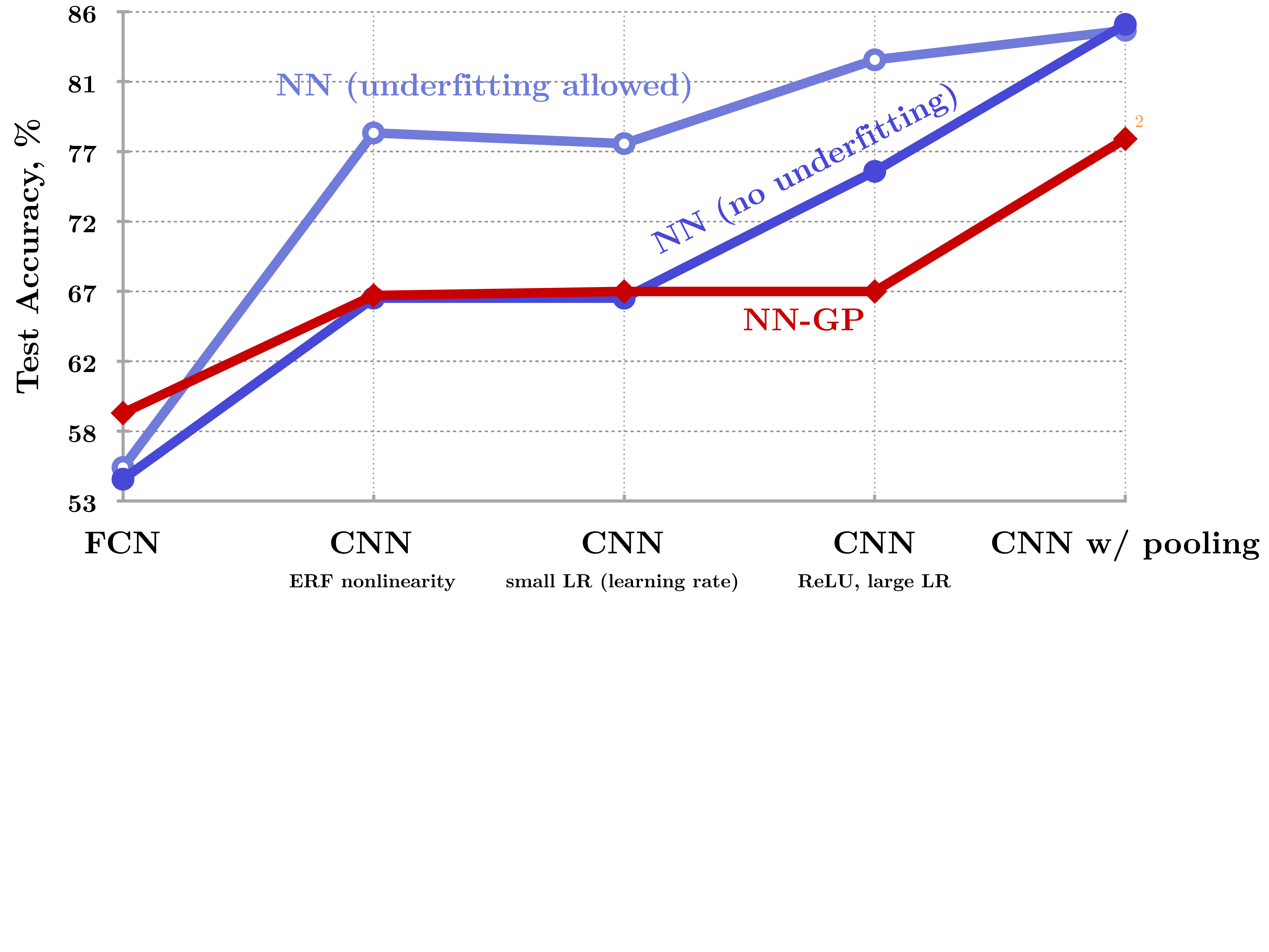}
            
            \caption{\textbf{Aspects of architecture and inference influencing test performance.} Test accuracy (vertical axis, \%) for the best model within each model family (horizontal axis), maximizing validation accuracy over depth, width, and training and initialization hyperpameters (``No underfitting'' means only models that achieved $100\%$ training accuracy were considered). \textcolor{rt}{\bf CNN-GP} outperforms \textcolor{bt}{\bf SGD-trained networks} optimized with a small learning rate to 100\% train accuracy. When SGD optimization is allowed to underfit the training set, there is a significant improvement in generalization. Further, when $\relu$ nonlinearities are paired with large learning rates, the performance of \textcolor{bt}{\bf SGD-trained networks} again improves relative to \textcolor{rt}{\bf CNN-GPs}, suggesting a beneficial interplay between $\relu$s and fast SGD training. These differences in performance between CNNs and CNN-GPs are not observed between \textcolor{bt}{\bf FCNs} and \textcolor{rt}{\bf FCN-GPs} (left), or between LCNs and LCN-GPs (Figure \ref{tab:architectures}), suggesting that {\em equivariance} is the underlying factor responsible for the improved performance of finite \textcolor{bt}{\bf SGD-trained CNNs} relative to \textcolor{rt}{\bf infinite Bayesian CNNs without pooling}. Incorporating \textit{invariance} allows   \textcolor{bt}{\bf SGD-trained CNNs} and \textcolor{rt}{\bf CNN-GPs with pooling} to achieve top performance.\cref{foot:cnn_gp} See Table \ref{tab:results} for further results on other datasets, as well as comparison to GP performance in prior literature. See \sref{app:results} for experimental details.
            }
            \label{tab:architectures_detail}
        \end{figure}

\section{Conclusion}

    In this work we have derived a Gaussian process that corresponds to 
    fully Bayesian multi-layer CNNs with infinitely many channels. The covariance of this GP can be efficiently computed either in closed form or by using Monte Carlo sampling, depending on the architecture.
    
    The CNN-GP achieves state of the art results for GPs without trainable kernels on CIFAR10. It can perform competitively with CNNs (that fit the training set) of equivalent architecture and weight priors, which makes it an appealing choice for small datasets, as it eliminates all training-related hyperparameters. However, we found that the best \textit{overall} performance, at least in the absence of pooling, is achieved by finite SGD-trained CNNs and not by their infinite Bayesian counterparts. We hope our work stimulates future research towards understanding the distribution over functions induced by model architecture and training approach, and what aspects of this distribution are important for model performance. Another natural extension of our work is the study of other deep learning architectures in the infinitely wide limit. After the publication of this paper, \citet{yang2018Scaling} devised a unifying framework proving the GP convergence for even more models (such as ones using batch normalization, (self-)attention, LSTM) with slightly different assumptions on the nonlinearity.

  \begin{table}[H]
        \centering
        \begin{tabular}{l l l l}
            \toprule
             Model & CIFAR10 & MNIST & Fashion-MNIST \\
             \toprule
             \textcolor{bt}{CNN with pooling} & $\bm{14.85}$ ($\bm{15.65}$) & $-$ & $-$ \\
             \textcolor{rt}{CNN-GP with pooling}\cref{foot:cnn_gp} & $22.57$ & $-$ & $-$ \\
             \midrule
             \textcolor{bt}{CNN with $\relu$ and large learning rate} & $24.76$ ($17.64$) & $-$ & $-$ \\
             \textcolor{rt}{CNN-GP} & $32.86$ & $0.88$ & $\bm{7.40}$ \\
             \textcolor{bt}{CNN with small learning rate} & $33.31 (22.89)$ & $-$ & $-$ \\
             \textcolor{bt}{CNN with $\erf$ (any learning rate)}& $33.31 (22.17)$ & $-$ & $-$ \\
             \midrule
             \textcolor{rt}{Convolutional GP} \citep{van2017convolutional} & $35.40$ & $1.17$ & $-$ \\ 
             \textcolor{rt}{ResNet GP} \citep{garriga2018convnets} & $-$ & $\bm{0.84}$ & $-$ \\
             \textcolor{rt}{Residual CNN-GP} \citep{garriga2018convnets} & $-$ & $0.96$ & $-$ \\
             \textcolor{rt}{CNN-GP} \citep{garriga2018convnets} & $-$ & $1.03$ & $-$ \\
            
            \toprule
             \textcolor{rt}{FCN-GP} & $41.06$ & $1.22$ & $8.22$ \\
             \textcolor{rt}{FCN-GP} \citep{lee2018deep} & $44.34$ & $1.21$ & $-$ \\
             \textcolor{bt}{FCN} & $45.52$ ($44.73$) & $-$ & $-$ \\
            
            \bottomrule
        \end{tabular}
        \caption{\textbf{Best achieved test error for different model classes (vertical axis) and datasets (horizontal axis).} Test error (\%) for the best model within each model family, maximizing validation accuracy over depth, width, and training and initialization hyperpameters. Except where indicated by parentheses, all models achieve 100\% training accuracy. For \textcolor{bt}{\bf SGD-trained CNNs}, numbers in parentheses correspond to the same model family, but without restriction on training accuracy. \textcolor{rt}{\bf CNN-GP} achieves state of the art results on CIFAR10 for GPs without trainable kernels and outperforms SGD-trained models optimized with a small learning rate to 100\% train accuracy. See Figure \ref{tab:architectures_detail} for  visualization and interpretation of these results. See \sref{app:results} for experimental details. 
        }\label{tab:results}
    \end{table}

\section{Acknowledgements}
    
    We thank Sam Schoenholz, Vinay Rao, Daniel Freeman, Qiang Zeng, and Phil Long for frequent discussion and feedback on preliminary results.

\renewcommand*{\bibfont}{\small}
\bibliography{arxiv}

\begin{thebibliography}{73}
\providecommand{\natexlab}[1]{#1}
\providecommand{\url}[1]{\texttt{#1}}
\expandafter\ifx\csname urlstyle\endcsname\relax
  \providecommand{\doi}[1]{doi: #1}\else
  \providecommand{\doi}{doi: \begingroup \urlstyle{rm}\Url}\fi

\bibitem[Abadi et~al.(2016)Abadi, Agarwal, Barham, Brevdo, Chen, Citro,
  Corrado, Davis, Dean, Devin, et~al.]{abadi2016tensorflow}
Mart{\'\i}n Abadi, Ashish Agarwal, Paul Barham, Eugene Brevdo, Zhifeng Chen,
  Craig Citro, Greg~S Corrado, Andy Davis, Jeffrey Dean, Matthieu Devin, et~al.
\newblock Tensorflow: Large-scale machine learning on heterogeneous distributed
  systems.
\newblock \emph{arXiv preprint arXiv:1603.04467}, 2016.

\bibitem[Bartunov et~al.(2018)Bartunov, Santoro, Richards, Marris, Hinton, and
  Lillicrap]{bartunov2018assessing}
Sergey Bartunov, Adam Santoro, Blake Richards, Luke Marris, Geoffrey~E Hinton,
  and Timothy Lillicrap.
\newblock Assessing the scalability of biologically-motivated deep learning
  algorithms and architectures.
\newblock In \emph{Advances in Neural Information Processing Systems}, pp.\
  9368--9378, 2018.

\bibitem[Billingsley(1995)]{billingsley1995prob}
Patrick Billingsley.
\newblock \emph{Probability and Measure}.
\newblock John Wiley \& Sons, 1995.

\bibitem[Billingsley(1999)]{billingsley2013convergence}
Patrick Billingsley.
\newblock \emph{Convergence of probability measures}.
\newblock John Wiley \& Sons, 1999.

\bibitem[Blomqvist et~al.(2018)Blomqvist, Kaski, and
  Heinonen]{blomquist2018deep}
Kenneth Blomqvist, Samuel Kaski, and Markus Heinonen.
\newblock Deep convolutional gaussian processes.
\newblock \emph{arXiv preprint arXiv:1810.03052}, 2018.

\bibitem[Borovykh(2018)]{Borovykh2018}
Anastasia Borovykh.
\newblock A gaussian process perspective on convolutional neural networks.
\newblock \emph{arXiv preprint arXiv:1810.10798}, 2018.

\bibitem[Bradshaw et~al.(2017)Bradshaw, Matthews, and
  Ghahramani]{bradshaw2017adversarial}
John Bradshaw, Alexander G de~G Matthews, and Zoubin Ghahramani.
\newblock Adversarial examples, uncertainty, and transfer testing robustness in
  gaussian process hybrid deep networks.
\newblock \emph{arXiv preprint arXiv:1707.02476}, 2017.

\bibitem[Canziani et~al.(2016)Canziani, Paszke, and
  Culurciello]{canziani2016analysis}
Alfredo Canziani, Adam Paszke, and Eugenio Culurciello.
\newblock An analysis of deep neural network models for practical applications.
\newblock \emph{arXiv preprint arXiv:1605.07678}, 2016.

\bibitem[Chen et~al.(2018)Chen, Pennington, and Schoenholz]{chen2018rnn}
Minmin Chen, Jeffrey Pennington, and Samuel Schoenholz.
\newblock Dynamical isometry and a mean field theory of {RNN}s: Gating enables
  signal propagation in recurrent neural networks.
\newblock In \emph{Proceedings of the 35th International Conference on Machine
  Learning}, volume~80 of \emph{Proceedings of Machine Learning Research}, pp.\
   873--882, Stockholmsmässan, Stockholm Sweden, 10--15 Jul 2018. PMLR.
\newblock URL \url{http://proceedings.mlr.press/v80/chen18i.html}.

\bibitem[Cho \& Saul(2009)Cho and Saul]{cho2009}
Youngmin Cho and Lawrence~K Saul.
\newblock Kernel methods for deep learning.
\newblock In \emph{Advances in neural information processing systems}, pp.\
  342--350, 2009.

\bibitem[Choromanska et~al.(2015)Choromanska, Henaff, Mathieu, Arous, and
  LeCun]{Choromanska2014TheLS}
Anna Choromanska, Mikael Henaff, Michael Mathieu, G{\'e}rard~Ben Arous, and
  Yann LeCun.
\newblock The loss surfaces of multilayer networks.
\newblock In \emph{Artificial Intelligence and Statistics}, pp.\  192--204,
  2015.

\bibitem[Cohen \& Welling(2016)Cohen and Welling]{cohen2016group}
Taco Cohen and Max Welling.
\newblock Group equivariant convolutional networks.
\newblock In \emph{International conference on machine learning}, pp.\
  2990--2999, 2016.

\bibitem[Damianou \& Lawrence(2013)Damianou and Lawrence]{damianou2013deep}
Andreas Damianou and Neil Lawrence.
\newblock Deep gaussian processes.
\newblock In \emph{Artificial Intelligence and Statistics}, pp.\  207--215,
  2013.

\bibitem[Daniely et~al.(2016)Daniely, Frostig, and Singer]{daniely2016}
Amit Daniely, Roy Frostig, and Yoram Singer.
\newblock Toward deeper understanding of neural networks: The power of
  initialization and a dual view on expressivity.
\newblock In \emph{Advances In Neural Information Processing Systems}, pp.\
  2253--2261, 2016.

\bibitem[Fukushima(1975)]{fukushima1975cognitron}
Kunihiko Fukushima.
\newblock Cognitron: A self-organizing multilayered neural network.
\newblock \emph{Biological cybernetics}, 20\penalty0 (3-4):\penalty0 121--136,
  1975.

\bibitem[Fukushima \& Miyake(1982)Fukushima and
  Miyake]{fukushima1982neocognitron}
Kunihiko Fukushima and Sei Miyake.
\newblock Neocognitron: A self-organizing neural network model for a mechanism
  of visual pattern recognition.
\newblock In \emph{Competition and cooperation in neural nets}, pp.\  267--285.
  Springer, 1982.

\bibitem[{Garriga-Alonso} et~al.(2018){Garriga-Alonso}, Aitchison, and
  Rasmussen]{garriga2018convnets}
Adri{\`a} {Garriga-Alonso}, Laurence Aitchison, and Carl~Edward Rasmussen.
\newblock Deep convolutional networks as shallow {G}aussian processes.
\newblock \emph{arXiv preprint arXiv:1808.05587}, aug 2018.
\newblock URL \url{https://arxiv.org/abs/1808.05587}.

\bibitem[Golovin et~al.(2017)Golovin, Solnik, Moitra, Kochanski, Karro, and
  Sculley]{golovin2017google}
Daniel Golovin, Benjamin Solnik, Subhodeep Moitra, Greg Kochanski, John Karro,
  and D~Sculley.
\newblock Google vizier: A service for black-box optimization.
\newblock In \emph{Proceedings of the 23rd ACM SIGKDD International Conference
  on Knowledge Discovery and Data Mining}, pp.\  1487--1495. ACM, 2017.

\bibitem[Goodfellow et~al.(2015)Goodfellow, Vinyals, and
  Saxe]{Goodfellow2014QualitativelyCN}
Ian~J Goodfellow, Oriol Vinyals, and Andrew~M Saxe.
\newblock Qualitatively characterizing neural network optimization problems.
\newblock \emph{International Conference on Learning Representations}, 2015.

\bibitem[Hanin \& Rolnick(2018)Hanin and Rolnick]{hanin2018start}
Boris Hanin and David Rolnick.
\newblock How to start training: The effect of initialization and architecture.
\newblock \emph{arXiv preprint arXiv:1803.01719}, 2018.

\bibitem[Hazan \& Jaakkola(2015)Hazan and Jaakkola]{hazan2015}
Tamir Hazan and Tommi Jaakkola.
\newblock Steps toward deep kernel methods from infinite neural networks.
\newblock \emph{arXiv preprint arXiv:1508.05133}, 2015.

\bibitem[He et~al.(2016)He, Zhang, Ren, and Sun]{he2016deep}
Kaiming He, Xiangyu Zhang, Shaoqing Ren, and Jian Sun.
\newblock Deep residual learning for image recognition.
\newblock In \emph{Proceedings of the IEEE conference on computer vision and
  pattern recognition}, pp.\  770--778, 2016.

\bibitem[Kingma \& Ba(2015)Kingma and Ba]{kingma2014adam}
Diederik~P Kingma and Jimmy Ba.
\newblock Adam: A method for stochastic optimization.
\newblock \emph{3rd International Conference for Learning Representations},
  2015.

\bibitem[Krizhevsky(2009)]{krizhevsky2009learning}
Alex Krizhevsky.
\newblock Learning multiple layers of features from tiny images.
\newblock Technical report, Citeseer, 2009.

\bibitem[Kumar et~al.(2018)Kumar, Singh, Srijith, and Damianou]{kumar2018deep}
Vinayak Kumar, Vaibhav Singh, PK~Srijith, and Andreas Damianou.
\newblock Deep gaussian processes with convolutional kernels.
\newblock \emph{arXiv preprint arXiv:1806.01655}, 2018.

\bibitem[Lawrence \& Moore(2007)Lawrence and Moore]{lawrence2007hierarchical}
Neil~D Lawrence and Andrew~J Moore.
\newblock Hierarchical gaussian process latent variable models.
\newblock In \emph{Proceedings of the 24th international conference on Machine
  learning}, pp.\  481--488. ACM, 2007.

\bibitem[Le~Roux \& Bengio(2007)Le~Roux and Bengio]{le2007continuous}
Nicolas Le~Roux and Yoshua Bengio.
\newblock Continuous neural networks.
\newblock In \emph{Artificial Intelligence and Statistics}, pp.\  404--411,
  2007.

\bibitem[Lecun(1989)]{lecun1989generalization}
Yann Lecun.
\newblock Generalization and network design strategies.
\newblock In \emph{Connectionism in perspective}. Elsevier, 1989.

\bibitem[LeCun et~al.(1998)LeCun, Bottou, Bengio, and
  Haffner]{lecun1998gradient}
Yann LeCun, L{\'e}on Bottou, Yoshua Bengio, and Patrick Haffner.
\newblock Gradient-based learning applied to document recognition.
\newblock \emph{Proceedings of the IEEE}, 86\penalty0 (11):\penalty0
  2278--2324, 1998.

\bibitem[Lee et~al.(2018)Lee, Bahri, Novak, Schoenholz, Pennington, and
  Sohl-dickstein]{lee2018deep}
Jaehoon Lee, Yasaman Bahri, Roman Novak, Sam Schoenholz, Jeffrey Pennington,
  and Jascha Sohl-dickstein.
\newblock Deep neural networks as gaussian processes.
\newblock In \emph{International Conference on Learning Representations}, 2018.
\newblock URL \url{https://openreview.net/forum?id=B1EA-M-0Z}.

\bibitem[Lin et~al.(2017)Lin, Tegmark, and Rolnick]{lin2017does}
Henry~W Lin, Max Tegmark, and David Rolnick.
\newblock Why does deep and cheap learning work so well?
\newblock \emph{Journal of Statistical Physics}, 168\penalty0 (6):\penalty0
  1223--1247, 2017.

\bibitem[Long et~al.(2014)Long, Zhang, and Darrell]{long2014convnets}
Jonathan~L Long, Ning Zhang, and Trevor Darrell.
\newblock Do convnets learn correspondence?
\newblock In \emph{Advances in Neural Information Processing Systems}, pp.\
  1601--1609, 2014.

\bibitem[Long \& Sedghi(2019)Long and Sedghi]{long2019effect}
Philip~M Long and Hanie Sedghi.
\newblock On the effect of the activation function on the distribution of
  hidden nodes in a deep network.
\newblock \emph{arXiv preprint arXiv:1901.02104}, 2019.

\bibitem[MacKay(2003)]{mackay2003information}
David~JC MacKay.
\newblock \emph{Information theory, inference and learning algorithms}.
\newblock Cambridge university press, 2003.

\bibitem[Matthews et~al.(2018{\natexlab{a}})Matthews, Rowland, Hron, Turner,
  and Ghahramani]{matthews2018b_arxiv}
Alexander G de~G Matthews, Mark Rowland, Jiri Hron, Richard~E Turner, and
  Zoubin Ghahramani.
\newblock Gaussian process behaviour in wide deep neural networks.
\newblock \emph{arXiv preprint arXiv:1804.11271}, 9 2018{\natexlab{a}}.

\bibitem[Matthews et~al.(2018{\natexlab{b}})Matthews, Hron, Rowland, Turner,
  and Ghahramani]{matthews2018a_iclr}
Alexander{\ }G.{\ }de{\ }G. Matthews, Jiri Hron, Mark Rowland, Richard~E.
  Turner, and Zoubin Ghahramani.
\newblock Gaussian process behaviour in wide deep neural networks.
\newblock In \emph{International Conference on Learning Representations}, 4
  2018{\natexlab{b}}.
\newblock URL \url{https://openreview.net/forum?id=H1-nGgWC-}.

\bibitem[Nair \& Hinton(2010)Nair and Hinton]{nair2010rectified}
Vinod Nair and Geoffrey~E Hinton.
\newblock Rectified linear units improve restricted boltzmann machines.
\newblock In \emph{Proceedings of the 27th international conference on machine
  learning (ICML-10)}, pp.\  807--814, 2010.

\bibitem[Neal(1994)]{neal}
Radford~M. Neal.
\newblock Priors for infinite networks (tech. rep. no. crg-tr-94-1).
\newblock \emph{University of Toronto}, 1994.

\bibitem[Neal(1995)]{neal1995bayesian}
Radford~M Neal.
\newblock \emph{BAYESIAN LEARNING FOR NEURAL NETWORKS}.
\newblock PhD thesis, University of Toronto, 1995.

\bibitem[Neyshabur et~al.(2015)Neyshabur, Tomioka, and
  Srebro]{Neyshabur2014InSO}
Behnam Neyshabur, Ryota Tomioka, and Nathan Srebro.
\newblock In search of the real inductive bias: On the role of implicit
  regularization in deep learning.
\newblock \emph{Proceeding of the international Conference on Learning
  Representations workshop track}, abs/1412.6614, 2015.

\bibitem[Novak et~al.(2018)Novak, Bahri, Abolafia, Pennington, and
  Sohl-Dickstein]{novak2018sensitivity}
Roman Novak, Yasaman Bahri, Daniel~A. Abolafia, Jeffrey Pennington, and Jascha
  Sohl-Dickstein.
\newblock Sensitivity and generalization in neural networks: an empirical
  study.
\newblock In \emph{International Conference on Learning Representations}, 2018.
\newblock URL \url{https://openreview.net/forum?id=HJC2SzZCW}.

\bibitem[Novak et~al.(2020)Novak, Xiao, Hron, Lee, Alemi, Sohl-Dickstein, and
  Schoenholz]{novak2020neural}
Roman Novak, Lechao Xiao, Jiri Hron, Jaehoon Lee, Alexander~A. Alemi, Jascha
  Sohl-Dickstein, and Samuel~S. Schoenholz.
\newblock Neural tangents: Fast and easy infinite neural networks in python.
\newblock In \emph{International Conference on Learning Representations}, 2020.
\newblock URL \url{https://github.com/google/neural-tangents}.

\bibitem[Olah et~al.(2017)Olah, Mordvintsev, and Schubert]{olah2017feature}
Chris Olah, Alexander Mordvintsev, and Ludwig Schubert.
\newblock Feature visualization.
\newblock \emph{Distill}, 2\penalty0 (11):\penalty0 e7, 2017.

\bibitem[Oord et~al.(2016)Oord, Dieleman, Zen, Simonyan, Vinyals, Graves,
  Kalchbrenner, Senior, and Kavukcuoglu]{oord2016wavenet}
Aaron van~den Oord, Sander Dieleman, Heiga Zen, Karen Simonyan, Oriol Vinyals,
  Alex Graves, Nal Kalchbrenner, Andrew Senior, and Koray Kavukcuoglu.
\newblock Wavenet: A generative model for raw audio.
\newblock \emph{arXiv preprint arXiv:1609.03499}, 2016.

\bibitem[Pascanu et~al.(2014)Pascanu, Dauphin, Ganguli, and
  Bengio]{Pascanu2014OnTS}
Razvan Pascanu, Yann~N Dauphin, Surya Ganguli, and Yoshua Bengio.
\newblock On the saddle point problem for non-convex optimization.
\newblock \emph{arXiv preprint arXiv:1405.4604}, 2014.

\bibitem[Poggio et~al.(2017)Poggio, Mhaskar, Rosasco, Miranda, and
  Liao]{Poggio2017}
Tomaso Poggio, Hrushikesh Mhaskar, Lorenzo Rosasco, Brando Miranda, and Qianli
  Liao.
\newblock Why and when can deep-but not shallow-networks avoid the curse of
  dimensionality: A review.
\newblock \emph{International Journal of Automation and Computing}, 14\penalty0
  (5):\penalty0 503--519, Oct 2017.
\newblock ISSN 1751-8520.
\newblock \doi{10.1007/s11633-017-1054-2}.
\newblock URL \url{https://doi.org/10.1007/s11633-017-1054-2}.

\bibitem[Poole et~al.(2016)Poole, Lahiri, Raghu, Sohl-Dickstein, and
  Ganguli]{poole2016exponential}
Ben Poole, Subhaneil Lahiri, Maithra Raghu, Jascha Sohl-Dickstein, and Surya
  Ganguli.
\newblock Exponential expressivity in deep neural networks through transient
  chaos.
\newblock In \emph{Advances In Neural Information Processing Systems}, pp.\
  3360--3368, 2016.

\bibitem[Qui{\~n}onero-Candela \& Rasmussen(2005)Qui{\~n}onero-Candela and
  Rasmussen]{quinonero2005unifying}
Joaquin Qui{\~n}onero-Candela and Carl~Edward Rasmussen.
\newblock A unifying view of sparse approximate gaussian process regression.
\newblock \emph{Journal of Machine Learning Research}, 6\penalty0
  (Dec):\penalty0 1939--1959, 2005.

\bibitem[Rahimi \& Recht(2007)Rahimi and Recht]{rahimi2007random}
Ali Rahimi and Ben Recht.
\newblock Random features for large-scale kernel machines.
\newblock In \emph{In Neural Infomration Processing Systems}, 2007.

\bibitem[Rasmussen \& Williams(2006)Rasmussen and
  Williams]{rasmussen2006gaussian}
Carl~Edward Rasmussen and Christopher~KI Williams.
\newblock \emph{Gaussian processes for machine learning}, volume~1.
\newblock MIT press Cambridge, 2006.

\bibitem[Rumelhart et~al.(1985)Rumelhart, Hinton, and
  Williams]{rumelhart1985learning}
David~E Rumelhart, Geoffrey~E Hinton, and Ronald~J Williams.
\newblock Learning internal representations by error propagation.
\newblock Technical report, California Univ San Diego La Jolla Inst for
  Cognitive Science, 1985.

\bibitem[Schoenholz et~al.(2017)Schoenholz, Gilmer, Ganguli, and
  Sohl-Dickstein]{schoenholz2016}
Samuel~S Schoenholz, Justin Gilmer, Surya Ganguli, and Jascha Sohl-Dickstein.
\newblock Deep information propagation.
\newblock \emph{ICLR}, 2017.

\bibitem[Silver et~al.(2017)Silver, Schrittwieser, Simonyan, Antonoglou, Huang,
  Guez, Hubert, Baker, Lai, Bolton, et~al.]{silver2017mastering}
David Silver, Julian Schrittwieser, Karen Simonyan, Ioannis Antonoglou, Aja
  Huang, Arthur Guez, Thomas Hubert, Lucas Baker, Matthew Lai, Adrian Bolton,
  et~al.
\newblock Mastering the game of go without human knowledge.
\newblock \emph{Nature}, 550\penalty0 (7676):\penalty0 354, 2017.

\bibitem[Simonyan et~al.(2014)Simonyan, Vedaldi, and
  Zisserman]{simonyan2013deep}
Karen Simonyan, Andrea Vedaldi, and Andrew Zisserman.
\newblock Deep inside convolutional networks: Visualising image classification
  models and saliency maps.
\newblock \emph{ICLR Workshop}, 2014.

\bibitem[Ti{\v{n}}o et~al.(2004)Ti{\v{n}}o, Cernansky, and
  Benuskova]{tino2004markovian}
Peter Ti{\v{n}}o, Michal Cernansky, and Lubica Benuskova.
\newblock Markovian architectural bias of recurrent neural networks.
\newblock \emph{IEEE Transactions on Neural Networks}, 15\penalty0
  (1):\penalty0 6--15, 2004.

\bibitem[Ti{\v{n}}o et~al.(2007)Ti{\v{n}}o, Hammer, and
  Bod{\'e}n]{tivno2007markovian}
Peter Ti{\v{n}}o, Barbara Hammer, and Mikael Bod{\'e}n.
\newblock Markovian bias of neural-based architectures with feedback
  connections.
\newblock In \emph{Perspectives of neural-symbolic integration}, pp.\  95--133.
  Springer, 2007.

\bibitem[Titsias(2009)]{titsias2009variational}
Michalis Titsias.
\newblock Variational learning of inducing variables in sparse gaussian
  processes.
\newblock In \emph{Artificial Intelligence and Statistics}, pp.\  567--574,
  2009.

\bibitem[van~der Wilk et~al.(2017)van~der Wilk, Rasmussen, and
  Hensman]{van2017convolutional}
Mark van~der Wilk, Carl~Edward Rasmussen, and James Hensman.
\newblock Convolutional gaussian processes.
\newblock In \emph{Advances in Neural Information Processing Systems 30}, pp.\
  2849--2858, 2017.

\bibitem[Vaswani et~al.(2017)Vaswani, Shazeer, Parmar, Uszkoreit, Jones, Gomez,
  Kaiser, and Polosukhin]{vaswani2017attention}
Ashish Vaswani, Noam Shazeer, Niki Parmar, Jakob Uszkoreit, Llion Jones,
  Aidan~N Gomez, {\L}ukasz Kaiser, and Illia Polosukhin.
\newblock Attention is all you need.
\newblock In \emph{Advances in Neural Information Processing Systems}, pp.\
  5998--6008, 2017.

\bibitem[Vershynin(2010)]{vershynin2010introduction}
Roman Vershynin.
\newblock Introduction to the non-asymptotic analysis of random matrices.
\newblock \emph{arXiv preprint arXiv:1011.3027}, 2010.

\bibitem[Werbos(1988)]{werbos1988generalization}
Paul~J Werbos.
\newblock Generalization of backpropagation with application to a recurrent gas
  market model.
\newblock \emph{Neural networks}, 1\penalty0 (4):\penalty0 339--356, 1988.

\bibitem[Williams(1997)]{williams1997}
Christopher~KI Williams.
\newblock Computing with infinite networks.
\newblock In \emph{Advances in neural information processing systems}, pp.\
  295--301, 1997.

\bibitem[Wilson et~al.(2016{\natexlab{a}})Wilson, Hu, Salakhutdinov, and
  Xing]{wilson2016b}
Andrew~G Wilson, Zhiting Hu, Ruslan~R Salakhutdinov, and Eric~P Xing.
\newblock Stochastic variational deep kernel learning.
\newblock In \emph{Advances in Neural Information Processing Systems}, pp.\
  2586--2594, 2016{\natexlab{a}}.

\bibitem[Wilson et~al.(2016{\natexlab{b}})Wilson, Hu, Salakhutdinov, and
  Xing]{wilson2016a}
Andrew~Gordon Wilson, Zhiting Hu, Ruslan Salakhutdinov, and Eric~P Xing.
\newblock Deep kernel learning.
\newblock In \emph{Artificial Intelligence and Statistics}, pp.\  370--378,
  2016{\natexlab{b}}.

\bibitem[Xiao et~al.(2017{\natexlab{a}})Xiao, Rasul, and
  Vollgraf]{xiao2017/online}
Han Xiao, Kashif Rasul, and Roland Vollgraf.
\newblock Fashion-mnist: a novel image dataset for benchmarking machine
  learning algorithms, 2017{\natexlab{a}}.

\bibitem[Xiao et~al.(2017{\natexlab{b}})Xiao, Bahri, Schoenholz, and
  Pennington]{46563}
Lechao Xiao, Yasaman Bahri, Sam Schoenholz, and Jeffrey Pennington.
\newblock Training ultra-deep cnns with critical initialization.
\newblock In \emph{NIPS Workshop}, 2017{\natexlab{b}}.

\bibitem[Xiao et~al.(2018)Xiao, Bahri, Sohl-Dickstein, Schoenholz, and
  Pennington]{xiao18a}
Lechao Xiao, Yasaman Bahri, Jascha Sohl-Dickstein, Samuel Schoenholz, and
  Jeffrey Pennington.
\newblock Dynamical isometry and a mean field theory of {CNN}s: How to train
  10,000-layer vanilla convolutional neural networks.
\newblock In \emph{Proceedings of the 35th International Conference on Machine
  Learning}, volume~80 of \emph{Proceedings of Machine Learning Research}, pp.\
   5393--5402, Stockholmsmässan, Stockholm Sweden, 10--15 Jul 2018. PMLR.
\newblock URL \url{http://proceedings.mlr.press/v80/xiao18a.html}.

\bibitem[Yang(2019)]{yang2018Scaling}
Greg Yang.
\newblock Scaling limits of wide neural networks with weight sharing: Gaussian
  process behavior, gradient independence, and neural tangent kernel
  derivation.
\newblock \emph{arXiv preprint arXiv:1902.04760}, 2019.

\bibitem[Yang \& Schoenholz(2018)Yang and Schoenholz]{yang2018deep}
Greg Yang and Sam~S. Schoenholz.
\newblock Deep {Mean} {Field} {Theory}: {Layerwise} {Variance} and {Width}
  {Variation} as {Methods} to {Control} {Gradient} {Explosion}.
\newblock \emph{ICLR Workshop}, February 2018.
\newblock URL \url{https://openreview.net/forum?id=rJGY8GbR-}.

\bibitem[Yang \& Schoenholz(2017)Yang and Schoenholz]{yang2017mean}
Greg Yang and Samuel Schoenholz.
\newblock Mean field residual networks: On the edge of chaos.
\newblock In \emph{Advances in neural information processing systems}, pp.\
  7103--7114, 2017.

\bibitem[Yang et~al.(2018)Yang, Pennington, Rao, Sohl-Dickstein, and
  Schoenholz]{yang2019mean}
Greg Yang, Jeffrey Pennington, Vinay Rao, Jascha Sohl-Dickstein, and Sam~S.
  Schoenholz.
\newblock A mean field theory of batch normalization.
\newblock \emph{ICLR}, February 2018.
\newblock URL \url{https://openreview.net/forum?id=rJGY8GbR-}.

\bibitem[Zeiler \& Fergus(2014)Zeiler and Fergus]{zeiler2014visualizing}
Matthew~D Zeiler and Rob Fergus.
\newblock Visualizing and understanding convolutional networks.
\newblock In \emph{European conference on computer vision}, pp.\  818--833.
  Springer, 2014.

\bibitem[Zhang et~al.(2017)Zhang, Bengio, Hardt, Recht, and
  Vinyals]{Zhang2016UnderstandingDL}
Chiyuan Zhang, Samy Bengio, Moritz Hardt, Benjamin Recht, and Oriol Vinyals.
\newblock Understanding deep learning requires rethinking generalization.
\newblock In \emph{International Conference on Learning Representations}, 2017.

\end{thebibliography}
\bibliographystyle{arxiv}

\appendix
\appendixpage
\addappheadtotoc

\section{Glossary}
        
    We use the following shorthands in this work:
    \begin{enumerate}
        \item NN - neural network;
        \item CNN - convolutional neural network;
        \item LCN - locally connected network, a.k.a. convolutional network without weight sharing;
        \item FCN - fully connected network, a.k.a. multilayer perceptron (MLP);
        \item GP - Gaussian process;
        \item X-GP - a GP equivalent to a Bayesian infinitely wide neural network of architecture X (\sref{sec GP equiv}).
        \item MC-(X-)-GP - a Monte Carlo estimate (\sref{sec:mcgp}) of the X-GP.
        \item Width, (number of) filters, (number of) channels represent the same property for CNNs and LCNs.
        \item Pooling - referring to architectures as ``with'' or ``without pooling'' means having a single global average pooling layer (collapsing the spatial dimensions of the activations $\h^{L+1}$) before the final linear FC layer giving the regression outputs $z^{L+1}$.
        \item Invariance and equivariance are always discussed w.r.t. translations in the spatial dimensions of the inputs.
    \end{enumerate}

\section{Additional figures}
    
    \begin{figure}[h]
        \centering
        \resizebox{\textwidth}{!}{
            \begin{tabular}{cM{60mm}M{60mm}}
                
                & \bf No Pooling & \bf Global Average Pooling \\
                
                \rotatebox[origin=c]{90}{\bf LCN} &  
                \includegraphics[width=0.46\textwidth]{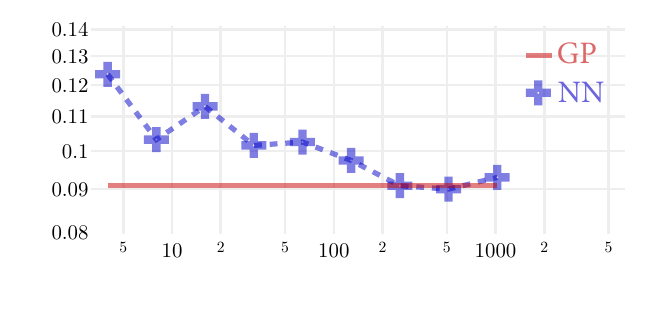} &
                \includegraphics[width=0.46\textwidth]{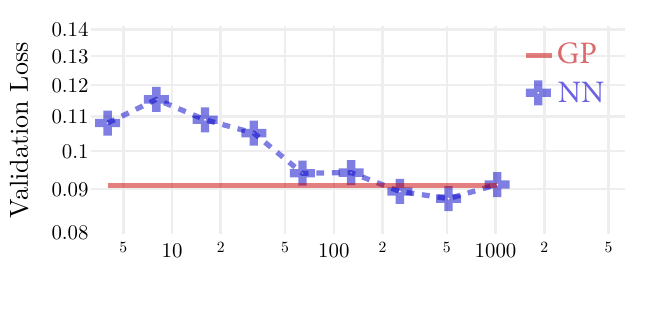} \\
                
                \rotatebox[origin=c]{90}{\bf CNN}  & 
                \includegraphics[width=0.46\textwidth]{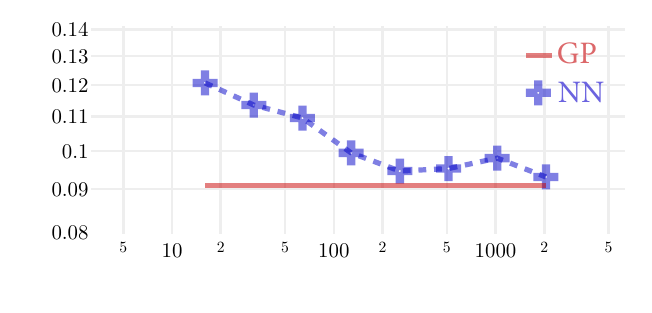} &
                \includegraphics[width=0.46\textwidth]{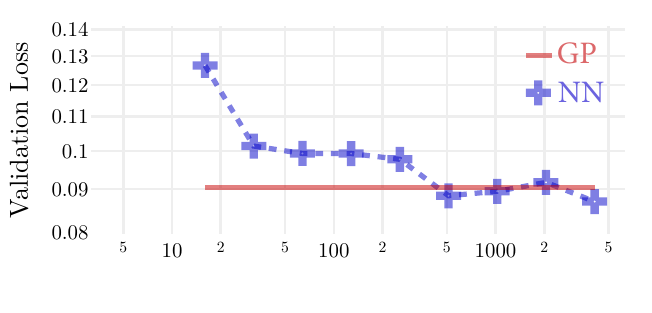} \\
                
                & \multicolumn{2}{c}{\#Channels}
            \end{tabular}
        }
        \caption{\textbf{Validation loss convergence.} Best validation loss (vertical axis) of \textcolor{bt}{\bf trained neural networks} (dashed line) as the number of channels increases (horizontal axis) approaches that of a respective \textcolor{rt}{\bf (MC-)CNN-GP} (solid horizontal line). See Figure \ref{fig:sgd_vs_gp} (b) for validation accuracy, Figure \ref{fig:sgd_to_gp_train_loss} for training loss and \sref{app:sgd_to_gp} for experimental details.}
        \label{fig:sgd_to_gp_valid_loss}
    \end{figure}

    \begin{figure}[h]
        \centering
        \resizebox{\textwidth}{!}{
            \begin{tabular}{cM{60mm}M{60mm}}
                
                 & \bf No Pooling & \bf Global Average Pooling \\
                
                \rotatebox[origin=c]{90}{\bf LCN} &  
                \includegraphics[width=0.46\textwidth]{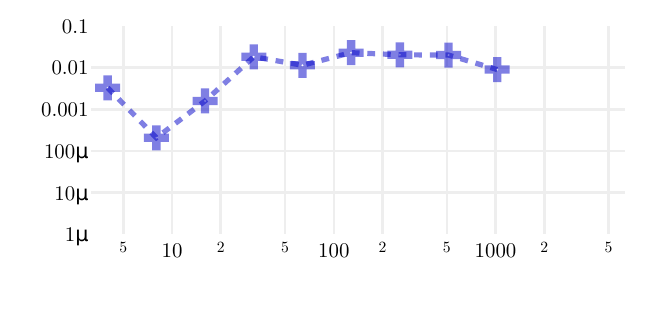} &
                \includegraphics[width=0.46\textwidth]{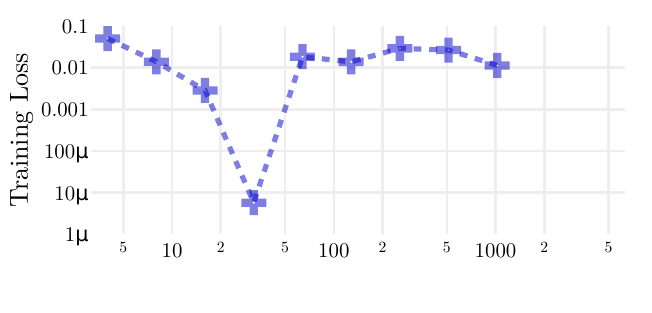} \\
                
                \rotatebox[origin=c]{90}{\bf CNN}  & 
                \includegraphics[width=0.46\textwidth]{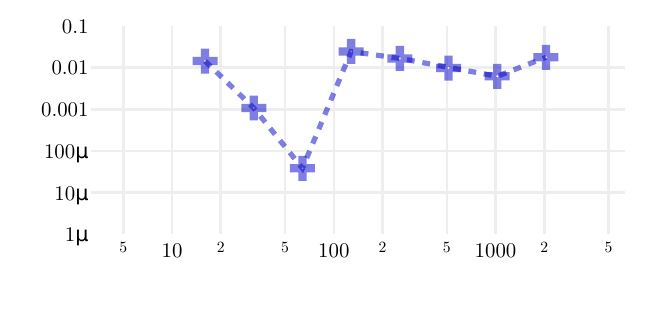} &
                \includegraphics[width=0.46\textwidth]{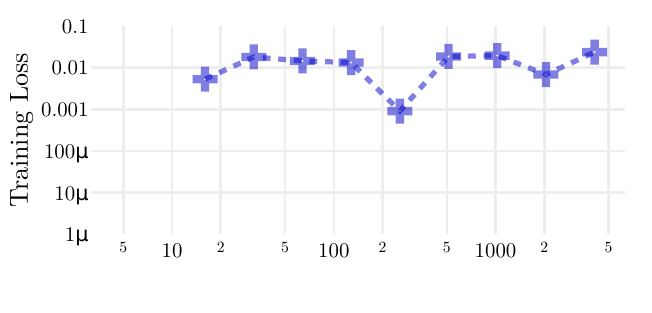} \\
                
                 & \multicolumn{2}{c}{\#Channels}
            \end{tabular}
        }
        \caption{\textbf{No underfitting in small models.} \textcolor{bt}{\bf Training loss} (vertical axis) of best (in terms of validation loss) neural networks as the number of channels increases (horizontal axis). While perfect $0$ loss is not achieved (but $100\%$ accuracy is), we observe no consistent improvement when increasing the capacity of the network (left to right). This eliminates underfitting as a possible explanation for why small models perform worse in Figure \ref{fig:sgd_vs_gp} (b). See Figure \ref{fig:sgd_to_gp_valid_loss} for validation loss and  \sref{app:sgd_to_gp} for experimental details.}
        \label{fig:sgd_to_gp_train_loss}
    \end{figure}
    \clearpage
    
    \begin{figure}[H]
        \centering
        \resizebox{\textwidth}{!}{
            \begin{tabular}{cM{60mm}M{60mm}}

                \rotatebox[origin=c]{90}{\bf MC-CNN-GP with pooling} &  
                \includegraphics[width=0.46\textwidth]{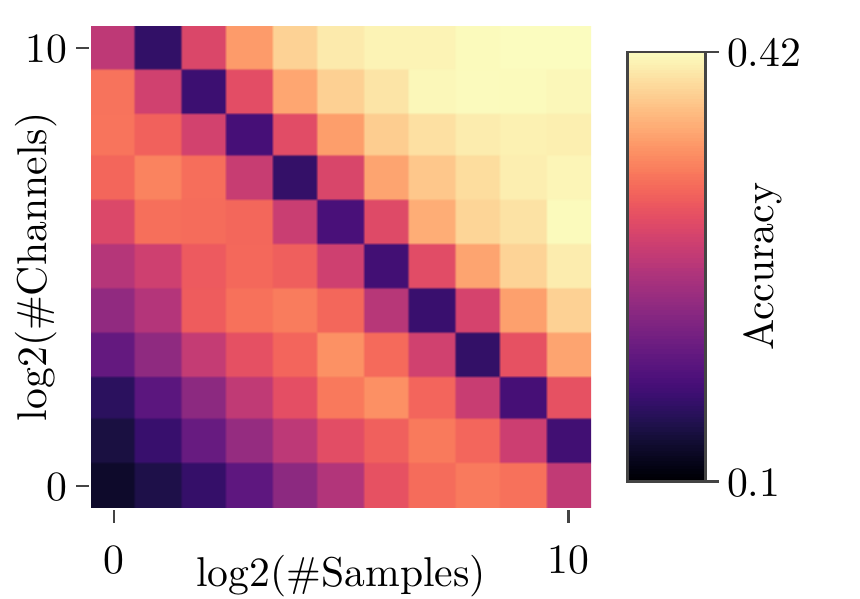} &
                \includegraphics[width=0.46\textwidth]{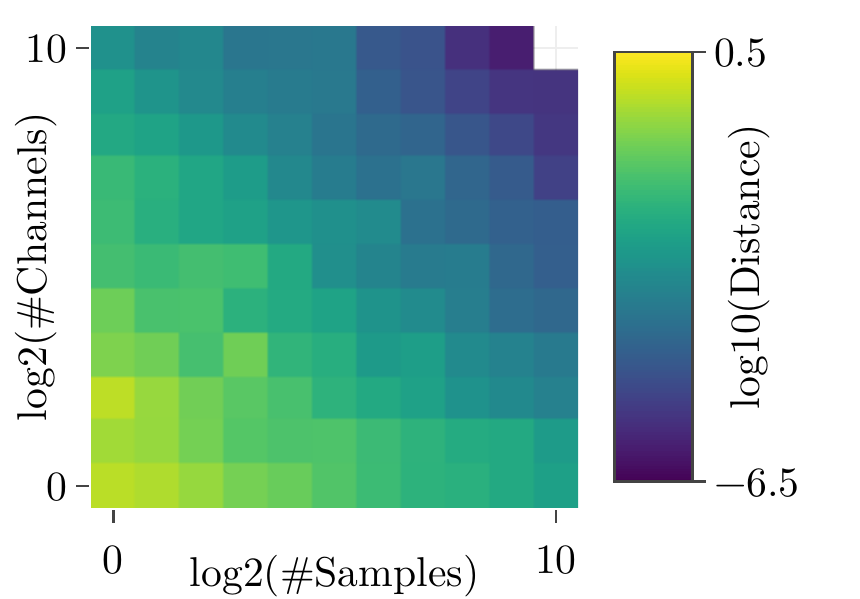} \\
                
                \rotatebox[origin=c]{90}{\bf MC-LCN-GP} &  
                \includegraphics[width=0.46\textwidth]{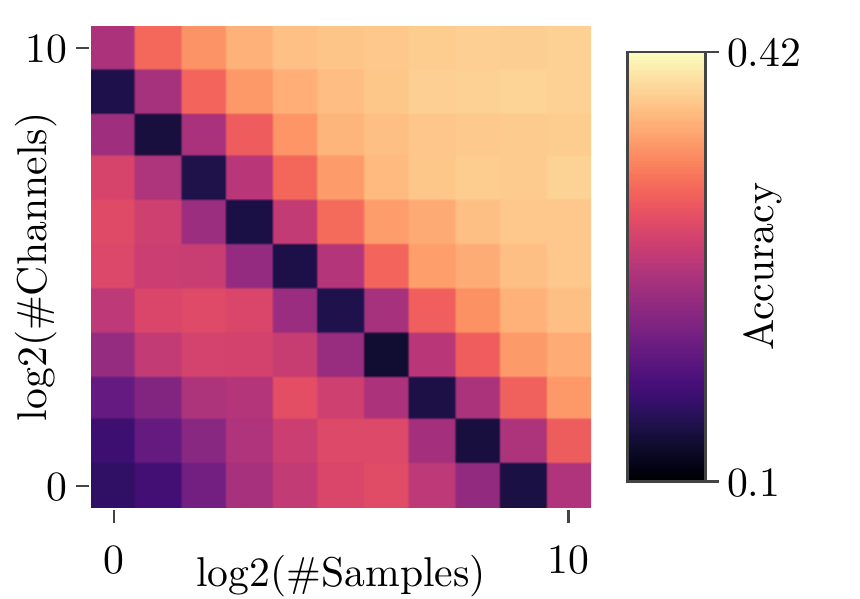} &
                \includegraphics[width=0.46\textwidth]{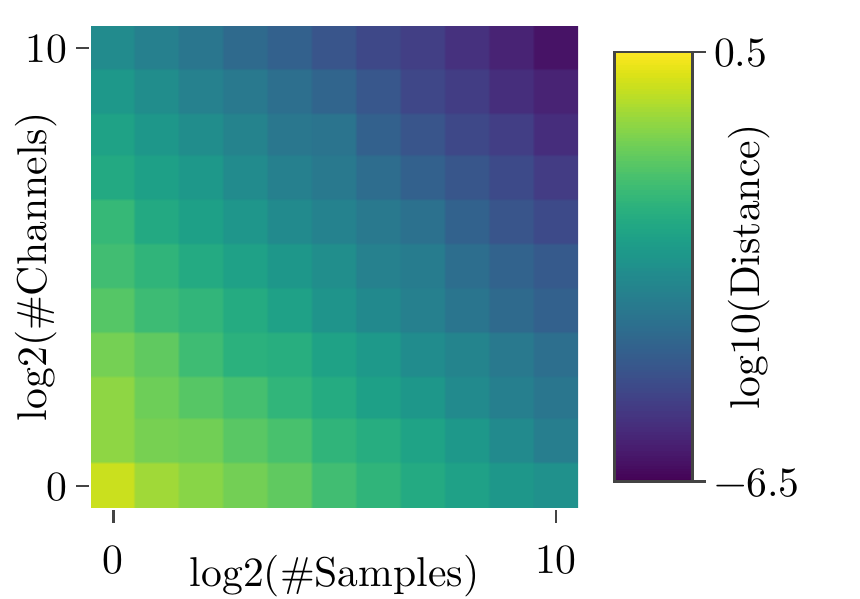} \\
                
                \rotatebox[origin=c]{90}{\bf MC-LCN-GP with Pooling} &  
                \includegraphics[width=0.46\textwidth]{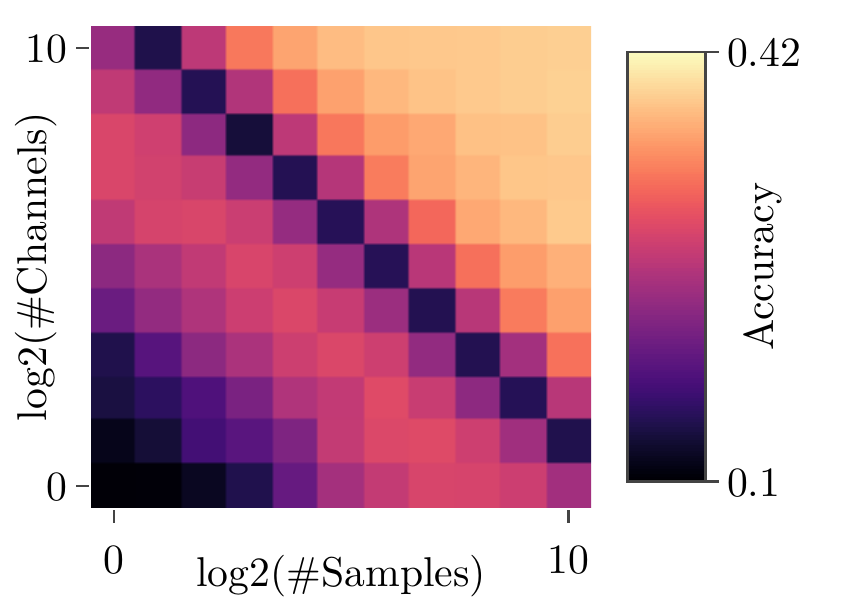} &
                \includegraphics[width=0.46\textwidth]{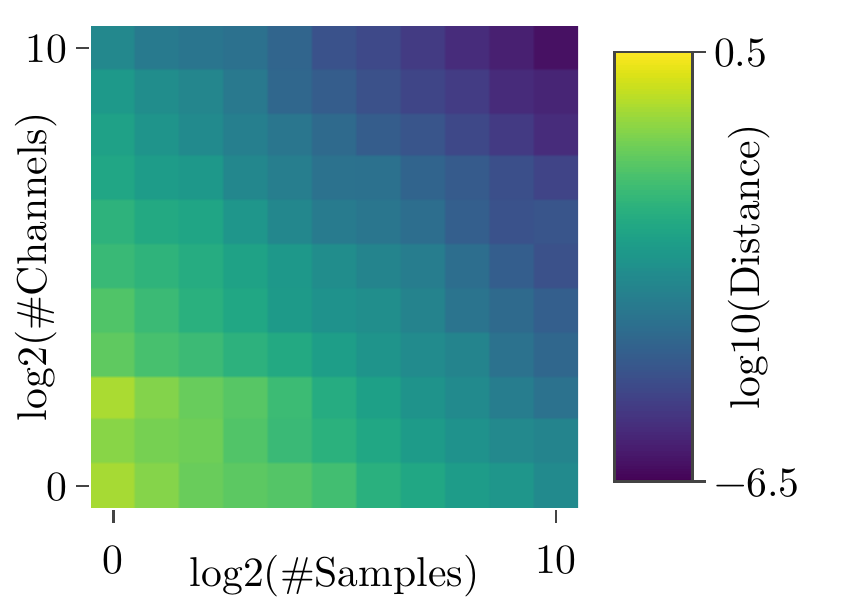} \\
                
                \rotatebox[origin=c]{90}{\bf MC-FCN-GP} &  
                \includegraphics[width=0.46\textwidth]{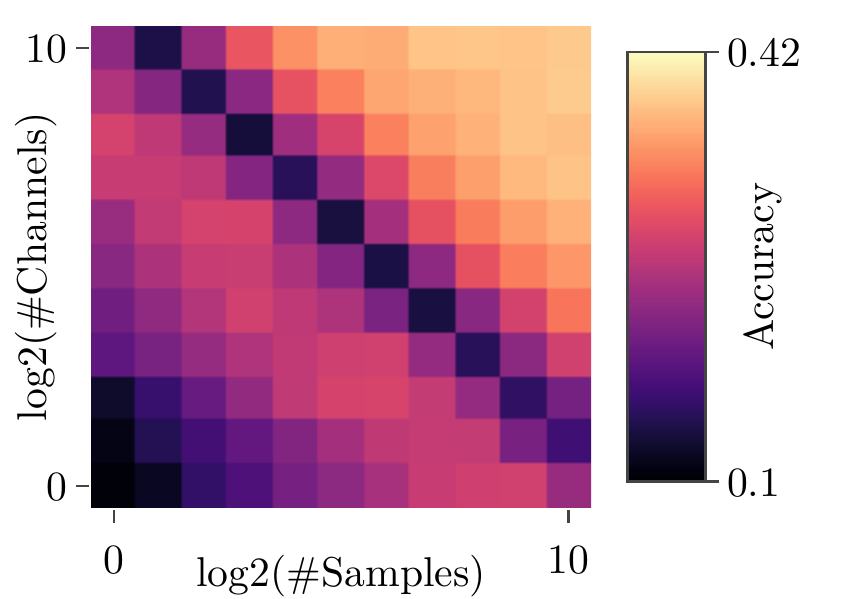} &
                \includegraphics[width=0.46\textwidth]{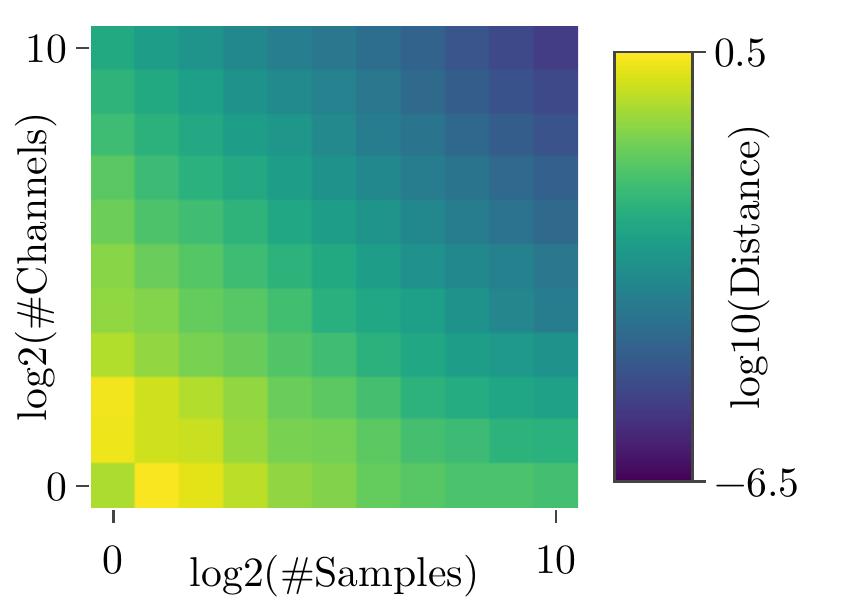} \\
            \end{tabular}
        }
        \caption{\textbf{Convergence of NN-GP Monte Carlo estimates.} As in Figure \ref{fig:mcgp_to_gp_cnn_gp_pool}, \textcolor{ot}{\bf validation accuracy} (left) of MC-GPs increases with $n \times M$ (i.e. width times number of samples), while the \textcolor{gt}{\bf distance} (right) to the the respective exact GP kernel (or the best available estimate in the case of CNN-GP with pooling, top row) decreases. We remark that when using shared weights, convergence is slower as smaller number of independent random parameters are being used. See \sref{app:mcgp} for experimental details.}
        \label{fig:mcgp_to_gp_other}
    \end{figure}

\section{Relationship to Deep Signal Propagation}\label{sec:deep_info}
        
    The recurrence relation linking the GP kernel at layer $l+1$ to that of layer $l$ following from \Eqref{eq:Ktop conv in p} (i.e. $\Kinf^{l+1} = \left(\C\circ \A\right)\left(\Kinf^{l}\right)$) is precisely the \emph{covariance map} examined in a series of related papers on signal propagation \citep{xiao18a, poole2016exponential, schoenholz2016, lee2018deep} (modulo notational differences; denoted as $F$, $\C$ or e.g. $\A\star\C$ in \citet{xiao18a}). In those works, the action of this map on hidden-state covariance matrices was interpreted as defining a dynamical system whose large-depth behavior informs aspects of trainability. In particular, as $l\to\infty$, $\Kinf^{l+1} = \left(\C\circ \A\right)\left(\K^{l}_{\infty}\right) \approx \Kinf^l \equiv K^*_{\infty}$, i.e. the covariance approaches a fixed point $K^*_{\infty}$. The convergence to a fixed point is problematic for learning because the hidden states no longer contain information that can distinguish different pairs of inputs. It is similarly problematic for GPs, as the kernel becomes pathological as it approaches a fixed point. Precisely, in both chaotic and ordered regimes, outputs of the GP become asymptotically identically correlated. Either of these scenarios captures no information about the training data in the kernel and makes learning infeasible. 
    
    This problem can be ameliorated by judicious hyperparameter selection, which can reduce the rate of exponential convergence to the fixed point. For hyperpameters chosen on a critical line separating two untrainable phases, the convergence rates slow to polynomial, and very deep networks can be trained, and inference with deep NN-GP kernels can be performed -- see Figure \ref{tab:deep_info}.
        
    \begin{figure}
        \centering
        \resizebox{\textwidth}{!}{
        \begin{tabular}{c M{10mm}M{10mm}M{10mm}M{10mm}M{45mm}}
            \toprule
            
            Depth $\to$  & $1$ & $10$ & $100$ & $1000$ & Phase boundary \\
            
            \midrule
            
            \rotatebox[origin=c]{90}{\bf CNN-GP} &  
            
            \includegraphics[width=0.1\textwidth]{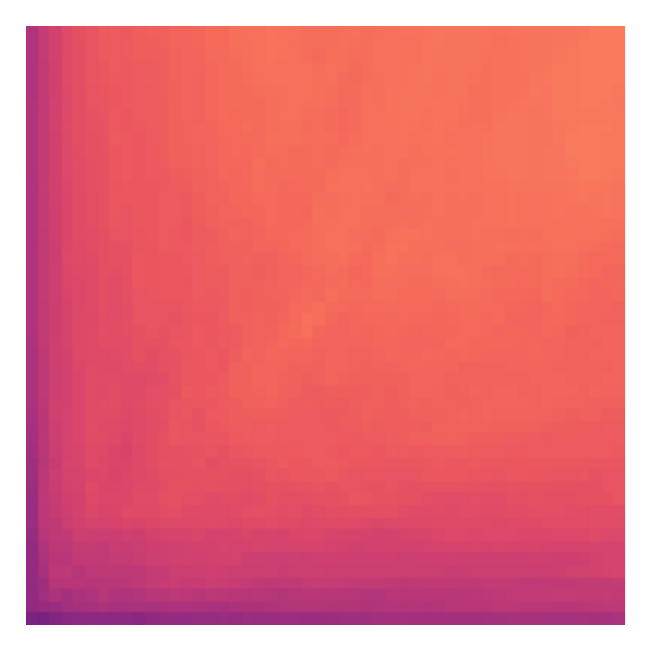} & 
            \includegraphics[width=0.1\textwidth]{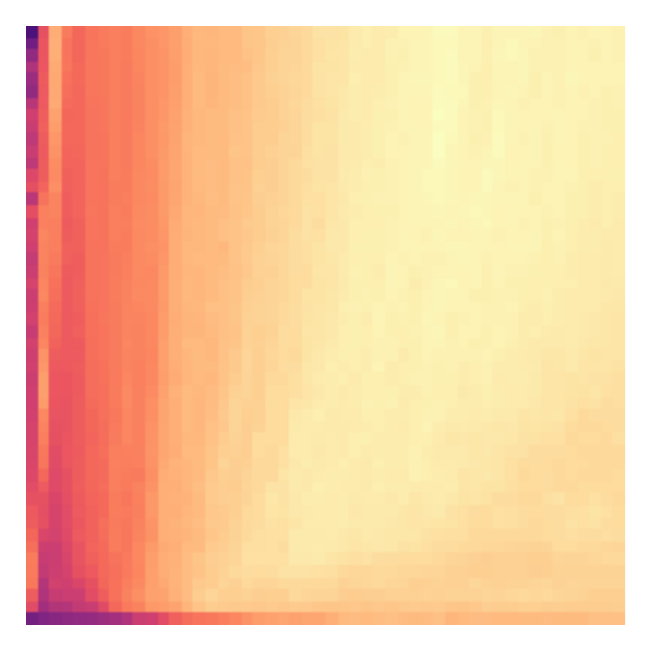} & 
            \includegraphics[width=0.1\textwidth]{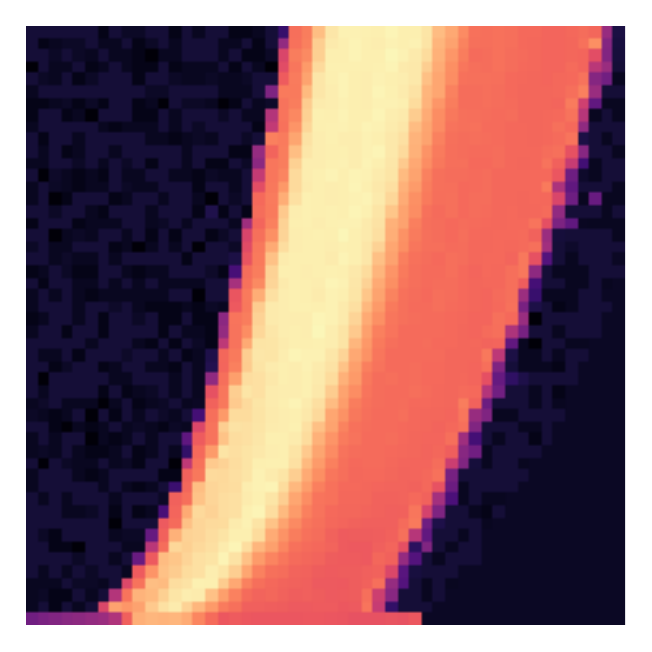} &
            \includegraphics[width=0.1\textwidth]{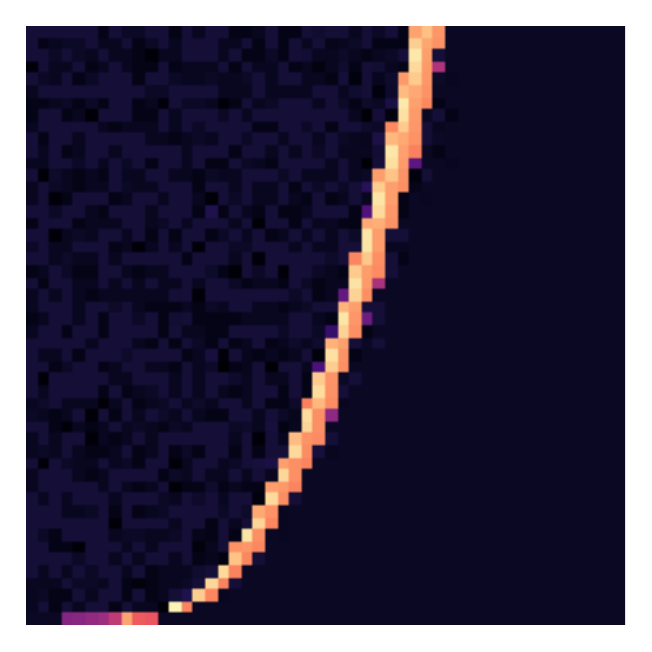} & 
            
            \multirow{2}{*}[1.8em]{\tiny\includegraphics[width=0.357\textwidth]{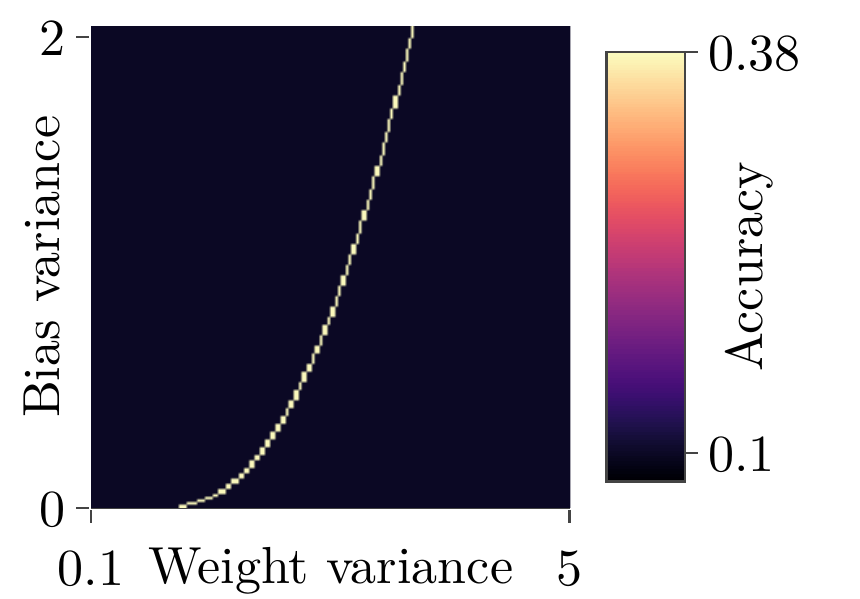}} \\

            \rotatebox[origin=c]{90}{\bf FCN-GP}  & 
            
            \includegraphics[width=0.1\textwidth]{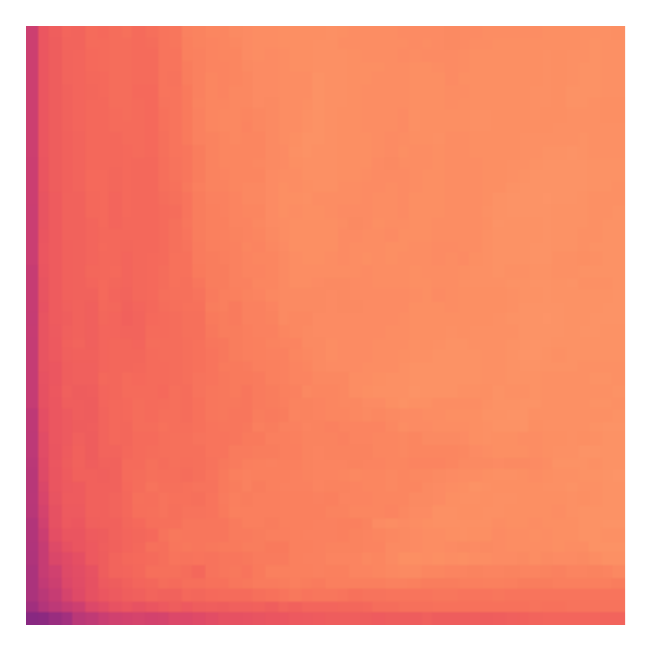} & 
            \includegraphics[width=0.1\textwidth]{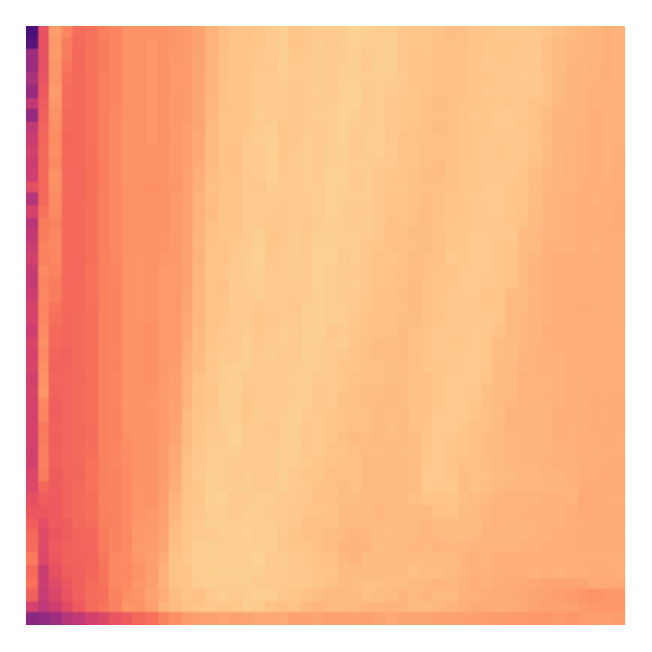} & 
            \includegraphics[width=0.1\textwidth]{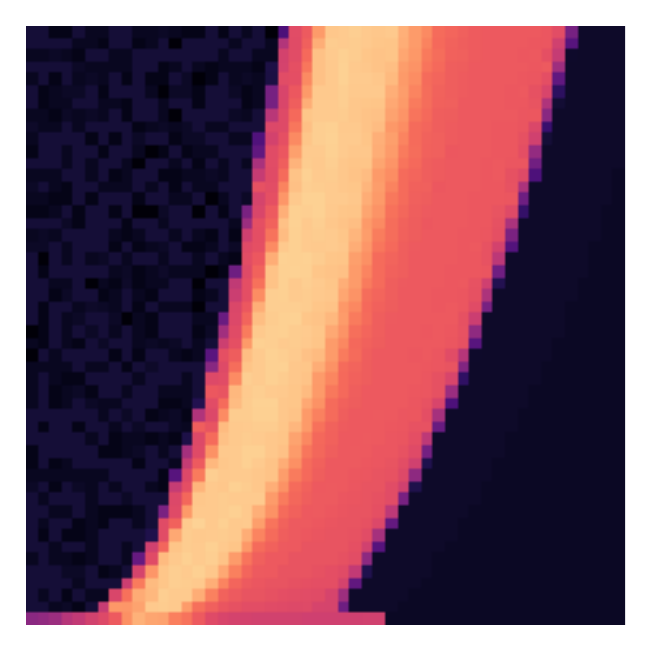} &
            \includegraphics[width=0.1\textwidth]{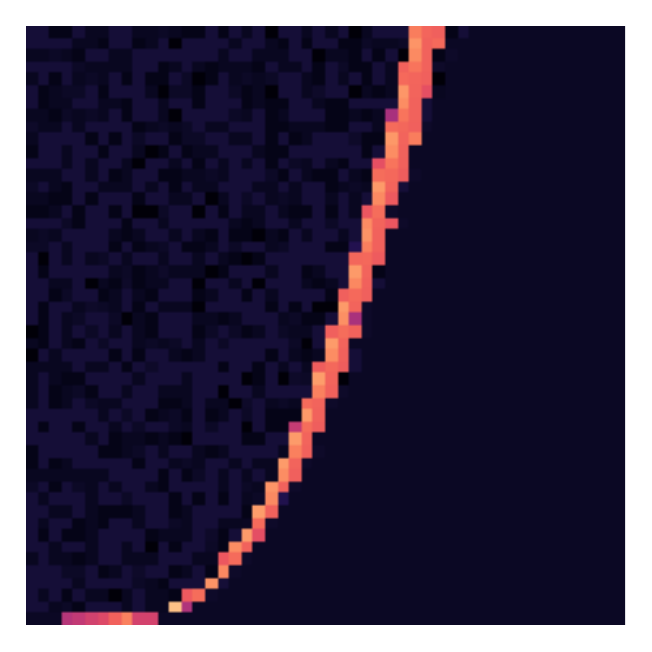} & 
            \\
            \\ \bottomrule
        \end{tabular}
        }
        \caption{\textbf{Large depth performance of NN-GPs.} \textcolor{ot}{\bf Validation accuracy} of CNN- and FCN-GPs as a function of weight ($\sigma_\omega^2$, horizontal axis) and bias ($\sigma_b^2$, vertical axis) variances. As predicted in \sref{sec:deep_info}, the regions of good performance concentrate around the critical line (phase boundary, right) as the depth increases (left to right). All plots share common axes ranges and employ the $\erf$ nonlinearity. See \sref{app:mcgp} for experimental details.
        }
        \label{tab:deep_info}
    \end{figure}

\section{Strided convolutions, average pooling in intermediate layers, higher dimensions}\label{app:strides_pooling}
        
    Our analysis in the main text can easily be extended to cover average pooling and strided convolutions (applied before the pointwise nonlinearity). Recall that conditioned on $\K^l$ the pre-activation $z_j^l\left(x\right)\in\R^{d_1}$ is a zero-mean multivariate Gaussian. Let $B\in\R^{d_2\times d_1}$ denote a linear operator. Then $B z_j^l\left(x\right)\in\R^{d_2}$ is a zero-mean Gaussian, and the covariance is
    \begin{align}
        \mathbb E_{\left\{\omega^l, b^l\right\}} \left[\left(B z_{j}^l\left(x\right)\right) \left(B z_{j}^l\left(x'\right)\right)^T \Big| \K^l\right] = B\mathbb E_{\left\{\omega^l, b^l\right\}} \left[z_{j}^l\left(x\right) z_{j}^l\left(x'\right)^T\Big| \K^l\right] B^T. 
    \end{align}
    One can easily see that $\left\{B z_j^l \big| \K^l \right\}_{j}$ are i.i.d. multivariate Gaussian as well.
    
    {\bf Strided convolution}. Strided convolution is equivalent to a non-strided convolution composed with subsampling. Let $s\in\mathbb N$ denote size of the stride. Then the strided convolution is equivalent to choosing $B$ as follows: $B_{ij} = \delta(is -j)$ for $i\in \left\{0, 1, \dots \left({d_2-1}\right)\right\}$. 
    
    {\bf Average pooling}. Average pooling with stride $s$ and window size $ws$ is equivalent to choosing $B_{ij}=1/ws$ for $i = 0, 1, \dots \left({d_2-1}\right)$ and $j= is, \dots , \left(is+ws-1\right)$. 
    
    {\bf ND convolutions.} Note that our analysis in the main text (1D) easily extends to higher-dimensional convolutions by replacing integer pixel indices and sizes $d, \alpha, \beta$ with tuples (see also Figure  \ref{fig:cnn_diagram}). In \Eqref{eq weighted sum} $\beta$ values would have to span the hypercube $\left[\pm k\right]^N = \left\{-k,\dots,k\right\}^N$ in the pre-activation definition. Similarly, in \sref{sec:collapsing} the normalizing factor $d$ ($d^2$) should be the product (squared) of its entries, and summations over $\alpha, \beta$ should span the $[d_0]\times\dots\times\left[d_N\right]$ hypercube as well. The definition of the kernel propagation operator $\A$ in \Eqref{eq:A_def} will remain exactly the same, so long as $\beta$ is summed over the hypercube, and the variance weights remain respectively normalized $\sum_\beta v_\beta = 1$.

\section{Review of exact Bayesian regression with GPs}\label{sec:bayesian regression}
        
    Our discussion in the paper has focused on model \emph{priors}. A crucial benefit we derive by mapping to a GP is that Bayesian inference is straightforward to implement and can be done \emph{exactly} for regression \citep[chapter 2]{rasmussen2006gaussian}, requiring only simple linear algebra.  Let $\X$ denote training inputs $x_1, ..., x_{\left|\X\right|}$, $\mathbf{t}^{T} = \left(t_1, ..., t_{\left|\X\right|}\right)$ training targets, and collectively $\D$ for the training set. The integral over the posterior can be evaluated analytically to give a posterior predictive distribution on a test point $\h_*$ which is Gaussian, $\left(z^{*} | \D, \h^{*}\right) \sim \N\left(\mu_*, \sigma^2_*\right)$, with
    \begin{align}
        \mu_* &= \Ktop\left(x^*, \X\right)^T \left(\Ktop\left(\X, \X\right) + \sigma^2_{\varepsilon} \mathbb{I}_{\left|\X\right|}\right)^{-1} \mathbf{t}, \\
        \sigma^2_* &= \Ktop\left(x^*, x^*\right) - \Ktop\left(x^*, \X\right)^T \left( \Ktop\left(\X, \X\right) +  \sigma^2_{\varepsilon} \mathbb{I}_{\left|\X\right|}\right)^{-1}  \Ktop\left(x^*, \X\right).
    \end{align}
    We use the shorthand $\Ktop\left(\X, \X\right)$ to denote the $\left|\X\right| \times \left|\X\right|$ matrix formed by evaluating the GP covariance on the training inputs, and likewise $\Ktop\left(x^*, \X\right)$ is a $\left|\X\right|$-length vector formed from the covariance between the test input and training inputs. Computationally, the costly step in GP posterior predictions comes from the matrix inversion, which in all experiments were carried out exactly, and typically scales as $\mathcal{O}\left(\left|\X\right|^3\right)$ (though algorithms scaling as  $\mathcal{O}\left(\left|\X\right|^{2.4}\right)$ exist for sufficiently large matrices). Nonetheless, there is a broad literature on approximate Bayesian inference with GPs which can be utilized for efficient implementation \citep[chapter 8]{rasmussen2006gaussian}; \citep{quinonero2005unifying, titsias2009variational}.

\section{Equivalence between randomly initialized NNs and GPs}\label{sec:convergence of kernel}
        
    In this section, we present two different approaches, the {\it sequential limit} (\sref{sec:sequentialProof}) and {\it simultaneous limit} (\sref{sec:uniform-limit}), to illustrate the relationship between many-channels Bayesian CNNs and GPs.
    
    {\bf Sequential limit} (\sref{sec:sequentialProof}) involves taking the infinite channel limit in hidden layers in a sequence, starting from bottom (closest to inputs) layers and going upwards (to the outputs), i.e. $n^1\to \infty, \dots, n^L\to\infty$. Note that this approach in fact only gives intuition into constructing a GP using a NN architecture to define its covariance, and does not provide guarantees on actual convergence of large but finite Bayesian CNNs to GPs (which is of most practical interest), nor does it guarantee the existence of the specified GP on a given probability space. However, it has the following benefits:
    \begin{enumerate}
        \item Weak assumptions on the NN activation function $\phi$ and on the distribution of the NN parameters. 

        \item The arguments can be easily extended to more complicated network architectures, e.g. architectures with max pooling, dropout, etc. 
        
        \item A straightforward and intuitive way to compute the covariance of the Gaussian process without diving into mathematical details. 
    \end{enumerate}

    {\bf Simultaneous limit} (\sref{sec:uniform-limit}) considers growing the number of channels in hidden layers uniformly, i.e. $\min\left\{n^1,\dots,n^L\right\}\to\infty$. This approach establishes convergence of finite channel Bayesian CNNs to GPs and is thus a more practically relevant result. However, it makes stronger assumptions, and the proof is more involved.
    
    We highlight that the GPs obtained by the two approaches are identical.
    
    In both sections, we only provide the arguments for CNNs. It is straightforward (and in fact  simpler) to extend them to LCNs and FCNs. Indeed, an FCN is a particular case of a CNN where the inputs and filters have singular spatial dimensions ($d=1$, $k=0$). For LCNs, the proof goes through in an identical fashion if we replace $\A$ with $\A^{\text{LCN}}$ defined as $\left[\A^{\text{LCN}}\left(\K\right)\right]_{\alpha, \alpha'}\left(x, x
    '\right) \equiv \delta_{\alpha, \alpha'}\left[\A\left(\K\right)\right]_{\alpha, \alpha'}\left(x, x'\right)$.

    \subsection{Setup}\label{subsection:setup}
        
        {\bf Probability space.} Let $\ssp$ be a collection of countably many mutually independent random variables (R.V.s) defined on a   probability space $(\Omega, \mathcal F, P)$, where $\mathcal F$ is a product Borel $\sigma$-algebra and $P$ is the probability measure. Here $\ssp \equiv \ssw \cup \ssb\cup \ssh$ is the collection of parameters  used to define neural networks:
        \begin{enumerate}[(i)]
            \item \textbf{Weights.} $\ssw = \bigcup_{l\in\mathbb N} \ssw^l$  and 
                $\ssw^l = \left\{\omega^l_{i j, \beta}: i, j\in\mathbb N, \beta \in [\pm k]\right\}$, where $[\pm k]\equiv [-k, k]\cap \mathbb Z$. We assume $\omega^l_{i j, \beta}$ are i.i.d. R.V.s  with mean zero and finite variance $0<\sigma_\omega^2<\infty$ (note the lack of scaling by input dimensionality, compensated for later in Equations \ref{eq weighted sum inf} and \ref{eq weighted sum t}; see also similar notation used in \citet{matthews2018b_arxiv}). When $l=0$, we further assume they are Gaussian distributed.  
            \item \textbf{Biases}. $\ssb = \bigcup_{l\in\mathbb N} \ssb^l$  and 
                $\ssb^l = \left\{b^l_{j}:  j\in\mathbb N \right\}$. We assume $b^l_j$ are i.i.d. Gaussian with mean zero and 
                variance $0\leq  \sigma_b^2<\infty$.  
            \item \textbf{Place-holder.} $\ssh$ is a place-holder to store extra (if needed) R.V.s , e.g. parameters coming from the final dense layer. 
        \end{enumerate}
        
        {\bf Inputs.} We will consider a fixed $\X \subseteq \R^{n^0\times d}$ to denote the inputs, with input channel count $n^0$, number of pixels $d$. Assume $x\neq 0$ for all $x\in \X$ and $\left|\X\right|$, the cardinality of the inputs, is finite.
        
        However, our results can be straightforwardly extended to a countably-infinite input indexing spaces $\X$ for certain topologies via an argument presented in \citet[section 2.2]{matthews2018b_arxiv}, allowing to infer weak convergence on $\X$ from convergence on any finite subset (which is the case we consider in this text; see also \citet[page 19]{billingsley2013convergence} for details). For this reason, as in \citet{matthews2018b_arxiv}, weak convergence of  countably-infinite stochastic processes will be considered with respect to the topology generated by the following metric:
        $$
        \nu\left(s, s'\right) \equiv \sum_{k=1}^{\infty}2^{-k}\min\left(1, \left|s_k - s_k '\right|\right),\quad \forall s, s' \in \R^{\mathbb{N}}.
        $$
        
        \textbf{Notation, shapes, and indexing.} We adopt the notation, shape, and indexing convention similar to \sref{sec:preliminaries}, which the reader is encouraged to review. We emphasize that whenever an index is omitted, the variable is assumed to contain all possible entries along the respective dimension (e.g. whole $\X$ if $x$ is omitted, or all $n^l$ channels if the channel $i$ is omitted).

    \subsection{Preliminary}
        
        We will use the following well-known theorem. 
        \begin{theorem}
            \label{theorem:weak converge}
            Let $X, \left\{X_n\right\}_{n\in\mathbb{N}}$ be R.V.s in $\R^m$. The following are equivalent:
            \begin{enumerate}[(i)]
                \item $X_n\overset{\D}{\to} X$ (converges in distribution / converges weakly),  
                
                \item (Portmanteau Theorem) For all bounded continuous function $f: \R^m \to \R$, 
                    \begin{align}
                        \lim_{n\to\infty}\mathbb E \left[f(X_n) \right]= \mathbb E \left[f(X)\right], 
                    \end{align}
                
                \item (L\' evy's Continuity Theorem) The characteristic functions of $X_n$, i.e. $\mathbb E\left[e^{\I t^T X_n}\right],$ converge to those of $X$ pointwise, i.e. for all $t \in \R^m$,   
                    \begin{align}
                        \lim_{n\to\infty} \mathbb E \left[e^{\I t^T X_n}\right] =  \mathbb E \left[e^{\I t^T X}\right],  
                    \end{align}
                where $\I$ denotes the imaginary unit.
            \end{enumerate}
        \end{theorem}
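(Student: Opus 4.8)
The plan is to prove the three-way equivalence in two halves: (i)~$\Leftrightarrow$~(ii), which is the (multivariate) Portmanteau theorem and reduces to approximation arguments once a working definition of weak convergence is fixed, and (ii)~$\Leftrightarrow$~(iii), whose forward direction is trivial and whose reverse direction is Lévy's continuity theorem proper. I would take as the definition of $X_n \overset{\D}{\to} X$ the pointwise convergence $F_{X_n}(x) \to F_X(x)$ of joint distribution functions at every continuity point $x$ of $F_X$; if instead (ii) is declared to be the definition, the (i)~$\Leftrightarrow$~(ii) step is vacuous and only (ii)~$\Leftrightarrow$~(iii) needs work.

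For (ii)~$\Rightarrow$~(i): fix a continuity point $x$ of $F_X$ and $\eps > 0$. Since $x$ is a continuity point, the orthant boundary $\partial\{y : y \le x\}$ is $P_X$-null, so I can sandwich $\1_{\{y \le x\}}$ between bounded continuous functions $g_- \le \1_{\{y \le x\}} \le g_+$ with $\E[g_+(X)] - \E[g_-(X)] < \eps$; applying (ii) to $g_\pm$ and letting $\eps \to 0$ forces $F_{X_n}(x) \to F_X(x)$. For (i)~$\Rightarrow$~(ii): given bounded continuous $f$ with $\sup|f| \le M$, first note that $\{X_n\}$ is tight, because for any $\eps$ one can choose a box $B$ with faces on continuity hyperplanes of $F_X$ and $P(X \notin B) < \eps$, so that $P(X_n \notin B) < 2\eps$ eventually by c.d.f. convergence at the corners. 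Partition $B$ into finitely many small rectangles, again with faces on continuity hyperplanes, on each of which $f$ oscillates by less than $\eps$ (uniform continuity of $f$ on the compact $\overline{B}$), replace $f$ by the associated simple function $\tilde f$, and combine $\E[\tilde f(X_n)] \to \E[\tilde f(X)]$ (a finite sum of c.d.f. values at continuity points) with the uniform estimates $|\E[f(X_n)] - \E[\tilde f(X_n)]| = O(M\eps)$ and likewise for $X$.

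The implication (ii)~$\Rightarrow$~(iii) is immediate, since for each fixed $t$ the maps $x \mapsto \cos(t^T x)$ and $x \mapsto \sin(t^T x)$ are bounded and continuous. For the converse (iii)~$\Rightarrow$~(ii) I would argue as follows. The coordinatewise characteristic functions of $X_n$ converge pointwise to those of $X$, which are continuous at the origin with value $1$. Plugging this into the one-dimensional Lévy inequality
\[
\mu\bigl(\{\, x \in \R : |x| > 2/\delta \,\}\bigr) \;\le\; \frac{1}{\delta}\int_{-\delta}^{\delta}\bigl(1 - \operatorname{Re}\hat\mu(t)\bigr)\,dt,
\]
valid for every probability measure $\mu$ on $\R$ with characteristic function $\hat\mu$, applied to the laws of the coordinates $(X_n)_j$, and using dominated convergence to make the right-hand side uniformly small in $n$ for $\delta$ small, yields tightness of $\{X_n\}$. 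By Prokhorov's theorem every subsequence has a further subsequence converging weakly to some $Y$; by the easy composition (i)~$\Rightarrow$~(ii)~$\Rightarrow$~(iii) established above, the characteristic function of $Y$ equals $\lim_n \E[e^{\I t^T X_n}] = \E[e^{\I t^T X}]$, so $Y$ has the same law as $X$ by uniqueness of the Fourier transform of a finite Borel measure. Since every subsequence sub-converges weakly to the law of $X$, so does the whole sequence, i.e. (i), hence (ii), holds.

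The main obstacle is precisely the tightness step inside (iii)~$\Rightarrow$~(ii): everything else is routine bookkeeping with continuity points and truncations, but ruling out escape of mass to infinity from nothing more than pointwise convergence of characteristic functions is the crux of Lévy's theorem and genuinely uses the inequality above together with continuity of the limiting characteristic function at the origin. In the present application that continuity is automatic --- the candidate limit is the honest random variable $X$ --- so one may equally just invoke the theorem (e.g.\ \citet{billingsley2013convergence}); the harder general form, where one is only given that the limit of characteristic functions is continuous at $0$, is not needed here.
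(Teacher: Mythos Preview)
The paper does not prove this theorem at all: it is stated as a ``well-known theorem'' and used as a black box (the phrasing immediately preceding the statement is ``We will use the following well-known theorem''). Your proposal is therefore not comparable to any argument in the paper---you have supplied a full textbook-style proof where the authors simply cite the result.

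That said, your sketch is correct and follows the standard route (Portmanteau via sandwiching indicators by bounded continuous functions and simple-function approximation on a tight box; L\'evy's direction via the tail-bound inequality, tightness, Prokhorov, and Fourier uniqueness). The only place worth a second look is your (i)~$\Rightarrow$~(ii) step: deducing tightness of $\{X_n\}$ from convergence of the joint c.d.f.\ at continuity corners of a box requires a little care in dimension $m>1$, since $P(X_n\in B)$ is an alternating sum of $2^m$ c.d.f.\ values and you must ensure \emph{all} corners are continuity points---but this is easily arranged since the continuity points are dense, and you already flag this. None of this is needed for the paper's purposes.
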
      
        Using the equivalence between (i) and (iii), it is straightforward to show that
        \begin{theorem}[Cramer-Wold, (\citet{billingsley1995prob}, Theorem 29.4)]
            \begin{align}
                X_n\overset{\D}{\to} X   \quad  \Longleftrightarrow  \quad  a^T X_n\overset{\D}{\to} a^T X \quad
                \textrm{for all } a\in\R^m .  
            \end{align}
        \end{theorem}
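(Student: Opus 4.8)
The plan is to deduce both directions of the equivalence directly from Theorem \ref{theorem:weak converge}: the Portmanteau characterization (part (ii)) gives the easy implication, and Lévy's continuity theorem (the equivalence of (i) and (iii)) gives the reverse one. No new machinery is needed beyond what is already stated.

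First I would handle the forward implication $X_n \overset{\D}{\to} X \Rightarrow a^T X_n \overset{\D}{\to} a^T X$. Fix $a \in \R^m$. Since $x \mapsto a^T x$ is continuous from $\R^m$ to $\R$, for any bounded continuous $g : \R \to \R$ the composite $x \mapsto g(a^T x)$ is bounded and continuous on $\R^m$; applying part (ii) of Theorem \ref{theorem:weak converge} to $X_n \overset{\D}{\to} X$ with this test function yields $\E[g(a^T X_n)] \to \E[g(a^T X)]$. As $g$ was an arbitrary bounded continuous function on $\R$, this says exactly $a^T X_n \overset{\D}{\to} a^T X$. (This is just the continuous mapping theorem, but writing it out this way avoids invoking anything not already in the excerpt.)

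Next I would prove the reverse implication. Assume $a^T X_n \overset{\D}{\to} a^T X$ for every $a \in \R^m$. By Lévy's continuity theorem in $\R^m$ (equivalence of (i) and (iii)), to conclude $X_n \overset{\D}{\to} X$ it suffices to show $\E[e^{\I t^T X_n}] \to \E[e^{\I t^T X}]$ for each fixed $t \in \R^m$. So fix $t$ and apply the hypothesis with $a = t$: the real-valued random variables $t^T X_n$ converge in distribution to $t^T X$. Applying the equivalence (i) $\Leftrightarrow$ (iii) in dimension one to this scalar sequence, its characteristic functions converge pointwise on $\R$; in particular their values at the point $1 \in \R$ converge, which reads $\E[e^{\I (t^T X_n)}] \to \E[e^{\I (t^T X)}]$. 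Since $t \in \R^m$ was arbitrary, this is precisely the hypothesis of Lévy's theorem in $\R^m$, and we conclude $X_n \overset{\D}{\to} X$.

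As the text announces, this deduction is straightforward, so I do not expect a genuine obstacle; the only point that needs a moment's care is the bookkeeping between the one-dimensional and $m$-dimensional versions of Lévy's theorem — namely noting that the characteristic function of the scalar $t^T X_n$ evaluated at $s \in \R$ equals the $m$-dimensional characteristic function of $X_n$ evaluated at $s t$, so that the scalar statement at $s = 1$ is exactly the vector statement at argument $t$. Everything else is a direct application of the already-stated equivalences.
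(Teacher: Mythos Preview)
Your proof is correct and matches the paper's approach: the paper does not spell out a proof but simply remarks that the result follows from the equivalence of (i) and (iii) in Theorem \ref{theorem:weak converge}, which is exactly how you handle the reverse implication. Your forward implication goes through (ii) (Portmanteau/continuous mapping) rather than (iii), but this is a cosmetic difference---one could equally observe that the characteristic function of $a^T X_n$ at $s$ is $\E[e^{\I (sa)^T X_n}]$ and invoke (iii) directly---so the two arguments are essentially the same.
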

        In particular, 
        \begin{align}\label{eq:weak-convergence-gaussian}
            X_n\overset{\D}{\to} \N \left(0, \Sigma\right)   \quad  \Longleftrightarrow  \quad  a^T X_n\overset{\D}{\to} \N(0, a^T \Sigma a)  \quad
            \textrm{for all } a\in\R^m . 
        \end{align}

    \subsection{Sequential limit}\label{sec:sequentialProof}
        
        In this section, we give intuition into constructing a Gaussian process using an infinite channel CNN with the limits taken sequentially. Informally, our argument amounts to showing that if the inputs $z^{l-1}$ to the layer $l$ are a multivariate Gaussian with covariance $\A\left(K^l\right)\otimes I_{n^{l}}$, then it's outputs $z^{l}$ converge in distribution to a multivariate Gaussian with covariance $\A\left(K^{l+1}\right)\otimes I_{n^{l+1}}$ as $n^{l}\to\infty$. This ``allows'' us to sequentially replace $z^1, z^2,\dots, z^L$ with their respective limiting Gaussian R.V.s as we take $n^1\to\infty, n^2\to\infty,\dots,n^L\to\infty$. However, since each convergence only holds in distribution and does not guarantee the existence of the necessary GPs on a given probability space, what follows merely gives intuition into understanding the relationship between wide Bayesian neural networks and GPs (contrary to \sref{sec:uniform-limit}, which presents a rigorous convergence proof for $\min\left\{n^1,\dots,n^L\right\}\to\infty$).
        
        Let $\Psi_1$ denote the space of functions with a uniformly bounded second moment, i.e. having
        \begin{align}\label{eq:uniform-integrable}
            C_2\left(R, \phi\right) \equiv \sup_{1/R \leq r \leq R}\E_{x \sim \N\left(0, r\right)}\left|\phi(x)\right|^2 <\infty\,  \quad\quad \textrm{for every} \quad R \geq 1\, . 
        \end{align}
        Let $\mathbb N^*= \mathbb N\backslash\{0\}$. We construct Gaussian processes  with the following iterative in $l$ (from $0$ to $L$) procedure:
        
        \begin{enumerate}[(i)]
            \item If $l>0$, define random variables (a GP)  $\left\{z_i^{l,  \infty}\right\}_{i\in \mathbb N^* }$ in $\left(\Omega, \mathcal F, P\right)$ as i.i.d. Gaussian with mean zero and covariance  $\A\left(K^{l}_\infty\right)$, so that they are also independent from any future events, i.e. independent from the $\sigma$-algebra generated by all R.V.s with layer index greater than $l$. Here we  implicitly assume the probability space $\left(\Omega, \mathcal F, P\right)$ has enough capacity to fit in the  R.V.s generated by the above procedure, if not we will extend the sample space $\Omega$ using product measures.
        
            \item For $n\in\mathbb N^*$, define   
            \begin{align}
                \h^{l}_{i,\alpha}(x) &\equiv
                \left\{\begin{array}{ccc}
                	x_{i, \alpha}
                	&  & l=0 
                	\\
                	\\
                	\phi\left( z^{l-1, \infty}_{i,\alpha}(x) \right) &  & l > 0
                \end{array}\right.,
                \\
                z^{l, n}_{i,\alpha}(x) &\equiv 
                \left\{\begin{array}{ccc}
                \frac 1 {\sqrt {n^0}}\sum_{j\in[n^0]} \sum_{\beta \in[\pm k]} \sqrt{v_\beta} \omega^l_{ij, \beta} \h^l_{j, \alpha + \beta}(x) + b^l_{i} &  & l=0
                \\
                \\
                \frac 1 {\sqrt {n}}\sum_{j\in[n]} \sum_{\beta \in[\pm k]} \sqrt{v_\beta}\omega^l_{i j, \beta} \h^l_{j, \alpha + \beta}(x) + b^l_{i} &  & l > 0
                \end{array}\right.,
                \label{eq weighted sum inf}
            \end{align}
            where
            \begin{align}
                &[n]\equiv\left\{1, \dots, n\right\} \quad \textrm{and} \quad 
                [\pm k] \equiv \left\{-k, \dots, 0, \dots k\right\}. 
            \end{align}
            
            \item Prove that for any finite $m \geq 1$, $\left[z^{l, n}_i\right]_{i\in[m]}\subseteq \R^{\left|\X\right|d}$ converges in distribution to a multivariate normal with mean zero and covariance $\A \left(K_\infty^l\right) \otimes I_m$ as $n\to\infty$, where $\Kinf^l$ is defined identically to \Eqref{eq:Ktop conv in p}. As a consequence, per our remark in \sref{subsection:setup}, $\left\{z^{l, n}_i\right\}_{i\in\mathbb{N}^*}$ converges weakly to the GP $\left\{z_i^{l,  \infty}\right\}_{i\in \mathbb N^* }$.
        \end{enumerate}
         In what follows, we use the central limit theorem to prove (iii).
        
        \begin{theorem}\label{theorem:sequential}
            If $\phi\in\Psi_1$, then for every $l\geq 0$ and every $m\geq 1$, $\left[z^{l, n}_i\right]_{i\in[m]}\subseteq \R^{\left|\X\right|d}$ converges in distribution to a multivariate normal with mean zero and covariance $\A \left(K_\infty^l\right) \otimes I_m$. 
        \end{theorem}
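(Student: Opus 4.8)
I would prove the statement by induction on the layer index $l$, carrying along the auxiliary claim that $\Kinf^l$ has finite entries (its positive semi-definiteness is automatic, since it is obtained from $K^0$ by the PSD-preserving maps $\A$ and $\C$). The auxiliary claim propagates because, if $\Kinf^{l-1}$ has finite entries, then each diagonal entry of $\A\left(\Kinf^{l-1}\right)$ is a nonnegative real number $r$, of which there are only finitely many (as $\X$ and $d$ are finite), so by $\phi\in\Psi_1$ we have $\E_{u\sim\N(0,r)}\left|\phi(u)\right|^2 \leq C_2\left(R,\phi\right)<\infty$ for a single large $R$ (the degenerate case $r=0$ giving $\phi(0)^2<\infty$), and Cauchy--Schwarz then bounds the off-diagonal entries of $\Kinf^l=\left(\C\circ\A\right)\left(\Kinf^{l-1}\right)$; the base case $\Kinf^0=K^0$ is finite by assumption on $\X$.

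\emph{Base case $l=0$.} Here $z^{0,n}_i$ does not depend on $n$ and is a fixed linear image of the Gaussian parameter vector $\left\{\omega^0,b^0\right\}$, hence already exactly a mean-zero multivariate Gaussian. I would compute its covariance directly from independence and $\E\left[\omega^0_{ij,\beta}\omega^0_{i'j',\beta'}\right]=\sigma_\omega^2\delta_{ii'}\delta_{jj'}\delta_{\beta\beta'}$, which gives $\E\left[z^{0}_{i,\alpha}(x)z^{0}_{i',\alpha'}(x')\right]=\delta_{ii'}\left(\sigma_b^2+\sigma_\omega^2\sum_\beta v_\beta\left[K^0\right]_{\alpha+\beta,\alpha'+\beta}(x,x')\right)=\delta_{ii'}\left[\A\left(K^0\right)\right]_{\alpha,\alpha'}(x,x')$, i.e.\ the covariance is $\A\left(\Kinf^0\right)\otimes I_m$.

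\emph{Inductive step.} Assume the result for $l-1$. Then the variables $\left\{z^{l-1,\infty}_j\right\}_{j\in\mathbb N^*}$ introduced in step (i) are well-defined, i.i.d.\ $\N\left(0,\A\left(\Kinf^{l-1}\right)\right)$, and independent of all layer-$l$ parameters. I would decompose
\begin{align}
\left[z^{l,n}_i\right]_{i\in[m]} \;=\; \frac{1}{\sqrt n}\sum_{j=1}^{n} V_j \;+\; \mathbf B, \qquad
\left[V_j\right]_{i,\alpha}(x) \;\equiv\; \sum_{\beta\in[\pm k]}\sqrt{v_\beta}\,\omega^l_{ij,\beta}\,\phi\!\left(z^{l-1,\infty}_{j,\alpha+\beta}(x)\right),
\end{align}
with $V_j\in\R^{m\left|\X\right|d}$ and $\mathbf B$ the $n$-independent vector whose $(i,\alpha,x)$ entry is $b^l_i$ (independent of every $V_j$). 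The crucial point I would verify is that $V_1,V_2,\dots$ are \emph{i.i.d.}, not merely i.i.d.\ conditionally on the previous layer: for distinct $j$ the weight blocks $\left\{\omega^l_{ij,\beta}\right\}_{i,\beta}$ are independent, and by the construction in step (i) so are the channels $z^{l-1,\infty}_j$, and $V_j$ is a fixed measurable function of that pair. Each $V_j$ is mean zero, with finite covariance $\Sigma\equiv\Cov(V_1)$ by the same $\Psi_1$ estimate used above; evaluating it with $\E\left[\phi\!\left(z^{l-1,\infty}_{j,\gamma}(x)\right)\phi\!\left(z^{l-1,\infty}_{j,\gamma'}(x')\right)\right]=\left[\left(\C\circ\A\right)\left(\Kinf^{l-1}\right)\right]_{\gamma,\gamma'}(x,x')=\left[\Kinf^l\right]_{\gamma,\gamma'}(x,x')$ and the definition of $\A$ gives $\Sigma=\A\left(\Kinf^l\right)\otimes I_m-\Cov(\mathbf B)$. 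Then the multivariate central limit theorem for i.i.d.\ summands (equivalently, Cram\'er--Wold together with the univariate Lindeberg--L\'evy CLT applied to each linear functional, using \eqref{eq:weak-convergence-gaussian}) yields $\tfrac{1}{\sqrt n}\sum_{j=1}^n V_j\overset{\D}{\to}\N(0,\Sigma)$, and adding the independent Gaussian $\mathbf B$ gives $\left[z^{l,n}_i\right]_{i\in[m]}\overset{\D}{\to}\N\left(0,\A\left(\Kinf^l\right)\otimes I_m\right)$, which closes the induction.

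\emph{Anticipated main obstacle, and an alternative.} The only genuinely substantive step is the i.i.d.-ness of the $V_j$: this is precisely what the sequential construction buys by having already replaced $z^{l-1}$ with the exact Gaussian limit $z^{l-1,\infty}$ whose channels are independent, and it is what allows the classical CLT to do the work with no conditioning or exchangeability argument. Everything else is bookkeeping --- propagating finiteness of the relevant second moments and well-definedness of $\Kinf^l$ via $\phi\in\Psi_1$, and splitting the target covariance into its $\A$-part and its bias part. One could instead condition on the layer-$l$ activations: conditionally $z^{l,n}_i$ is exactly Gaussian with covariance $\A\!\left(\tfrac1n\sum_{j=1}^n\h^l_j\,(\h^l_j)^T\right)\otimes I_m$, the empirical average converges to $\Kinf^l$ by the weak law of large numbers, and a conditionally normal vector whose covariance converges in probability to a constant converges in distribution to the corresponding normal; this route is cleaner but relies on that last lemma, whereas the CLT argument above is self-contained.
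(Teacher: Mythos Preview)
Your proposal is correct and follows essentially the same approach as the paper: induction on $l$, with the inductive step reducing to the classical i.i.d.\ CLT applied to the per-input-channel summands, using the key fact that the sequential construction makes $\left\{z^{l-1,\infty}_j\right\}_j$ (and hence your $V_j$) unconditionally i.i.d. The only cosmetic difference is that the paper applies Cram\'er--Wold first and then decomposes the resulting scalar as $\tfrac{1}{\sqrt n}\sum_j u_j + q$, whereas you decompose vectorially and then invoke the multivariate CLT (noting the Cram\'er--Wold equivalence yourself); your explicit tracking of the finiteness of the diagonal of $\Kinf^l$ is also slightly more careful than the paper's one-line appeal to $\phi\in\Psi_1$.
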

        
        \begin{proof}
            We proceed by induction. This is obvious in the base case $l=0$, since the weights and biases are assumed to be independent Gaussian. Now we assume the theorem holds for $l-1$. This implies $\left\{\h^l_i\right\}_{i\in\mathbb N^*} = \left\{\phi\left(z^{l-1, \infty}_i\right)\right\}_{i\in\mathbb N^*}$ are i.i.d. random variables. 
            
            Choose a vector $a \equiv \left[a_i(x)\right]^T_{i\in [m], \, x\in\X} \in\R^{m\left|\X\right|}$. Then
            \begin{align}
                \sum_{\substack{i\in [m] \\ x\in\X}} a_i(x) z^{l, n}_i(x) 
                &= \sum_{\substack{i\in [m] \\ x\in\X}} a_i(x) 
                \frac 1 {\sqrt n}\left(\sum_{j\in[n]}
                \sum_{\beta \in[\pm k]} \sqrt{v_\beta}\omega^l_{ij, \beta} \h^l_{j, \alpha + \beta}(x) + b^l_{i}\right) 
                \\
                &
                \frac 1 {\sqrt n}\left(\sum_{j\in[n]}
                \sum_{\substack{i\in [m] \\ x\in\X}} a_i(x)
                {\sum_{\beta \in[\pm k]} \sqrt{v_\beta}\omega^l_{ij, \beta} \h^l_{j, \alpha + \beta}(x)}\right) + \sum_{\substack{i\in [m] \\ x\in\X}} a_i(x) b^l_{i} 
                \\
                &\equiv
                \frac 1 {\sqrt n}\sum_{j\in[n]} u_j  +  q. 
            \end{align}
            It is not difficult to see that $u_j$ are i.i.d. and $q$ is Gaussian. Then we can use the central limit theorem to conclude that the above converges in distribution to a Gaussian, once we verify the second moment of $u_j$ is finite. Using the fact that $\left\{\omega^l_{i j, \beta}\right\}_{i,\beta}$ is a collection of independent R.V.s, and integrating over these R.V.s first, we get
            \begin{align}
                \E u_j^2  = &  
                \E \left[\sum_{\substack{i\in [m] \,,\, x\in\X}} a_i(x) 
                 {\sum_{\beta \in[\pm k]} \sqrt{v_\beta}\omega^l_{ij, \beta} \h^l_{j, \alpha + \beta}(x)} \right]^2 
                 \\
                 =&
                \sw^2 \sum_{\substack{i\in [m]}}
                 \sum_{\beta \in[\pm k]}v_\beta \E \left[ \sum_{\substack{x\in\X}} a_i(x) \h^l_{j, \alpha + \beta}(x) \right]^2
                 \\
                 \label{eq:affine-kernel}
                 =&
                \sum_{\substack{i\in [m]}} a_i^T \bar \A\left(K^l_\infty\right)   a_i
            \end{align}
            where by $\bar \A$ we denote the linear transformation part of $\A$ from \Eqref{eq:A_def}, i.e. $\A$ without the translation term $\sigma_b ^2$:
            \begin{align}
                \left[\bar\A\left(\K\right)\right]_{\alpha, \alpha'}\left(x, x'\right)
                &\equiv\sigma^2_\omega\sum_{\beta} v_\beta \left[K\right]_{\alpha +\beta, \alpha'+\beta}\left(x, x'\right).
                \label{eq:A_bar_def}
            \end{align}
            To prove finiteness of the second moment in \Eqref{eq:affine-kernel}, it is sufficient to show that all the diagonal terms of $\Kinf^l$ are finite. This easily follows from the assumption $\phi\in\Psi_1$ and the definition of $\Kinf^l = \left(\C \circ \A\right)^l\left(K^0\right)$. Together with the distribution of $v$ (whose covariance is straightforward to compute), the joint distribution of $\left[z^{l, n}_i\right]_{i\in[m]}$ converges weakly to a mean zero Gaussian with covariance matrix $\A\left(\Kinf^l\right) \otimes I_m$ by Theorem \ref{theorem:weak converge} (\Eqref{eq:weak-convergence-gaussian}).
        \end{proof}
        
        \begin{remark}
            The results of Theorem \ref{theorem:sequential} can be strengthened / extended in many directions. 
            \begin{enumerate}[(1)]
                \item The same result for a countably-infinite input index set $\X$ follows immediately according to the respective remark in  \sref{subsection:setup}.
                
                \item The same analysis carries over (and the covariance matrix can be computed without much extra effort) if we stack a channel-wise deterministic affine transform after the convolution operator. Note that average pooling (not max pooling, which is not affine), global average pooling and the convolutional striding are particular examples of such affine transformations. Moreover, valid padding (i.e. no padding)  convolution can be regarded as a subsampling operator (a linear projection) composed with the regular circular padding.
                
                \item The same analysis applies to max pooling, but computing the covariance may require non-trivial effort. Let $\mathfrak m$ denote the max pooling operator and assume it is applied right after the activation function. The assumption $\left\{\h^l_i\right\}_{i\in[n]} = \left\{\phi(z^{l-1,\infty}_i)\right\}_{i\in[n]}$ are i.i.d. implies $\left\{\mathfrak m \left(\h^l_i\right) \right\}_{i\in[n]}$ are also i.i.d. Then we can proceed exactly as above except for verifying the finiteness of second moment of $\mathfrak m(\h^l_i)$ with the following  trivial estimate:
                \begin{align}
                    \E \max \left\{\h_{i, \alpha}^l\right\}_{\alpha\in [s]}^2 \leq  \E\sum_{\alpha\in [s]} \left|\h_{i, \alpha}^l\right|^2 
                \end{align}
                where $s$ is the window size of the max pooling. 
                
                In general, one can stack a channel-wise deterministic operator $\mathfrak {op}$ on $\left\{\h^l_i\right\}$ so long as the second moment of $\left\{\mathfrak {op} \left(\h^l_i\right)\right\}$ is finite. One can also stack a stochastic operator (e.g. dropout), so long as the outputs are still channel-wisely i.i.d. and have finite second moments.     
            \end{enumerate}
        \end{remark}

    \subsection{Simultaneous limit}\label{sec:uniform-limit}
        
        In this section, we present a sufficient condition on the activation function $\phi$ so that the neural networks converge to a Gaussian process as all the widths approach infinity simultaneously. Precisely, let $t\in\mathbb N$ and for each $l\geq 0$, let $n^l:\mathbb N \to \mathbb N$ be the width function at layer $l$ (by convention $n^0(t) = n^0$ is constant). We are interested in the simultaneous limit $n^l = n^l(t)\to\infty$ as $t\to \infty$, i.e., for any fixed $L\geq 1$  
        \begin{align}
        \label{eq:simultaneous-limits}
            \min \left\{n^1(t), \dots, n^L(t)\right\} \xrightarrow[t\to\infty]{} \infty. 
        \end{align}
        Define a sequence of finite channel CNNs as follows: 
        \begin{align}
           & \h^{l, t}_{i,\alpha}(x) \equiv
            \left\{\begin{array}{ccc}
            	x_{i, \alpha}, &  & l=0 
            	\\
            	\\
            	\phi\left( z^{l-1, t}_{i,\alpha}(x) \right), &  & l > 0
            \end{array}\right.,
            \\
            \\
            &
            z^{l, t}_{i,\alpha}(x) \equiv 
            \left\{\begin{array}{ccc}
            \frac 1 {\sqrt {n^0}}\sum_{j\in[n^0]} \sum_{\beta \in[\pm k]} \sqrt{v_\beta}\omega^l_{ij, \beta} \h^{l, t}_{j, \alpha + \beta}(x) + b^l_{i}\,, &  & l=0
            \\
            \\
            \sum_{j\in[n^l(t)]} \sum_{\beta \in[\pm k]} \sqrt{v_\beta}\omega^l_{ij, \beta} \h^{l, t}_{j, \alpha + \beta}(x) + b^l_{i}\,, &  & l > 0
            \end{array}\right. . 
            \label{eq weighted sum t}
        \end{align}

        This network induces a sequence of covariance matrices $K^l_t$ (which are R.V.s): for $l\geq 0$ and $t\geq 0$, for $x, x'\in \X$ 
        \begin{align}
            \left[\K^{l}_t\right]_{\alpha, \alpha'}\left(x, x'\right) &\equiv \frac 1 {n^l(t)}\sum_{i=1}^{n^l(t)} \h^{l, t}_{i,\alpha}(x) \h^{l, t}_{i, \alpha'}(x').
            \label{eq: K def t}
        \end{align}
        
        We make an extra assumption on the parameters.
        
        \textbf{Assumption}: all R.V.s  in $\ssw$ are Gaussian distributed.  
        
        \textbf{Notation.} Let $\text{PSD}_m$ denote the set of $m\times m$ positive semi-definite matrices and for $R\geq 1$, define 
        \begin{align}
            \text{PSD}_m(R) \equiv \left\{\Sigma \in \text{PSD}_m:  1/ R\leq \Sigma_{\alpha,\alpha}\leq R \quad {\rm for} \quad 1\leq \alpha \leq m\right\}.
        \end{align}
        Further let $\mathcal T_\infty : \text{PSD}_2\to \R$ be a function given by
        \begin{align}
            \mathcal T_\infty(\Sigma) &\equiv \mathbb E_{(x,y) \sim  \N\left(0, \Sigma\right)}\left[\phi(x)\phi(y)\right], \label{eq:define-t-infinity}
        \end{align}
        and $C_k\left(\phi, R\right)$ (may equal $\infty$) denotes the uniform upper bound for the $k$-th moment
        \begin{align}
             C_k\left(\phi, R\right)  \equiv  \sup_{1/R \leq r \leq R}\E_{x \sim \N\left(0, r\right)}\left|\phi(x)\right|^k  . 
        \end{align}
        
        Let $\Psi$ denotes the space of measurable functions $\phi$ with the following properties:
        \begin{enumerate}
            \item {\bf Uniformly bounded second moment:}\label{property 1} for every $R \geq 1,$ $C_2\left(\phi, R\right)<\infty$. 
            
            \item {\bf Lipschitz continuity:}\label{property 2} for every $R\geq 1$, there exists $\beta = \beta\left(\phi, R\right)>0$ such that for all $\Sigma, \Sigma' \in$ ${\rm PSD}_2(R)$,  
            \begin{align}
                 \left|\mathcal T_\infty(\Sigma) - \mathcal T_\infty(\Sigma')\right| \leq \beta \left\| \Sigma -\Sigma'\right\|_{\infty};
            \end{align}
            
            \item {\bf Uniform convergence in probability:}\label{property 3} for every $R\geq 1$ and every $\varepsilon>0$ there exists a positive sequence $\rho_n(\phi, \varepsilon, R)$ with $ \rho_n(\phi, \varepsilon, R) \to 0$ as $n\to\infty$ such that 
            for every $\Sigma \in {\rm PSD}_2(R)$ and any $\left\{(x_i, y_i)\right\}_{i=1}^n \textrm{ i.i.d. } \sim \N\left(0, \Sigma\right)$ 
            \begin{align}
                \label{eq:uniform-convergence}
                P\left(\left| \frac 1 n \sum_{i=1}^n \phi\left(x_i\right)\phi\left(y_i\right) - {\mathcal T}_\infty\left(\Sigma\right)\right|>\varepsilon \right) \leq   \rho_n\left(\phi, \varepsilon, R\right). 
            \end{align}
        \end{enumerate}
        
        We will also use $\Psi_{1}, \Psi_{2}$ and $\Psi_{3}$ to denote the spaces of measurable functions $\phi$ satisfying properties 1, 2, and 3, respectively. It is not difficult to see that for every $i$, $\Psi_{i}$ is a vector space, and so is $\Psi  = \cap_{i}\Psi_i$.
         
        Finally, we say that a function $f:\R\to\R$ is exponentially bounded if there exist $a, b>0$ such that 
        \begin{align}
            |f(x)| \leq a e^{b|x|} \quad \textrm{a.e. (almost everywhere)}  
        \end{align}
        
        We now prove our main result presented in \sref{sec:step3} through the following three theorems.
         
        \begin{theorem}\label{thm:phi_prime}
            If $\phi$ is absolutely continuous and $\phi'$ is exponentially bounded then $\phi \in \Psi$. 
        \end{theorem}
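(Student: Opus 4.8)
The plan is to verify the three defining properties of $\Psi$ in turn, starting from the key preliminary observation that absolute continuity together with an exponentially bounded derivative forces $\phi$ itself to be exponentially bounded: writing $\phi(x) = \phi(0) + \int_0^x \phi'(t)\,dt$ and using $|\phi'| \le a e^{b|\cdot|}$ a.e.\ gives $|\phi(x)| \le A e^{b|x|}$ with $A = |\phi(0)| + a/b$. Combined with the elementary Gaussian bound $\mathbb{E}_{x\sim\mathcal N(0,r)} e^{c|x|} \le 2 e^{c^2 r/2}$, this immediately yields that for every $R\ge 1$ and every $k$ one has $C_k(\phi,R) < \infty$, and likewise $\sup_{1/R\le r\le R}\mathbb{E}_{x\sim\mathcal N(0,r)}|\phi'(x)|^k < \infty$. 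In particular $C_2(\phi,R)<\infty$, so $\phi\in\Psi_1$.

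For $\Psi_3$ I would argue by Chebyshev's inequality with a variance bound uniform in $\Sigma$. Since $\mathbb{E}\big[\tfrac1n\sum_i \phi(x_i)\phi(y_i)\big] = \mathcal T_\infty(\Sigma)$ and the summands are i.i.d., we get $\Var\big(\tfrac1n\sum_i\phi(x_i)\phi(y_i)\big) = \tfrac1n\Var(\phi(x_1)\phi(y_1)) \le \tfrac1n\mathbb{E}[\phi(x_1)^2\phi(y_1)^2] \le \tfrac1n C_4(\phi,R)$, using Cauchy--Schwarz and the fact that $x_1\sim\mathcal N(0,\Sigma_{11})$, $y_1\sim\mathcal N(0,\Sigma_{22})$ with $\Sigma_{11},\Sigma_{22}\in[1/R,R]$. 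Hence $P\big(|\tfrac1n\sum_i\phi(x_i)\phi(y_i) - \mathcal T_\infty(\Sigma)| > \varepsilon\big) \le C_4(\phi,R)/(n\varepsilon^2) =: \rho_n(\phi,\varepsilon,R)$, which is independent of $\Sigma\in\mathrm{PSD}_2(R)$ and tends to $0$; so $\phi\in\Psi_3$.

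The main work is $\Psi_2$. I would reparametrize $\Sigma\in\mathrm{PSD}_2(R)$ by $(\sigma_1,\sigma_2,\rho)$ with $\sigma_i = \sqrt{\Sigma_{ii}}\in[R^{-1/2},R^{1/2}]$ and $\rho = \Sigma_{12}/(\sigma_1\sigma_2)\in[-1,1]$, and write $\mathcal T_\infty(\Sigma) = \mathbb{E}_{(\xi,\eta)\sim\mathcal N(0,\bar\Sigma_\rho)}[\phi(\sigma_1\xi)\phi(\sigma_2\eta)]$, where $\bar\Sigma_\rho$ has unit diagonal and correlation $\rho$. The map $\Sigma\mapsto(\sigma_1,\sigma_2,\rho)$ is Lipschitz on $\mathrm{PSD}_2(R)$ (diagonal entries are bounded below by $1/R$ and $|\Sigma_{12}|\le R$), so it suffices to show $\mathcal T_\infty$ is Lipschitz in $(\sigma_1,\sigma_2,\rho)$ on the box $[R^{-1/2},R^{1/2}]^2\times[-1,1]$, which by telescoping reduces to uniform one-variable Lipschitz bounds. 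For the $\sigma_1$ direction, differentiating under the integral (justified by absolute continuity of $\phi$ and dominated convergence with the exponential/Gaussian bounds above) gives $\partial_{\sigma_1}\mathcal T_\infty = \mathbb{E}[\xi\,\phi'(\sigma_1\xi)\phi(\sigma_2\eta)]$, bounded uniformly by Cauchy--Schwarz and the moment bounds; the $\sigma_2$ direction is symmetric. For the $\rho$ direction with $|\rho|<1$, using the Gaussian density identity $\partial_\rho p_{\bar\Sigma_\rho} = \partial_\xi\partial_\eta p_{\bar\Sigma_\rho}$ and integrating by parts twice (legitimate since $\phi$ is absolutely continuous with exponentially bounded derivative, so all boundary terms vanish against the Gaussian) yields $\partial_\rho\mathcal T_\infty = \sigma_1\sigma_2\,\mathbb{E}[\phi'(\sigma_1\xi)\phi'(\sigma_2\eta)]$, again uniformly bounded. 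Finally one checks that $\mathcal T_\infty$ extends continuously to the degenerate endpoints $\rho=\pm1$ by dominated convergence, so the Lipschitz estimate persists on the closed box; this gives $\phi\in\Psi_2$, and hence $\phi\in\Psi$.

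The step I expect to be the main obstacle is the $\rho$-Lipschitz bound in $\Psi_2$: one must justify the differentiation under the integral and the double integration by parts assuming only absolute continuity of $\phi$ (no $C^2$ regularity, so the ``obvious'' formula $\partial_\rho\mathcal T_\infty = \mathbb{E}[\partial_\xi\partial_\eta(\phi(\sigma_1\xi)\phi(\sigma_2\eta))]$ has to be reached by moving both derivatives onto $p_{\bar\Sigma_\rho}$ and then back off), and one must treat the degenerate boundary $|\rho|=1$ by a separate continuity argument rather than by differentiation. Everything else is routine application of Cauchy--Schwarz and the Gaussian moment generating function.
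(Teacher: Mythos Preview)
Your proposal is correct. The treatments of $\Psi_1$ and $\Psi_3$ coincide with the paper's: both first observe that absolute continuity plus exponential boundedness of $\phi'$ force $\phi$ itself to be exponentially bounded, hence $C_k(\phi,R)<\infty$ for all $k$, and both obtain $\Psi_3$ via Chebyshev with the uniform fourth-moment bound $C_4(\phi,R)$.

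For $\Psi_2$ you take a genuinely different route. The paper writes $\Sigma = AA^T$ via the Cholesky factor, sets $f(w)=\phi(x)\phi(y)$, and applies the mean value theorem along the linear path $A(t)=(1-t)A+tA'$ to obtain
\[
|\mathcal T_\infty(\Sigma)-\mathcal T_\infty(\Sigma')| \;\lesssim\; \int_0^1\!\!\int \|\nabla f(A(t)w)\|_2\,\|(A'-A)w\|_2\, e^{-\|w\|_2^2/2}\,dw\,dt \;\lesssim\; \|A-A'\|_\infty \;\lesssim\; \|\Sigma-\Sigma'\|_\infty,
\]
using only that $\|\nabla f\|$ is exponentially bounded. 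You instead reparametrise by $(\sigma_1,\sigma_2,\rho)$ and use the Price/Plackett identity $\partial_\rho p_{\bar\Sigma_\rho}=\partial_\xi\partial_\eta p_{\bar\Sigma_\rho}$ together with two integrations by parts to get $\partial_\rho\mathcal T_\infty=\sigma_1\sigma_2\,\mathbb E[\phi'(\sigma_1\xi)\phi'(\sigma_2\eta)]$. Both arguments use only first-order information on $\phi$, so the hypotheses needed are the same. The Cholesky route is slightly cleaner in that it handles degenerate $\Sigma$ (your $|\rho|=1$) uniformly, avoiding the separate continuity argument you correctly flag as the delicate step; your route has the advantage that the derivative formulas it produces (especially $\partial_\rho\mathcal T_\infty$) are explicit and of independent interest, e.g.\ in NTK-type computations.
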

        
        \begin{theorem}\label{thm:uniformConvergence}
            If $\phi \in \Psi$, then for $l\geq 0$, $\K^l_t\overset{P}{\to} K^l_\infty$. 
        \end{theorem}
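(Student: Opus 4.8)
The plan is to argue by induction on $l$, reducing the inductive step to a \emph{conditional} law of large numbers that is made quantitative by the uniform-convergence property of $\Psi$. The base case $l=0$ is trivial because $\K^0_t = K^0 = \Kinf^0$ is deterministic. For the step, assume $\K^{l-1}_t \overset{P}{\to} \Kinf^{l-1}$. By the Gaussian-weights Assumption together with the computation in \sref{sec:step1}, conditionally on $\K^{l-1}_t$ the channels $\left\{z^{l-1,t}_i\right\}_i$ are i.i.d.\ $\N\!\left(0, \A(\K^{l-1}_t)\right)$ (the finite-width analogue of \eqref{eq:z_cond_k}); the role of the extra Assumption is precisely that this is \emph{exact}, not merely asymptotic, since here the previous layer is not taken to be infinitely wide and no CLT on $z^{l-1,t}$ is available. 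Writing $\Sigma^{l-1}_t(\alpha,x;\alpha',x')$ for the $2\times 2$ submatrix of $\A(\K^{l-1}_t)$ indexed by $(\alpha,x)$ and $(\alpha',x')$, the definition \eqref{eq:C_def} of $\C$ then gives $\E\!\left[\,[\K^l_t]_{\alpha,\alpha'}(x,x') \mid \K^{l-1}_t\right] = \mathcal T_\infty\!\left(\Sigma^{l-1}_t(\alpha,x;\alpha',x')\right) = \left[(\C\circ\A)(\K^{l-1}_t)\right]_{\alpha,\alpha'}(x,x')$.

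It then suffices to prove, entry by entry, that (a) $(\C\circ\A)(\K^{l-1}_t) \overset{P}{\to} (\C\circ\A)(\Kinf^{l-1}) = \Kinf^l$, and (b) $\K^l_t - (\C\circ\A)(\K^{l-1}_t) \overset{P}{\to} 0$; since $\K^l_t$ has finitely many entries, combining (a) and (b) yields $\K^l_t \overset{P}{\to} \Kinf^l$. For (a), $\A$ is affine hence continuous, and $\mathcal T_\infty$ is Lipschitz on $\mathrm{PSD}_2(R)$ by Property~\ref{property 2}, so by continuity on the relevant domain it is enough to know that each $\Sigma^{l-1}_t(\alpha,x;\alpha',x')$ lies in $\mathrm{PSD}_2(R)$ with probability tending to $1$ for a suitable fixed $R$; this follows from the inductive hypothesis provided the diagonal entries of $\A(\Kinf^{l-1})$ lie in a fixed compact subinterval of $(0,\infty)$, which I would establish by a side induction on $l$ (finiteness from Property~\ref{property 1}; strict positivity from $\sigma_\omega^2>0$ and $x\neq 0$, handling the borderline case $\sigma_b^2=0$ via the receptive-field structure). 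For (b), condition on $\K^{l-1}_t$: on the event $G_{t,R}$ that every relevant $\Sigma^{l-1}_t$ belongs to $\mathrm{PSD}_2(R)$, the pairs $\left\{\left(z^{l-1,t}_{i,\alpha}(x), z^{l-1,t}_{i,\alpha'}(x')\right)\right\}_i$ are i.i.d.\ $\N(0,\Sigma)$ for a constant $\Sigma \in \mathrm{PSD}_2(R)$, so Property~\ref{property 3} bounds the conditional probability that $[\K^l_t]_{\alpha,\alpha'}(x,x')$ deviates from $\mathcal T_\infty(\Sigma)$ by more than $\varepsilon$ by $\rho_{n^l(t)}(\phi,\varepsilon,R)$. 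Integrating out and adding $P(G_{t,R}^{c})$ yields a bound $\rho_{n^l(t)}(\phi,\varepsilon,R) + P(G_{t,R}^{c})$, whose first summand vanishes because $n^l(t)\to\infty$ under \eqref{eq:simultaneous-limits} and whose second summand vanishes by the choice of $R$ and part (a).

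The main obstacle is not this probabilistic skeleton but the uniformity bookkeeping underlying steps (a) and (b): Property~\ref{property 3} controls fluctuations only uniformly over $\Sigma$ in a \emph{fixed} set $\mathrm{PSD}_2(R)$, whereas the covariance fed into $\C$ is the random matrix $\A(\K^{l-1}_t)$, so the whole argument hinges on confining $\mathrm{diag}\,\A(\K^{l-1}_t)$ to a compact subset of $(0,\infty)$ with high probability --- which is what forces the auxiliary control of $\mathrm{diag}\,\Kinf^l$, including the degenerate $\sigma_b^2=0$ case, flagged above. A secondary technical point is that Property~\ref{property 3} is phrased for genuinely i.i.d.\ Gaussian samples, so one must verify carefully that conditioning on $\K^{l-1}_t$ renders the relevant coordinate pairs of $\left\{z^{l-1,t}_i\right\}_i$ i.i.d.\ with a deterministic $2\times 2$ covariance --- exactly the content of \eqref{eq:z_cond_k} specialized to finite width, and the one place the Gaussian-weights Assumption is essential. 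Finally, Theorem~\ref{thm:phi_prime} guarantees $\phi\in\Psi$ (hence Properties~1--3) whenever $\phi$ is absolutely continuous with exponentially bounded derivative, so the present statement holds in the generality claimed in \sref{sec:step3}.
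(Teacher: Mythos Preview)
Your proposal is correct and follows essentially the same route as the paper: induction on $l$, the triangle-inequality split into a continuity term $(\C\circ\A)(\K^{l-1}_t)\to\Kinf^l$ handled via the Lipschitz Property~\ref{property 2}, and a fluctuation term $\K^l_t-(\C\circ\A)(\K^{l-1}_t)\to 0$ handled by conditioning on $\K^{l-1}_t$ and invoking Property~\ref{property 3} on the high-probability event that $\A(\K^{l-1}_t)\in\mathrm{PSD}_{|\X|d}(R)$. The only cosmetic difference is that the paper dispatches the strict positivity of the diagonal of $\A(\Kinf^{l})$ by citing an external lemma rather than your proposed side induction.
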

        
        \begin{theorem}\label{thm:weak-convergence-simultaneous-limit}
            If for $l\geq 0$, $\K^l_t\overset{P}{\to} K^l_\infty$ and $m\geq1$, the joint distribution of $\left[z^{l, t}_{j}\right]_{j\in [m]} $ converges in distribution to a multivariate normal distribution with mean zero and covariance $\A\left(\Kinf^l\right) \otimes I_m$.
        \end{theorem}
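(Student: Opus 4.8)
The plan is to establish the convergence at the level of characteristic functions via L\'evy's continuity theorem (Theorem \ref{theorem:weak converge}(iii)), leveraging the fact that conditionally on the previous layer's activations the pre-activations are \emph{exactly} Gaussian, so that the limiting law is governed entirely by the convergence $\K^l_t\overset{P}{\to}K^l_\infty$ supplied by hypothesis. Concretely, I would fix a test vector $a=\left[a_j(x)\right]_{j\in[m],\,x\in\X}\in\R^{m\left|\X\right|d}$ and condition on $\h^{l,t}$. Because in this section all weights in $\ssw$ are Gaussian, the biases are Gaussian, and all parameters are mutually independent, the layer-$l$ parameters $\left\{\omega^l,b^l\right\}$ are independent of $\h^{l,t}$ (a measurable function of parameters in layers strictly below $l$); hence $\left[z^{l,t}_j\right]_{j\in[m]}$, being conditionally a linear image of a Gaussian vector, is conditionally zero-mean multivariate normal, and a second-moment computation identical to the one behind \Eqref{eq pzx} identifies its conditional covariance as $\A\left(\K^l_t\right)\otimes I_m$, which depends on $\h^{l,t}$ only through $\K^l_t$. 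The tower property then gives
\begin{align}
    \E\left[e^{\I a^T \left[z^{l,t}_j\right]_{j\in[m]}}\right]
    = \E\left[\exp\left(-\tfrac 1 2\, a^T \left(\A\left(\K^l_t\right)\otimes I_m\right) a\right)\right].
\end{align}

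It then remains to pass to the limit $t\to\infty$ inside this expectation. By the conclusion of Theorem \ref{thm:uniformConvergence}, $\K^l_t\overset{P}{\to}K^l_\infty$, and since $\A$ is affine, hence continuous, $\A\left(\K^l_t\right)\otimes I_m\overset{P}{\to}\A\left(K^l_\infty\right)\otimes I_m$ by the continuous mapping theorem, so the scalar integrand converges in probability to the constant $\exp\left(-\tfrac12 a^T\left(\A\left(K^l_\infty\right)\otimes I_m\right)a\right)$. Moreover $\K^l_t$ is almost surely positive semi-definite (an average of rank-one PSD matrices) and $\A$ preserves positive semi-definiteness, so the exponent is $\leq 0$ and the integrand is bounded by $1$; the bounded convergence theorem then yields
\begin{align}
    \E\left[e^{\I a^T \left[z^{l,t}_j\right]_{j\in[m]}}\right]
    \xrightarrow[t\to\infty]{}
    \exp\left(-\tfrac 1 2\, a^T \left(\A\left(K^l_\infty\right)\otimes I_m\right) a\right),
\end{align}
which is precisely the characteristic function of $\N\left(0,\A\left(K^l_\infty\right)\otimes I_m\right)$ (well-defined even if that covariance is singular, as when $\sigma_b^2=0$). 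Since this holds for every $a$, L\'evy's continuity theorem delivers the asserted convergence in distribution; equivalently one may first reduce to the scalar case by Cram\'er--Wold (\Eqref{eq:weak-convergence-gaussian}) and run the same conditioning argument on $a^T\left[z^{l,t}_j\right]_{j\in[m]}$.

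I do not expect a genuine obstacle at this stage: the substantive work is upstream, in Theorems \ref{thm:phi_prime} and \ref{thm:uniformConvergence}, which are what furnish $\K^l_t\overset{P}{\to}K^l_\infty$ (and whose proofs rest on properties \ref{property 1}--\ref{property 3} of $\Psi$). The only points requiring care here are (i) that the conditioning $\sigma$-algebra is the one generated by $\h^{l,t}$, so that $\left\{\omega^l,b^l\right\}$ is genuinely conditionally independent of it and the exact conditional Gaussianity holds, including the degenerate case $l=0$ where $\K^0_t=K^0$ is already deterministic; and (ii) that the PSD-preservation of $\A$ is exactly what keeps the conditional characteristic function bounded by $1$, so that convergence in probability of the random covariance upgrades to convergence of its expectation with no uniform-integrability subtlety to dispatch.
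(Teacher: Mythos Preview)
Your proposal is correct and follows essentially the same approach as the paper's proof: condition on $\h^{l,t}$ to obtain an exact Gaussian conditional law, integrate out the layer-$l$ Gaussian parameters to express the characteristic function as $\E\exp\left(-\tfrac12\sum_i a_i^T\A(\K^l_t)a_i\right)$, and then pass to the limit using $\K^l_t\overset{P}{\to}K^l_\infty$ together with L\'evy's continuity theorem. Your justification of the limit interchange via the bounded convergence theorem (the integrand being $\leq 1$ by positive semi-definiteness of $\A(\K^l_t)$) is in fact slightly more explicit than the paper's, which simply appeals to ``Lipschitz continuity of the respective function in the vicinity of $K^l_\infty$.''
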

         The proofs of Theorems \ref{thm:phi_prime},  \ref{thm:uniformConvergence}, and  \ref{thm:weak-convergence-simultaneous-limit} can be found in \sref{proof of thm phi_prime}, \sref{sec: proof of uniform}, and \sref{sec: proof of weak} respectively. The proof of Theorem \ref{thm:uniformConvergence} is slightly more technical and we will borrow some ideas from \citet{daniely2016}.

    \subsection{Proof of Theorem \ref{thm:weak-convergence-simultaneous-limit}}\label{sec: proof of weak}
            
        Per \Eqref{eq:weak-convergence-gaussian}, it suffices to prove that for any vector $ \left[a_i(x)\right]_{i\in [m], \, x\in\X} \in\R^{m\left|\X\right|}$,  
        \begin{align}\label{claim: characteristic}
           \sum_{\substack{i\in [m] \\ x\in\X}} a_i(x) z^{l, t}_i(x)\overset{\D} \longrightarrow \N\left(0, \sum_{i\in [m], \, x\in\X} a_i(x)^T 
              \A\left(\Kinf^l\right)  a_i(x)\right). 
        \end{align}
        Indeed, the characteristic function
        \begin{align}
            & \E \exp \left(\I \sum_{i\in [m], \, x\in\X} a_i(x) z^{l, t}_i(x) \right)
            \\
            =&   
            \E \exp \left(\I  
            \sum_{\substack{i\in [m] \\ x\in\X}}a_i(x)
            \left(\frac 1 {\sqrt {n^l(t)}}\sum_{j\in[n^l(t)]}
            \sum_{\beta \in[\pm k]} \sqrt{v_\beta}\omega^l_{ij, \beta} \h^{l, t}_{j, \alpha + \beta}(x) + b^l_{i}\right)\right). 
        \end{align}
        Note that conditioned on $\left\{\h^{l, t}_j\right\}_{j\in \left[n^l(t)\right]}$, the exponent in the above expression is just a linear combination of independent Gaussian R.V.s  $\left\{\omega_{i j, \beta}^l, \, b_i^l\right\}$, which is also a Gaussian. Integrating out these R.V.s  using the formula of the characteristic function of a Gaussian distribution yields
        \begin{align}
             \E &\exp \left(\I \sum_{i\in [m], \, x\in\X} a_i(x) z^{l, t}_i(x) \right)
             \\
             =
             \E &\exp\left( -\frac 1 2 
              \sum_{\substack{i\in [m]}} a_i^T 
             \left( \A \left(\K^l_t\right)\right)  a_i\right) \,
             \\
             \longrightarrow  
             &\exp  \left( -\frac 1 2 
             \sum_{\substack{i\in [m]}} a_i^T 
             \left( \A \left(K^l_\infty\right) \right)  a_i\right) \quad \textrm{as}\quad {t\to\infty}\,, 
        \end{align}
        where we have used $\K^l_t \xrightarrow{P} \Kinf^l$ and Lipschitz continuity of the respective function in the vicinity of $\Kinf^l$ in the last step. Therefore, \Eqref{claim: characteristic} is true by Theorem \ref{theorem:weak converge} (iii). As in \sref{sec:sequentialProof}, the same result for a countably-infinite input index set $\X$ follows immediately according to the respective remark in  \sref{subsection:setup}.
        
        \qed
    
        \begin{remark} 
            We briefly comment how to handle the cases when stacking an average pooling, a subsampling or a dense layer after flattening the activations in the last layer.  
            \begin{enumerate}[(i)]
                \item \textbf{Global Average pooling / subsampling.} Let $B\in\R^{1\times d}$ be any deterministic {\it linear} functional defined on $\R^d$. The fact that
                \begin{align}
                    \K^l_t \overset{P}\longrightarrow  K^l_\infty  \,  
                \end{align}
                implies that the empirical covariance of $\left\{B\h^{l, t}_{j}\right\}$
                \begin{align}
                    \frac 1 {n^l(t)}\sum_{j\in n^l(t)}B^{\otimes \left|\X\right|}\h^{l, t}_{j} \left(B^{\otimes \left|\X\right|}\h^{l, t}_{j}\right)^T \overset{P}\longrightarrow  B^{\otimes \left|\X\right|} K^l_\infty  \left(B^{\otimes \left|\X\right|}\right)^T
                \end{align}
                where $B^{\otimes \left|\X\right|}\in\R^{1\times d\left|\X\right|}$,  $\left|\X\right|$ copies of $B$. Invoking the same ``characteristic function'' arguments as above, it is not difficult to show that stacking a dense layer (assuming the weights and biases are drawn from i.i.d. Gaussian with mean zero and variances $\sw^2$ and $\sigma_b^2$, and are properly normalized) on top of  $\left\{B\h^{l, t}_{j}\right\}$ the outputs are i.i.d. Gaussian with mean zero and covariance $\sw^2 B^{\otimes \left|\X\right|} K^l_\infty  \left(B^{\otimes \left|\X\right|}\right)^T + \sigma_b^2$. Taking $B=\left(\frac 1 d , \dots, \frac 1 d\right)\in\R^{1\times d}$ $\left(\text{or } B=e_\alpha\in\R^{1\times d}\right)$ implies the result of global average pooling (\S\hyperref[global_pooling]{3.2.1}, \Eqref{eq:global-average-pooling}), or subsampling  (\S\hyperref[pixel_selection]{3.2.2}, \Eqref{eq:subsampling}).  
                
                \item \textbf{Vectorization and a dense layer.}  Let $\left\{\omega^l_{i j, \alpha}\right\}_{i\in[m], j\in\left[n^l(t)\right], \alpha\in [d]}$ be the weights of the dense layer, $\omega^l_{i j, \alpha}$ represents the weight connecting the $\alpha$-th pixel of the $j$-channel to the $i$-th output. Note that the range of $\alpha$ is $[d]$ not $[\pm k]$ because there is no weight sharing. Define the outputs to be
                \begin{align}
                    f_i(x) = \frac{1}{\sqrt{n^l(t)d}}\sum_{\alpha\in [d]}\sum_{j\in \left[n^l(t)\right]} \omega_{i j, \alpha }^l \h_{j, \alpha}^{l, t}(x) + b_i^l
                \end{align}
                Now let $\left[a_i(x)\right]_{i\in [m], x\in \X}\in\R^{m\left|\X\right|}$ and compute the characteristic function of  
                \begin{align}
                    \sum_{i, x} a_i(x) f_i(x).
                \end{align}
                Using the fact $\E \omega_{i j, \alpha} \omega_{i' j', \alpha'} = 0$ unless $(i j, \alpha) = (i' j', \alpha')$ and integrating out the R.V.s  of the dense layer, the characteristic function is equal to
                \begin{align}
                    \E &\exp\left(-\frac 1 2 \sum_{i\in [m]}\sum_{x, x'\in \X}a_i(x)a_i(x')\left( \frac{1}{n^l(t)d}\sum_{\substack{j\in [n^l(t)]\\ \alpha\in [d]}} \sw^2  \h_{j, \alpha}^{l, t}(x)\h_{j, \alpha}^{l, t}(x') + \sigma_b^2\right)\right)
                    \\
                    =\E &\exp\left(-\frac 1 2 \sum_{i\in [m]}\sum_{x, x'\in \X}a_i(x)a_i(x')\bar{\textrm{tr}} \left(\sw^2 K^l_t(x, x') + \sigma_b^2\right) \right)
                    \\
                    \longrightarrow & \exp\left(-\frac 1 2 \sum_{i\in [m]}\sum_{x, x'\in \X}a_i(x)a_i(x') \bar {\textrm{tr}} \left(\sw^2K^l_\infty(x, x')+\sigma_b^2\right) \right) ,  
                \end{align}
                where $\bar{\textrm{tr}}$ denotes the mean trace operator acting on the pixel by pixel matrix, i.e. the functional computing the mean of the diagonal terms of the pixel by pixel matrix. Therefore $\left[f_i\right]_{i\in[m]}$ converges weakly to a mean zero Gaussian with covariance $\left[\sw^2\bar {\textrm{tr}} \left(K^l_\infty(x, x')\right)+ \sigma_b^2\right]_{x, x'\in\X}\otimes I_m $ in the case of vectorization (\sref{sec vectorize}).
            \end{enumerate}
        \end{remark}

    \subsection{Proof of Theorem \ref{thm:uniformConvergence}}\label{sec: proof of uniform}
            
        \begin{proof}
            We recall $\K^{L}_t$ and $\K^{L}_\infty$ to be random matrices in $\R^{d\left|\X\right|\times d\left|\X\right|}$, and we will prove convergence $\K^{L}_t \xrightarrow[t\to\infty]{P} \K^{L}_\infty$ with respect to $\|\cdot \|_\infty$, the pointwise $\ell^\infty$-norm $\left(\text{i.e. } \left\|K \right\|_\infty = \max_{x, x', \alpha, \alpha'} \left|K_{\alpha, \alpha'}\left(x, x'\right)\right|\right)$. Note that due to finite dimensionality of $\K^L$, convergence w.r.t. all other norms follows.
            
            We first note that the affine transform $\A$ is $\sigma_\omega^2$-Lipschitz and property \hyperref[property 2]{2} of $\Psi$ implies that the $\C$ operator is $\beta$-Lipschitz (both w.r.t. w.r.t. $\|\cdot \|_\infty$). 
            Indeed, if we consider
            \begin{equation}
                \Sigma \equiv \begin{pmatrix*}[l]
                                [\K]_{\alpha, \alpha}\left(x, x\right) & [\K]_{\alpha, \alpha'}\left(x, x'\right) \\
                                [\K]_{\alpha', \alpha}\left(x', x\right) & [\K]_{\alpha', \alpha'}\left(x', x'\right)
                              \end{pmatrix*},
            \end{equation}
            then $[\C\left(\K\right)]_{\alpha, \alpha'}\left(x, x'\right) = \mathcal T_\infty (\Sigma)$. Thus $ \C \circ \A$ is $\sigma_\omega^2\beta$-Lipschitz.
            
            We now prove the theorem by induction. Assume $K^l_t\overset{P}{\to} K^l_\infty$ as $t\to\infty$ (obvious for $l = 0$). 
            
            We first remark that $\Kinf^l \in \textrm{PSD}_{\left|\X\right|d}$, since $\textrm{PSD}_{\left|\X\right|d} \ni \E\left[K^l_t\right]\overset{}{\to} K^l_\infty$ and $\textrm{PSD}_{\left|\X\right|d}$ is closed. Moreover, due to \Eqref{eq pzx}, $\A$ necessarily preserves positive semi-definiteness, and therefore $\A\left(\Kinf^l\right) \in \textrm{PSD}_{\left|\X\right|d}$ as well.
            
            Now let $\varepsilon>0$ be sufficiently small so that the $\frac{\varepsilon}{2\beta}$-neighborhood of $\A(\K^l_\infty)$ is contained in ${\rm PSD}_{\left|\X\right|d}(R)$, where we take $R$ to be large enough for $\Kinf^l$ to be an interior point 
            of ${\rm PSD}_{\left|\X\right|d}(R)$.\footnote{Such $R$ always exists, because the diagonal terms of $\Kinf^l$ are always non-zero. See \cite[Lemma 4]{long2019effect} for proof. }

            Since 
            \begin{align}
                \left\|\Kinf^{l+1} - K^{l+1}_t\right\|_{\infty} 
                \leq  
                &\left\|\Kinf^{l+1} - \C \circ \A\left(\K^l_t\right)\right\|_{\infty} +  
                \left\|\C \circ \A\left(\K^{l}_t\right) - K^{l+1}_t \right\|_{\infty} 
                \\
                =
                &\left\| \C \circ \A\left(\Kinf^{l}\right) - \C \circ \A\left(\K^l_t\right)\right\|_{\infty} +  
                \left\|\C \circ \A\left(\K^l_t\right) - K^{l+1}_t \right\|_{\infty} , 
            \end{align}
            to prove $K^{l+1}_t \overset{P}{\to} \Kinf^{l+1}$, it suffices to show that for every $\delta>0$, there is a $t^*$ such that for all $t>t^*$, 
            \begin{align}\label{eq:two-est}
               P\left(\left\| \C \circ \A\left(\Kinf^{l}\right) - \C \circ \A\left(\K^l_t\right)\right\|_{\infty} >\frac{\varepsilon}{2}\right) + P\left(\left\|\C \circ \A\left(\K^l_t\right) - K^{l+1}_t \right\|_{\infty}>\frac{\varepsilon}{2}\right) <\delta .
            \end{align}
            By our induction assumption, there is a $t^l$ such that for all $t>t^l$
            \begin{align}\label{eq:Kl induction}
                P\left(\left\|\Kinf^{l} - K^l_t\right\|_{\infty} > \frac{\varepsilon}{2\sigma_\omega^2\beta}\right) < \frac{\delta}{3}.
            \end{align}
            Since $\C\circ \A$ is $\sigma_\omega^2\beta$-Lipschitz, then   
            \begin{align}
                P\left(\left\| \C \circ \A \left(\Kinf^{l}\right) - \C \circ \A \left(\K^l_t
                \right)\right\|_{\infty} > \frac{\varepsilon}{2}\right) < \frac{\delta}{3}.
            \end{align}
            To bound the second term in \Eqref{eq:two-est}, let $U\left(t\right)$ denote the event
            \begin{align}
                U\left(t\right) \equiv \left\{\A\left(\K^l_t\right)\in {\rm PSD}_{\left|\X\right|d}\left(R\right)\right\} 
            \end{align}
            and $U\left(t\right)^c$ its complement.
            For all $t>t^l$ its probability is 
            \begin{align}
                P\left(U\left(t\right)^c\right)
                &< P\left(\left\| \A\left(\K^{l}_{\infty}\right) - \A\left(\K^l_t\right) \right\|_{\infty} > \frac{\varepsilon}{2\beta}\right) \quad &\left[\textrm{assumption on small } \varepsilon\right]\\
                &< P\left(\sigma^2_{\omega}\left\| \K^{l}_{\infty} - \K^l_t \right\|_{\infty} > \frac{\varepsilon}{2\beta}\right) \quad &\left[\A \textrm{ is } \sigma^2_{\omega}\textrm{-Lipshitz}\right]\\
                &= P\left(\left\| \K^l_{\infty} - \K^l_t \right\|_{\infty} > \frac{\varepsilon}{2\sigma^2_{\omega}\beta}\right)
                < \frac{\delta}{3}. \quad &\left[\textrm{Equation } \ref{eq:Kl induction}\right]
            \end{align}
            Finally, denote
            \begin{align}
              \left[V\right(t\left)\right]_{\alpha, \alpha'}\left(x, x'\right) \equiv \left\{\left|\left[\C \circ \A\left(\K^l_t\right)\right]_{\alpha, \alpha'}\left(x, x'\right) - \left[K^{l+1}_t\right]_{\alpha, \alpha'}\left(x, x'\right)\right| >\frac{\varepsilon}{2}\right\}, 
            \end{align}
            i.e. the event that the inequality inside the braces holds. The fact 
            \begin{align*}
                \left\{ \left\|\C \circ \A\left(\K^l_t\right) - K^{l+1}_t \right\|_{\infty} >\frac{\varepsilon}{2}\right\} 
                \subseteq
                U\left(t\right)^c \bigcup \left(\bigcup_{x, x', \alpha, \alpha'} 
                [V\left(t\right)]_{\alpha, \alpha'}\left(x, x'\right) \bigcap U\left(t\right)\right)
            \end{align*}
            implies  
            \begin{align}
                P\left(\left\{ \left\|\C \circ \A\left(\K^l_t\right) - K^{l+1} (t)\right\|_{\infty} >\frac{\varepsilon}{2}\right\} \right)
                \leq \frac{\delta}{3} + 
                \left|\X\right|^2d^2 \max_{x, x', \alpha, \alpha'}P\left(\left[V\left(t\right)\right]_{\alpha, \alpha'}\left(x, x'\right) \cap U\left(t\right)\right ),
                \label{eq:second_assumption}
            \end{align}
            where the maximum is taken over all $\left(x, x', \alpha, \alpha'\right)\in \X^2\times\left[d\right]$. 
            
            Consider a fixed $\kappa \in \text{PSD}_{\left|\X\right|d}$, and define
            \begin{equation}
                \Sigma\left(\kappa, \alpha, \alpha', x, x'\right) \equiv \begin{pmatrix*}[l]
                              [\A\left(\kappa\right)]_{\alpha, \alpha}\left(x, x\right) & [\A\left(\kappa\right)]_{\alpha, \alpha'}\left(x, x'\right) \\
                                [\A\left(\kappa\right)]_{\alpha', \alpha}\left(x', x\right) & [\A\left(\kappa\right)]_{\alpha', \alpha'}\left(x', x'\right)
                              \end{pmatrix*},
            \end{equation}
            a deterministic matrix in $\text{PSD}_2$. Then 
            \begin{align}
                \left[\C \circ \A\left(\kappa\right)\right]_{\alpha, \alpha'}\left(x, x'\right) = \mathcal T_\infty \left(\Sigma\left(\kappa, \alpha, \alpha', x, x'\right)\right),
            \end{align}
            and, conditioned on $K^l_t$,
            \begin{align}
                \left[\K^{l+1}_t | \K^l_t = \kappa \right]_{\alpha, \alpha'}\left(x, x'\right) = \frac{1}{n^{l+1}(t)}\sum_{i=1}^{n^{l+1}(t)} \phi\left(z^{l, t}_{i,\alpha}\left(x\right)\right) \phi\left(z^{l, t}_{i, \alpha'}\left(x'\right)\right), 
            \end{align}
            where $\left\{\left (z^{l, t}_{i,\alpha}(x),  z^{l, t}_{i, \alpha'}(x')\right) \Big| K^l_t = \kappa\right\}_{i\in[n^{l+1}(t)]} \textrm {are i.i.d.}  \sim \N\left(0, \Sigma\left(\kappa, \alpha, \alpha', x, x'\right)\right)$.  
            
            Then if $\Sigma\left(\kappa, \alpha, \alpha', x, x'\right) \in \text{PSD}_2\left(R\right)$ we can apply property \hyperref[property 3]{3} of $\Psi$ to conclude that:
            \begin{align}
                P\left(\left(\left[V\left(t\right)\right]_{\alpha, \alpha'}\left(x, x'\right) \cap U\left(t\right)\right) \Big| \K^l_t = \kappa\right) <\rho_{n^{l+1}(t)}\left(\phi, R, \frac \varepsilon 2\right).\label{eq:prop_3_conditional}
            \end{align}
            
            However, if $\Sigma\left(\kappa, \alpha, \alpha', x, x'\right) \not\in \text{PSD}_2\left(R\right)$, then necessarily $\A\left(\kappa\right) \not\in \text{PSD}_{\left|\X\right|d}\left(R\right)$ (since $\Sigma\left(\kappa, \alpha, \alpha', x, x'\right) \in \text{PSD}_2$), ensuring that $P\left(U\left(t\right) | K^l_t = \kappa\right) = 0$. Therefore \Eqref{eq:prop_3_conditional} holds for any $\kappa \in \text{PSD}_{\left|\X\right|d}$, and for any $\left(x, x', \alpha, \alpha'\right) \in \X^2 \times \left[d\right]^2$.
            
            We further remark that $\rho_{n^{l+1}(t)}\left(\phi, R, \frac{\varepsilon}{2}\right)$ is deterministic and does not depend on $\left(\kappa, x, x', \alpha, \alpha'\right)$. Marginalizing out $\K^l_t$ and maximizing over $\left(x, x', \alpha, \alpha'\right)$ in \Eqref{eq:prop_3_conditional} we conclude that
            \begin{align}
                 \max_{x, x', \alpha,\alpha'}P\left(\left[V\left(t\right)\right]_{\alpha, \alpha'}\left(x, x'\right) \cap U\left(t\right)\right) <\rho_{n^{l+1}(t)}\left(\phi, R, \frac \varepsilon 2\right).
            \end{align}
            Since $\rho_{n}\left(\phi, R, \frac \varepsilon 2\right) \to 0$ as $n\to\infty$, there exists $n$ such that for any $n^{l+1}(t)\geq n$,
            \begin{align}
                \max_{x, x', \alpha,\alpha'}P\left(\left[V\left(t\right)\right]_{\alpha, \alpha'}\left(x, x'\right)  \cap U\left(t\right)\right) < 
                \rho_{n^{l+1}(t)}\left(\phi, R, \frac \varepsilon 2\right) \leq
                \frac {\delta}{3\left|\X\right|^2d^2}, 
            \end{align}
            and, substituting this bound in \Eqref{eq:second_assumption},
            \begin{align}
                &P\left(\left\{ \left\|\C \circ \A\left(\K^l_t\right) - K^{l+1}_t \right\|_{\infty} >\frac{\varepsilon}{2}\right\} \cap U\left(t\right)\right) < \frac{2\delta}{3}.  
            \end{align}
            Therefore we just need to choose $t^{l+1}>t^l$ so that $n^{l+1}(t)\geq n$ for all $t>t^{l+1}$.
        \end{proof}

        \begin{remark}\label{thm:padding}
            We list some directions to strengthen / extend the results of Theorem \ref{thm:uniformConvergence} (and thus Theorem \ref{thm:weak-convergence-simultaneous-limit}) using the above framework. 
            \begin{enumerate}
                \item Consider stacking a deterministic channel-wise linear operator right after the convolutional layer. Again, strided convolution, convolution with no ($\operatorname{valid}$) padding and (non-global) average pooling are particular examples of this category. Let $B\in\R^{d'\times d}$ denote a linear operator. Then the recurrent formula between two consecutive layers is
                \begin{align}
                    K^{l+1}_\infty  = \mathcal C \circ \mathcal B \circ \A (K^l_\infty)   
                \end{align}
                 where $\mathcal B$ is the linear operator on the covariance matrix induced by $B$.   Conditioned on $K^l_t$, since the outputs after applying the linear operator $B$ are still i.i.d. Gaussian (and the property \hyperref[property 3]{3} is applicable), the analysis in the above proof can carry over with $\A$ replaced by $\mathcal B\circ \A$.    
                
                \item More generally, one may consider inserting an operator $\mathfrak {op}$ (e.g. max-pooling, dropout and more interestingly, normalization) in some hidden layer.  
            
                \item Gaussian prior on weights and biases might be relaxed to sub-Gaussian.
            \end{enumerate}
        \end{remark}

    \subsection{Proof of Theorem \ref{thm:phi_prime}}\label{proof of thm phi_prime}
        
        Note that absolutely continuous exponentially bounded functions contain all polynomials, and are  closed under multiplication and integration in the sense that for any constant $C$ the function 
            \begin{align}
                \int_{0}^x \phi(t) dt + C 
            \end{align}
        is also exponentially bounded. Theorem \ref{thm:phi_prime} is a consequence of the following lemma. 
        \begin{lemma}\label{lemma:three-statements}
            The following is true:
            \begin{enumerate}
                \item for $k\geq1$,  $C_k\left(\phi, R\right)<\infty$ if $\phi$ is exponentially bounded. 
                \item $\phi\in\Psi_2$ if $\phi'$ exists a.e. and  is exponentially bounded. 
                \item $\phi\in\Psi_3$ if $C_4\left(\phi, R\right) <\infty$.  
            \end{enumerate}
        \end{lemma}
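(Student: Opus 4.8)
The plan is to establish the three claims of Lemma~\ref{lemma:three-statements} in increasing order of difficulty, with the Lipschitz statement (2) as the crux; Theorem~\ref{thm:phi_prime} then follows immediately since $\Psi=\Psi_1\cap\Psi_2\cap\Psi_3$ and the hypotheses ``$\phi$ absolutely continuous, $\phi'$ exponentially bounded'' feed (1) (hence $\Psi_1$, and via $C_4<\infty$ also $\Psi_3$) and (2) (hence $\Psi_2$).

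For statement (1) I would argue directly from the exponential bound: if $|\phi(x)|\le a e^{b|x|}$ a.e., then for $x\sim\N(0,r)$ and any $k\ge1$,
\[
\E|\phi(x)|^k \le a^k\,\E e^{kb|x|} \le a^k\left(\E e^{kbx}+\E e^{-kbx}\right)=2a^k e^{k^2b^2 r/2},
\]
using the Gaussian moment generating function; taking the supremum over $r\in[1/R,R]$ gives $C_k(\phi,R)\le 2a^k e^{k^2b^2R/2}<\infty$. For statement (3) I would use Chebyshev's inequality together with (1). Given $\Sigma\in\text{PSD}_2(R)$ and $(x_i,y_i)\sim\N(0,\Sigma)$ i.i.d., the variables $\phi(x_i)\phi(y_i)$ are i.i.d.\ with mean $\mathcal T_\infty(\Sigma)$, so
\[
P\!\left(\left|\tfrac1n\textstyle\sum_{i=1}^n\phi(x_i)\phi(y_i)-\mathcal T_\infty(\Sigma)\right|>\varepsilon\right)\le\frac{\E\!\left[\phi(x_1)^2\phi(y_1)^2\right]}{n\varepsilon^2}\le\frac{\left(\E\phi(x_1)^4\right)^{1/2}\left(\E\phi(y_1)^4\right)^{1/2}}{n\varepsilon^2}\le\frac{C_4(\phi,R)}{n\varepsilon^2},
\]
by Cauchy--Schwarz and the fact that the marginal variances lie in $[1/R,R]$. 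Thus $\rho_n(\phi,\varepsilon,R):=C_4(\phi,R)/(n\varepsilon^2)\to0$, which is exactly property~\hyperref[property 3]{3}; this uses only $C_4(\phi,R)<\infty$.

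For statement (2) --- the main obstacle --- I would pass to correlation coordinates and invoke the dual-activation / Hermite-expansion machinery in the spirit of \citet{daniely2016}. Writing $\Sigma=\bigl(\begin{smallmatrix}r & \rho\sqrt{rs}\\ \rho\sqrt{rs} & s\end{smallmatrix}\bigr)$ with $r,s\in[1/R,R]$, $\rho\in[-1,1]$, one has $\mathcal T_\infty(\Sigma)=\sum_{n\ge0}a_n(\sqrt r)\,a_n(\sqrt s)\,\rho^n$, where $a_n(c)$ is the $n$-th normalized Hermite coefficient of $z\mapsto\phi(cz)$ (the series converges since $\phi(c\,\cdot)\in L^2(\gamma)$ by (1), with $\gamma$ the standard Gaussian). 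I would then bound the partials on the compact box: $\partial_\rho\mathcal T_\infty=\sum_{n\ge1}n\,a_n(\sqrt r)a_n(\sqrt s)\rho^{n-1}$ is at most $\bigl(\sum_n n\,a_n(\sqrt r)^2\bigr)^{1/2}\bigl(\sum_n n\,a_n(\sqrt s)^2\bigr)^{1/2}$ in absolute value, and by the Hermite identity $\sum_n n\,a_n(c)^2=\|(\phi(c\,\cdot))'\|_{L^2(\gamma)}^2=c^2\,\E_z[\phi'(cz)^2]$, which is finite and uniform over $c^2\in[1/R,R]$ because $\phi$ is absolutely continuous with exponentially bounded derivative; similarly, differentiating $a_n(c)=\E_z[\phi(cz)h_n(z)]$ under the integral (justified by exponential domination) gives $a_n'(c)=\E_z[z\,\phi'(cz)h_n(z)]$, whence $\sum_n a_n'(c)^2=\E_z[z^2\phi'(cz)^2]<\infty$ uniformly, controlling $\partial_r,\partial_s$. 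A fundamental-theorem-of-calculus step then makes $\mathcal T_\infty$ Lipschitz in $(r,s,\rho)$, and composing with the map $\Sigma\mapsto(r,s,\rho)$ --- itself Lipschitz on $\text{PSD}_2(R)$ since the diagonal entries are bounded away from $0$ --- yields the required Lipschitz bound w.r.t.\ $\|\cdot\|_\infty$, i.e.\ $\phi\in\Psi_2$.

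The delicate point, and the reason I prefer the Hermite route to differentiating $\mathcal T_\infty(\Sigma)$ directly, is the behaviour at $|\rho|=1$ (rank-deficient $\Sigma$): naive parametrizations of the Gaussian pair produce spurious $1/\sqrt{1-\rho^2}$ factors, whereas the power series in $\rho$ converges absolutely and uniformly on the \emph{closed} interval $[-1,1]$, legitimizing term-by-term differentiation up to the boundary. The remaining routine work is verifying the Hermite derivative identity for functions in the Gaussian Sobolev space and the domination needed to differentiate $a_n(c)$ under the expectation --- both straightforward given the exponential bound on $\phi'$.
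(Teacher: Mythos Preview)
Your arguments for statements (1) and (3) are essentially identical to the paper's: the same Gaussian MGF bound for (1), and Chebyshev plus Cauchy--Schwarz to reach $C_4(\phi,R)/(n\varepsilon^2)$ for (3).

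For statement (2), your proof is correct but takes a genuinely different route. The paper does \emph{not} pass through Hermite expansions. Instead it writes $\Sigma=AA^T$, $\Sigma'=A'A'^T$ via lower-triangular square roots, represents $\mathcal T_\infty(\Sigma)=\E_{w\sim\N(0,I_2)}[f(Aw)]$ with $f(x,y)=\phi(x)\phi(y)$, and interpolates $A(t)=(1-t)A+tA'$; the Mean Value Theorem gives
\[
\mathcal T_\infty(\Sigma)-\mathcal T_\infty(\Sigma')=\frac{1}{2\pi}\int_0^1\!\!\int\nabla f(A(t)w)\cdot(A'-A)w\;e^{-\|w\|^2/2}\,dw\,dt,
\]
after which exponential boundedness of $\phi$ and $\phi'$ controls the integrand and the elementary estimate $\|A-A'\|_\infty\lesssim_R\|\Sigma-\Sigma'\|_\infty$ finishes. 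This avoids all the Hermite machinery (Parseval, the Gaussian-Sobolev identity $\sum n\,a_n^2=\|g'\|_{L^2(\gamma)}^2$, differentiation of coefficients) and sidesteps your ``delicate point'' at $|\rho|=1$ entirely, since the square-root parametrization is smooth across rank-deficient $\Sigma$. Your approach, by contrast, is more structural: it makes the dependence on each of $(r,s,\rho)$ explicit, ties directly into the dual-activation framework of \citet{daniely2016}, and would generalize cleanly to quantitative statements about higher derivatives of $\mathcal T_\infty$. One small caveat: to justify term-by-term differentiation in $\rho$ on the closed interval you need absolute and uniform convergence of the \emph{derivative} series, not the original one; this does hold here since $\sum_n n\,|a_n(\sqrt r)a_n(\sqrt s)|\le(\sum n\,a_n(\sqrt r)^2)^{1/2}(\sum n\,a_n(\sqrt s)^2)^{1/2}<\infty$, but it is worth stating explicitly. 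Also note that both your argument and the paper's implicitly use absolute continuity of $\phi$ (for the chain rule / Gaussian-Sobolev identity, respectively for the MVT step), which is slightly stronger than the lemma's literal hypothesis ``$\phi'$ exists a.e.''; this is harmless since the application (Theorem~\ref{thm:phi_prime}) assumes absolute continuity anyway.
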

        Indeed, if $\phi$ is absolutely continuous and $\phi'$ is exponentially bounded, then $\phi$ is also exponentially bounded. By the above lemma, $\phi\in\Psi$.  
           
        \begin{proof}[Proof of Lemma \ref{lemma:three-statements}]
                
                \textbf{1.} We prove the first statement. Assume $|\phi(x)|\leq a e^{b|x|}$.  
                \begin{align}
                    \E_{x\sim \N\left(0, r\right)}\left|\phi\left(x\right)\right|^k
                    =  
                    \E_{x\sim \N\left(0, 1\right)}\left|\phi\left(\sqrt r x\right)\right|^k
                    \leq  
                    \E_{x\sim \N\left(0, 1\right)}\left|a e^{ \sqrt r b \left|\X\right| }\right|^k
                    \leq 2 a^k e^{ k^2b^2 r /2}. 
                \end{align}
                Thus 
                \begin{align}
                   C_k\left(\phi, R\right)=  \sup_{1/R \leq r \leq R}\E_{x\sim \N\left(0, r\right)}\left|\phi\left(x\right)\right|^k \leq 2 a^k e^{k^2 b^2 R /2}.
                \end{align}
                
                \textbf{2.} To prove the second statement, let $\Sigma, \Sigma' \in {\rm PSD}_2(R) $ and define $A$ (similarly for $A'$):   
                 \begin{equation}
                        A \equiv \begin{pmatrix*}[l]
                                        \sqrt{\Sigma_{11}} & 0  \\
                                        \frac{\Sigma_{12}}{\sqrt{\Sigma_{11}}} &  \sqrt{\frac{\Sigma_{22}\Sigma_{11} - \Sigma_{12}^2}{\Sigma_{11}}} 
                                      \end{pmatrix*}.
                    \end{equation}
                
                Then $AA^T =\Sigma$ (and $A'A'^T =\Sigma'$). Let  
                \begin{align}
                    A(t) \equiv (1-t)A + t A', \quad t\in [0, 1]  
                \end{align}
                and 
                \begin{align}
                    f(w) \equiv \phi(x)\phi(y) \quad \textrm{where}\quad w \equiv (x,y)^T.
                \end{align}
                Since $\phi'$ is exponentially bounded, $\phi$ is also exponentially bounded due to being absolutely continuous. In addition, $p\left(\left\|w\right\|_2\right)\left\|\nabla f(w)\right\|_2$ is exponentially bounded for any polynomial $p\left(\left\|w\right\|_2\right)$. 
                
                Applying the Mean Value Theorem (we use the notation $\lesssim$ to hide the dependence on $R$ and other absolute constants)  
                \begin{align}
                    \left|\mathcal T_\infty(\Sigma) -  \mathcal T_\infty(\Sigma')\right|
                    =  &\frac 1 {2\pi} \left|\int \left(f\left(A w\right) - f\left(A' w\right)\right)  \exp\left(-\left\|w\right\|^2_2/2\right) dw \right|
                    \\
                    = &  \frac 1 {2\pi} \left| \int \int_{[0, 1]}\left(\nabla f\left(A(t) w\right) \right) \left(\left(A'-A\right)w\right)  \exp\left(-\left\|w\right\|^2_2/2\right) dt dw\right|
                    \\
                    \lesssim 
                    & \int_{[0, 1]}\int \left\|\left(A'-A\right)w \right\|_2 \left\|\nabla f\left(A(t) w\right)\right\|_{2} \exp\left(-\left\|w\right\|^2_2/2\right) dw dt
                    \\
                    \leq & \int_{[0, 1]}\int \left\|A'-A\right\|_{\textrm{op}}\left\|w \right\|_2\left\|\nabla f\left(A(t) w\right)\right\|_{2} \exp\left(-\left\|w\right\|^2_2/2\right) dw dt.
                    \end{align}
                    Note that the operator norm is bounded by the infinity norm (up to a multiplicity constant) and $\left\|w \right\|_2\left\|\nabla f\left(A(t) w\right)\right\|_{2}$ is exponentially bounded. There is a constant $a$ (hidden in $\lesssim$) and $b$ such that the above is bounded by 
                    \begin{align}
                      & \int_{[0, 1]}\int \left\|A'-A\right\|_{\infty} \exp\left(b \left\|A(t)\right\|_{\infty}\left\|w\right\|_2\right) \exp\left(-\left\|w\right\|^2_2/2\right) dw dt
                    \\
                    \lesssim  
                    & \left\|A'-A\right\|_{\infty}   \int_{[0, 1]}\int  \exp\left(b\sqrt R\left\|w\right\|_2 -\left\|w\right\|^2_2/2\right) dw dt
                    \\
                    \lesssim  & \left\|A'-A\right\|_{\infty}  
                    \\
                    \lesssim  & \left\|\Sigma'- \Sigma\right\|_{\infty} . 
                \end{align}
                Here we have applied the facts 
                \begin{align}
                    \left\|A'-A\right\|_{\infty}  \lesssim \left\|\Sigma - \Sigma' \right\|_{\infty} \quad  {\rm and } \quad  \left\|A(t)\right\|_\infty \leq \sqrt R.
                \end{align}
               
                \textbf{3.} Chebyshev's inequality implies 
                \begin{align}
                     &P\left(\left| \frac 1 n \sum_{i=1}^n \phi\left(x_i\right)\phi\left(y_i\right)  -  {\mathcal T}_\infty\left(\Sigma\right)\right|>\varepsilon \right)
                     \\
                     \leq &
                     \frac 1 {n \epsilon^2}\textrm{Var}\left( \phi\left(x_i\right)\phi\left(y_i\right)\right) \leq \frac 1 {n \epsilon^2}\E\left|\phi\left(x_i\right)\phi\left(y_i\right)\right|^2
                     \\
                     \leq & \frac 1 {n \epsilon^2} C_4\left(\phi, R\right)   \to 0 \quad{\rm as} \quad n\to \infty.  \label{eq:chebyshev's bound}
                \end{align}
            \end{proof}
            
            \begin{remark}
                In practice the $1 / n$ decay bound obtained by Chebyshev's inequality in \Eqref{eq:chebyshev's bound} is often too weak to be useful. However, if $\phi$ is linearly bounded, then one can obtain an exponential decay bound via the following concentration inequality:   
            \end{remark} 
  
            \begin{lemma}\label{lemma:exponential}
                If $|\phi(x)| \leq a + b|x|$ a.e., then there is an absolute constant $c>0$ and a constant $\kappa = \kappa(a, b, R) >0$ such that property \hyperref[property 3]{3} (\Eqref{eq:uniform-convergence}) holds with 
                \begin{align}
                        \rho_n(\phi, \epsilon, R)
                        = 2\exp \left(-c \min \left\{ \frac {n^2\varepsilon^2}{(2\kappa)^2}, \frac {n\varepsilon}{2\kappa}  \right\}\right). 
                \end{align}               
            \end{lemma}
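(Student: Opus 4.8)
The plan is to recognize $\frac{1}{n}\sum_{i=1}^{n}\phi(x_i)\phi(y_i)$ as a normalized sum of i.i.d.\ \emph{sub-exponential} random variables and to apply a Bernstein-type concentration inequality, taking care that the sub-exponential parameter depends only on $a$, $b$, and $R$ so that the bound is uniform over $\Sigma\in\mathrm{PSD}_2(R)$, as property~3 of $\Psi$ requires. By the first part of Lemma~\ref{lemma:three-statements}, a linearly bounded (hence exponentially bounded) $\phi$ has all moments finite, so $\mathcal{T}_\infty(\Sigma)=\E[\phi(x_i)\phi(y_i)]$ is well defined and we are genuinely estimating a deviation from a mean.

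First I would bound the marginals: for $\Sigma\in\mathrm{PSD}_2(R)$ the coordinate $x_i\sim\N(0,\Sigma_{11})$ has variance $\Sigma_{11}\le R$, hence $\psi_2$-norm at most $c\sqrt R$ for an absolute $c$, and since $|\phi(x_i)|\le a+b|x_i|$ a.e., monotonicity and the triangle inequality for the Orlicz norm give $\|\phi(x_i)\|_{\psi_2}\le\kappa_0$ with $\kappa_0=\kappa_0(a,b,R)$, and likewise for $\phi(y_i)$; only the upper diagonal bound of $\Sigma$ enters, so this is uniform over $\mathrm{PSD}_2(R)$. A product of two sub-Gaussian variables is sub-exponential: by Cauchy--Schwarz for Orlicz norms, $Z_i\equiv\phi(x_i)\phi(y_i)$ satisfies $\|Z_i\|_{\psi_1}\le\|\phi(x_i)\|_{\psi_2}\|\phi(y_i)\|_{\psi_2}\le\kappa_0^2=:\kappa$, and centering preserves this up to a factor $2$, $\|Z_i-\E Z_i\|_{\psi_1}\le 2\kappa$. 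The $Z_i-\E Z_i$ being i.i.d.\ and mean zero, Bernstein's inequality for sub-exponential summands then gives, for an absolute $c>0$,
\begin{align}
P\!\left(\left|\tfrac{1}{n}\sum_{i=1}^{n}\phi(x_i)\phi(y_i)-\mathcal{T}_\infty(\Sigma)\right|>\varepsilon\right)
&\le 2\exp\!\left(-c\,\min\!\left\{\frac{n^2\varepsilon^2}{(2\kappa)^2},\ \frac{n\varepsilon}{2\kappa}\right\}\right),
\end{align}
i.e.\ the asserted $\rho_n(\phi,\varepsilon,R)$; since $\kappa=\kappa(a,b,R)$ this holds for every $\Sigma\in\mathrm{PSD}_2(R)$ and $\rho_n\to0$, so $\phi\in\Psi_3$ with an exponential rate, strengthening the $1/n$ Chebyshev bound used above.

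The argument is essentially routine and I expect no genuine obstacle; the only care needed is in the bookkeeping. Two points: (i) keeping every constant a function of $(a,b,R)$ alone, which the above arranges by using the uniform diagonal bound $R$ in the marginal $\psi_2$-estimates and by noting that the Cauchy--Schwarz step for the product never sees the correlation $\Sigma_{12}$; and (ii) the two-regime form of Bernstein's inequality. If one would rather not cite Bernstein as a black box, I would instead run a Chernoff bound on $\E\exp\!\left(\lambda(Z_i-\E Z_i)\right)$, which is controlled for $|\lambda|\lesssim 1/\kappa$ by the sub-exponential estimate; optimizing $\lambda$ over $[0,1/\kappa]$ yields the quadratic (sub-Gaussian) branch for small $\varepsilon$ and the linear branch once the unconstrained optimum would leave $[0,1/\kappa]$. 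The residual subtlety is simply carrying the constants so that the final exponent has the shape stated in the lemma.
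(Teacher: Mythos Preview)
Your proposal is correct and follows essentially the same route as the paper: show that $\phi(x_i)\phi(y_i)$ is sub-exponential with parameter depending only on $(a,b,R)$, then invoke a Bernstein-type inequality for i.i.d.\ sub-exponential summands. The only cosmetic difference is that the paper establishes sub-exponentiality via the moment-growth characterization $\bigl(\E|\phi(x)\phi(y)|^p\bigr)^{1/p}\le\kappa p$ (proved by Cauchy--Schwarz on the $2p$-th moments and Gaussian moment bounds), whereas you phrase the same step in Orlicz-norm language ($\psi_2$ marginals, $\psi_1$ product); these are equivalent formulations and both appeal to the same Bernstein inequality from \citet{vershynin2010introduction}.
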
   
            
            \begin{proof}
                We postpone the proof of the following claim. 
                \begin{claim}\label{lemma:p-norm}
                    Assume  $|\phi(x)| \leq a + b|x|$. Then there is a $\kappa = \kappa(a, b, R)$ such that for all $\Sigma \in {\rm PSD}_2(R)$ and all $p \geq 1$, 
                    \begin{align}
                        \label{eq:pnorm}
                        \left(\mathbb E_{(x, y)  \sim \N\left(0, \Sigma\right)} |\phi(x)\phi(y)|^p\right)^{1/p} \leq \kappa p.
                    \end{align}
                \end{claim}
                Claim \ref{lemma:p-norm} and the triangle inequality imply  
                \begin{align}
                    \label{eq:triangle}
                    \left(\mathbb E_{(x, y)  \sim \N\left(0, \Sigma\right)} |\phi(x)\phi(y) - \mathbb E \phi(x)\phi(y)|^p\right)^{1/p} \leq 2\kappa p.
                \end{align}
                We can apply Bernstein-type inequality \citep[Lemma 5.16]{vershynin2010introduction} to conclude that there is a $c>0$ such that for every $\Sigma \in {\rm PSD}_2(R)$ and any $\left\{(x_i, y_i)\right\}_{i=1}^n \textrm{ i.i.d. } \sim \N\left(0, \Sigma\right)$
                \begin{align}
                   P\left(\left| \frac 1 n \sum_{i=1}^n \phi\left(x_i\right)\phi\left(y_i\right) -  {\mathcal T}_\infty(\Sigma)\right|>\varepsilon \right)
                    \leq 2\exp \left(-c \min \left\{ \frac {n^2\varepsilon^2}{(2\kappa)^2}, \frac {n\varepsilon}{2\kappa}  \right\}\right). 
                \end{align}
                It remains to prove Claim \ref{lemma:p-norm}. For $p\geq 1$, 
                \begin{align}
                     \left(\mathbb E_{(x, y)  \sim \N\left(0, \Sigma\right)} |\phi(x)\phi(y)|^p\right)^{1/p} 
                    \leq 
                    &\left(\mathbb E_{x  \sim \N\left(0, \Sigma_{11}\right)} |\phi(x)|^{2p} \right)^{1/2p} \left(\mathbb E_{y  \sim \N\left(0, \Sigma_{22}\right)} |\phi(y)|^{2p}\right)^{1/2p}
                    \\
                    \leq 
                    & \left(a + b \left(\mathbb E |x|^{2p} \right)^{1/2p}\right)
                    \left(a + b \left(\mathbb E |y|^{2p} \right)^{1/2p}\right) 
                    \\
                    \leq  
                    &\left( a + b\sqrt R \left(\mathbb E_{u\sim\N(0, 1)} |u|^{2p} \right)^{1/2p} \right)^2
                    \\
                    \leq & \left(a + b\sqrt R\left(c'^{2p} p^p\right)^{1/2p}\right)^2
                    \\
                    \leq &\left(a + bc'^{2}\sqrt R\right)^2 p
                    \\
                    \equiv &\kappa p. 
                \end{align}
                We applied Cauchy–Schwarz inequality in the first inequality, the triangle inequality in the second one, the fact $\Sigma_{11}, \Sigma_{22}\leq R$ in the third one, absolute moments estimate of standard Gaussian in the fourth one, where $c'$ is a constant such that 
                \begin{align}
                    \left( \mathbb  E_{u\sim \N(0, 1)} |u|^{p} \right)^{1/p}  \leq c' \sqrt p. 
                \end{align}
           \end{proof}

\section{Experimental Setup}\label{sec exp}
    
    Throughout this work we only consider $3\times 3$ (possibly unshared) convolutional filters with stride $1$ and no dilation.
    
    All inputs are normalized to have zero mean and unit variance,  i.e. lie on the $d-$dimensional sphere of radius $\sqrt{d}$, where $d$ is the total dimensionality of the input.
    
    All labels are treated as regression targets with zero mean. i.e. for a single-class classification problem with $C$ classes targets are $C-$dimensional vectors with $-1/C$ and $(C-1)/C$ entries in incorrect and correct class indices respectively.
    
    If a subset of a full dataset is considered for computational reasons, it is randomly selected in a balanced fashion, i.e. with each class having an equal number of samples. No data augmentation is used.

    All experiments were implemented in Tensorflow \citep{abadi2016tensorflow} and executed with the help of Vizier \citep{golovin2017google}.

    All neural networks are trained using Adam \citep{kingma2014adam} minimizing the mean squared error loss.

    \subsection{Many-channel CNNs and LCNs}\label{app:sgd_to_gp}
        Relevant Figures: \ref{fig:sgd_vs_gp} (b), \ref{fig:sgd_to_gp_valid_loss}, \ref{fig:sgd_to_gp_train_loss}.
    
        We use a training and validation subsets of CIFAR10 of sizes $500$ and $4000$ respectively. All images are bilinearly downsampled to $8\times 8$ pixels.
        
        All models have $3$ hidden layers with an $\erf$ nonlinearity. No ($\operatorname{valid}$) padding is used.
        
        Weight and bias variances are set to $\sigma_\omega^2 \approx 1.7562$ and $\sigma_b^2 \approx 0.1841$, corresponding to the pre-activation variance fixed point $q^* = 1$ \citep{poole2016exponential} for the $\erf$ nonlinearity.
        
        NN training proceeds for $2^{19}$ gradient updates, but aborts if no progress on training loss is observed for the last $100$ epochs. If the training loss does not reduce by at least $10^{-4}$ for $20$ epochs, the learning rate is divided by $10$.
        
        All computations are done with $32$-bit precision.
        
        The following NN parameters are considered:\footnote{Due to time and memory limitations certain large configurations could not be evaluated. We believe this did not impact the results of this work in a qualitative way.}
        \begin{enumerate}
            \item Architecture: CNN or LCN.
            \item Pooling: no pooling or a single global average pooling (averaging over spatial dimensions) before the final FC layer.
            \item Number of channels: $2^k$ for $k$ from $0$ to $12$.
            \item Initial learning rate: $10^{-k}$ for $k$ from $0$ to $15$.
            \item Weight decay: $0$ and $10^{-k}$ for $k$ from $0$ to $8$.
            \item Batch size: $10$, $25$, $50$, $100$, $200$.
        \end{enumerate}
    
        For NNs, all models are filtered to only $100\%$-accurate ones on the training set and then for each configuration of \{architecture, pooling, number of channels\} the model with the lowest validation loss is selected among the configurations of \{learning rate, weight decay, batch size\}.
        
        For GPs, the same CNN-GP is plotted against CNN and LCN networks without pooling. For LCN with pooling, inference was done with an appropriately rescaled CNN-GP kernel, i.e. $\left(\Ktopinf^{\textrm{vec}} - \sigma_b^2\right) / d + \sigma_b^2,$ where $d$ is the spatial size of the penultimate layer. For CNNs with pooling, a Monte Carlo estimate was computed (see \sref{sec:mcgp}) with $n = 2^{12}$ filters and $M = 2^6$ samples.
        
        For GP inference, the initial diagonal regularization term applied to the training covariance matrix is $10^{-10}$; if the Cholesky decomposition fails, the regularization term is increased by a factor of $10$ until it either succeeds or reaches the value of $10^{5}$, at which point the trial is considered to have failed.

    \subsection{Monte Carlo Evaluation of Intractable GP Kernels}\label{app:mcgp}
        Relevant Figures: \ref{fig:mcgp_to_gp_cnn_gp_pool}, \ref{fig:mcgp_to_gp_other}.
        
        We use the same setup as in \sref{app:sgd_to_gp}, but training and validation sets of sizes $2000$ and $4000$ respectively.
        
        For MC-GPs we consider the number of channels $n$ (width in FCN setting) and number of NN instantiations $M$ to accept values of $2^k$ for $k$ from $0$ to $10$.
        
        Kernel distance is computed as:
        \begin{equation}\label{eq:kernel_distance}
            \frac{\left\|\Ktopinf-\K_{n, M}\right\|_{\textrm{F}}^2}{\left\|\Ktopinf\right\|_{\textrm{F}}^2},
        \end{equation}
        where $\Ktopinf$ is substituted with $\K_{2^{10}, 2^{10}}$ for the CNN-GP pooling case (due to impracticality of computing the exact $\Ktopinf^{\textrm{pool}}$). GPs are regularized in the same fashion as in \sref{app:sgd_to_gp}, but the regularization factor starts at $10^{-4}$ and ends at $10^{10}$ and is multiplied by the mean of the training covariance diagonal.

    \subsection{Transforming a GP over spatial locations into a GP over classes}\label{app:collapsing}
        
        Relevant Figure: \ref{fig:readout}.
    
        We use the same setup as in \sref{app:mcgp}, but rescale the input images to size of $31\times 31$, so that at depth $15$ the spatial dimension collapses to a $1\times 1$ patch if no padding is used (hence the curve of the CNN-GP without padding halting at that depth).
        
        For MC-CNN-GP with pooling, we use samples of networks with $n = 16$ filters. Due to computational complexity we only consider depths up to $31$ for this architecture. The number of samples $M$ was selected independently for each depth among $\left\{2^k\right\}$ for $k$ from $0$ to $15$ to maximize the validation accuracy on a separate $500$-points validation set. This allowed us to avoid the poor conditioning of the kernel. GPs are regularized in the same fashion as in \sref{app:sgd_to_gp}, but for MLP-GP the multiplicative factor starts at $10^{-4}$ and ends at $10^{10}$.

    \subsection{Relationship to Deep Signal Propagation}\label{app:deep_info}
        
        Relevant Figure: \ref{tab:deep_info}.
    
        We use a training and validation subsets of CIFAR10 of sizes $500$ and $1000$ respectively.
        
        We use the $\erf$ nonlinearity. For CNN-GP, images are zero-padded ($\operatorname{same}$ padding) to maintain the spatial shape of the activations as they are propagated through the network.
        
        Weight and bias variances (horizontal axis $\sigma_\omega^2$ and vertical axis $\sigma_b^2$ respectively) are sampled from a uniform grid of size $50\times 50$ on the range $\left[0.1, 5\right] \times \left[0, 2\right]$ including the endpoints.
        
        All computations are done with $64$-bit precision. GPs are regularized in the same fashion as in \sref{app:sgd_to_gp}, but the regularization factor is multiplied by the mean of the training covariance diagonal. If the experiment fails due to numerical reasons, $0.1$ (random chance) validation accuracy is reported.

    \subsection{CNN-GP on full datasets}\label{app:results}
        
        Relevant Figures \ref{fig:sgd_vs_gp} (a, c), \ref{tab:architectures_detail} and Table \ref{tab:results}.
    
        We use full training, validation, and test sets of sizes $50000$, $10000$, and $10000$ respectively for MNIST \citep{lecun1998gradient} and Fashion-MNIST \citep{xiao2017/online}, $45000$, $5000$, and $10000$ for CIFAR10 \citep{krizhevsky2009learning}. We use validation accuracy to select the best configuration for each model (we do not retrain on validation sets).
        
        GPs are computed with $64$-bit precision, and NNs are trained with $32$-bit precision. GPs are regularized in the same fashion as in \sref{app:deep_info}.
        
        Zero-padding ($\operatorname{same}$) is used.
        
        The following parameters are considered:
        \begin{enumerate}
            \item Architecture: CNN or FCN.
            \item Nonlinearity: $\erf$ or $\relu$.
            \item Depth: $2^k$ for $k$ from $0$ to $4$ (and up to $2^{5}$ for MNIST and Fashion-MNIST datasets).
            \item Weight and bias variances. For $\erf$: $q^*$ from $\left\{0.1, 1, 2, \dots, 8\right\}$. For $\relu$: a fixed weight variance $\sigma_{\omega}^2 = 2 + 4e^{-16}$ and bias variance $\sigma_b^2$ from $\left\{0.1, 1, 2, \dots, 8\right\}$.
        \end{enumerate}
        
        Only one $\relu$ ($\sigma_{\omega}^2 = 2$, $\sigma_{b}^2 = 0.01$), 8-layer CNN-GP with pooling model was evaluated using the Neural Tangents library \citep{novak2020neural} after the publication, and added to Figure \ref{tab:architectures_detail} and Table \ref{tab:architectures} for completeness. The diagonal regularization factor $10^{-k}$ was selected on the validation set among $\{0, \dots, 19\}$ (resulting in $k = 3$).
        
        On CIFAR10, we additionally train NNs for $2^{18}$ gradient updates with a batch size of $128$ with corresponding parameters in addition to\footnote{Due to time and compute limitations certain large configurations could not be evaluated. We believe this did not impact the results of this work in a qualitative way.}
        \begin{enumerate}
            \item Pooling: no pooling or a single global average pooling (averaging over spatial dimensions) before the final FC layer (only for CNNs).
            
            \item Number of channels or width: $2^k$ for $k$ from $1$ to $9$ (and up to $2^{10}$ for CNNs with pooling in Figure \ref{fig:sgd_vs_gp}, a).
            
            \item Learning rate: $10^{-k}\times 2^{16} / \left(\textrm{width}\times q^*\right)$ for $k$ from $5$ to $9$, where width is substituted with the number of channels for CNNs and $q^*$ is substituted with $\sigma_b^2$ for $\relu$ networks. ``Small learning rate'' in Table \ref{tab:results} refers to $k\in \left\{8, 9\right\}$.
            
            \item Weight decay: $0$ and $10^{-k}$ for $k$ from $0$ to $5$.
        \end{enumerate}
        
        For NNs, all models are filtered to only $100\%$-accurate ones on the training set (expect for values in parentheses in Table \ref{tab:results}). The reported values are then reported for models that achieve the best validation accuracy.

    \subsection{Model comparison on CIFAR10}\label{app:architectures}
        
        Relevant Figure \ref{tab:architectures}.
        
        We use the complete CIFAR10 dataset as described in \sref{app:results} and consider 8-layer $\relu$ models with weight and bias variances of $\sigma_\omega^2 = 2$ and $\sigma_b^2 = 0.01$. The number of channels / width is set to $2^5$, $2^{10}$ and $2^{12}$ for LCN, CNN, and FCN respectively.
        
        GPs are computed with $64$-bit precision, and NNs are trained with $32$-bit precision.
        
        No padding ($\operatorname{valid}$) is used.
        
        NN training proceeds for $2^{18}$ gradient updates with batch size $64$, but aborts if no progress on training loss is observed for the last $10$ epochs. If the training loss does not reduce by at least $10^{-4}$ for $2$ epochs, the learning rate is divided by $10$.
        
        Values for NNs are reported for the best validation accuracy over different learning rates ($10^{-k}$ for $k$ from $2$ to $12$) and weight decay values ($0$ and $10^{-k}$ for $k$ from $2$ to $7$). For GPs, validation accuracy is maximized over initial diagonal regularization terms applied to the training convariance matrix: $10^{-k}\times \textrm{[mean of the diagonal]}$ for $k$ among $2$, $4$ and $9$ (if the cholesky decompisition fails, the regularization term is increased by a factor of $10$ until it succeeds or $k$ reaches the value of $10$).
     
        CNN-GP with pooling performance was computed using the Neural Tangents library \citep{novak2020neural}, after the publication, and added to the figure for completeness. The diagonal regularization factor $10^{-k}$ was selected on the validation set among $\{0, \dots, 19\}$ (resulting in $k = 5$).
        
\end{document}